\def\therule{\makebox[\algorithmicindent][l]{\hspace*{.5em}\vrule height .75\baselineskip depth .25\baselineskip}}%
\newtoks\therules
\def\appendto#1#2{\expandafter#1\expandafter{\the#1#2}}
\def\gobblefirst#1{
  #1\expandafter\expandafter\expandafter{\expandafter\@gobble\the#1}}%
\def\LState{\State\unskip\the\therules}
\def\pushindent{\appendto\therules\therule}%
\def\popindent{\gobblefirst\therules}%
\def\printindent{\unskip\the\therules}%
\def\printandpush{\printindent\pushindent}%
\def\popandprint{\popindent\printindent}%
\newtheorem{theorem}{Theorem}
\newtheorem{definition}[theorem]{Definition}
\newtheorem{proposition}[theorem]{Proposition}
\newtheorem{corollary}[theorem]{Corollary}
\newtheorem{lemma}[theorem]{Lemma}
\newcommand{\printfnsymbol}[1]{%
  \textsuperscript{\@fnsymbol{#1}}%
}
\newcommand*{\inlineequation}[2][]{%
  \begingroup
    \refstepcounter{equation}%
    \ifx\\#1\\%
    \else
      \label{#1}%
    \fi
    \relpenalty=10000 %
    \binoppenalty=10000 %
    \ensuremath{%
      #2%
    }%
    ~\@eqnnum
  \endgroup
}
\newcommand\blfootnote[1]{%
  \begingroup
  \renewcommand\thefootnote{}\footnote{#1}%
  \addtocounter{footnote}{-1}%
  \endgroup
}
\newcommand{\fullname}{UCB with Decentralized Dominant-arm Deletion}
\newcommand{\name}{UCB-D3}
\begin{document}

\twocolumn[
\aistatstitle{Dominate or Delete: Decentralized Competing Bandits in Serial Dictatorship}

\aistatsauthor{ Abishek Sankararaman$^*$ \And Soumya Basu$^*$ \And  Karthik Abinav Sankararaman }

\aistatsaddress{ AWS AI, Palo Alto, USA\footnotemark[1] \And  Google, Mountain View, USA\footnotemark[2] \And Facebook, Menlo Park, USA } ]

\blfootnote{\textsuperscript{*}Equal Contribution}
\footnotetext[1]{Part of work done while affiliated with UC Berkeley}
\footnotetext[2]{Part of work done while affiliated with UT Austin}
\stepcounter{footnote}
\stepcounter{footnote}
\begin{abstract}
 Online learning in a two-sided matching market, with demand side agents continuously competing to be matched with supply side (arms), abstracts the complex interactions under partial information on matching platforms (e.g. UpWork, TaskRabbit). We study the decentralized serial dictatorship setting, a two-sided matching market where the demand side agents have unknown and heterogeneous valuation over the supply side (arms), while the arms have known uniform preference over the demand side (agents). We design the first decentralized algorithm -- UCB with Decentralized Dominant-arm  Deletion (UCB-D3), for the agents, that does not require any knowledge of reward gaps or time horizon. UCB-D3 works in phases, where in each phase, agents delete \emph{dominated arms} -- the arms preferred by higher ranked agents, and play only from the non-dominated arms according to the UCB. At the end of the phase, agents broadcast in a decentralized fashion, their estimated preferred arms through {\em pure exploitation}. We prove both, a new regret lower bound for the decentralized serial dictatorship model, and that UCB-D3 is order optimal.
\end{abstract}

\section{INTRODUCTION}
\label{sec:intro}
Online matching markets (e.g. UpWork and Mechanical Turk) are economic platforms that connect demand side, (e.g. businesses in Upwork or Mechanical Turk), to the supply side (e.g. freelancers in Upwork, or crowdworkers in Mechanical Turk). These platforms enable the demand side agents (a.k.a. agents) to make repeated decisions to match with the supply side agent (a.k.a. arms) of their preference. On these platforms, supply side agents  when faced with multiple offers, chooses the demand side agent of her choice. With resource limited supply side, the agents thus compete for arms, while navigating uncertainty on the quality of each arm. Uncertainty in matching markets have recently been studied in various disciplines, \emph{e.g.,}  \cite{matching_empirical} uses an empirical approach and  \cite{match_learn} uses an economic approach, and  \cite{matching_bandits} formalizes the problem from a learning perspective. In particular,  Liu. et.al~\cite{matching_bandits} introduced the centralized multi-agent matching bandit model, where at each time, every agent (demand side) shares her learned preference {\em truthfully} over {\em all the arms} (supply side agents) with a central arbiter, who then makes an allocation.

In this paper, we initiate the systematic study of the {\em decentralized paradigm} of this model, where no central arbiter exists, the knowledge of gaps and time horizon is unknown, and each agent relies only on her own observations to make arm choices. The decentralized paradigm, without central control, is of utmost importance in multi-agent setting, as sharing observation with a central arbiter is prone to privacy breach, lacks transparency of the arbiter, and  is susceptible to untruthful inputs from agents \cite{law-spociety,experiment-truth}. Further, in such systems any global information (e.g. minimum gap between arm rewards for agents, time horizon) is unavailable a priori.\footnote{In~\cite{matching_bandits-old} the decentralized setting was briefly mentioned for the matching bandit model, but no solution was proposed when global information is unavailable.}

We focus on the \emph{Serial Dictatorship} model (made precise in the sequel), a  subclass of the matching bandits model which is well studied in economics, game-theory and matching markets~\cite{abdulkadirouglu1998random, bogomolnaia2001new, bade2020random, aziz2013computational}. In particular, this setting has attracted a lot of interest, both in theory and practice since this is the only known mechanism that is both truthful (to elicit incentives) and Pareto optimal (to make allocations) in a two sided market ~\cite{zhou1990conjecture}.

\textbf{Model Overview.} A serial dictatorship model consists of $N$ agents and $K \geq N$ arms. 
For each agent the arms (supply side agents) are ranked {\em heterogeneously}, in increasing order of arm-means which is different for each agent-arm pair. The agents are ranked {\em homogeneously} across all arms (uniform valuation). 
Agents do not know the arm-means (and thus their preference) and have to learn them over time.
All agents choose an arm each simultaneously in each round (a decentralized system).
Thus, in every round, each arm is chosen by any number ($0$ to $N$) of agents.
An arm matches only with the highest ranked proposing agent (if any), while blocking the remaining agents (if any). Therefore, each agent is  either \emph{matched} to her arm of choice and receives a corresponding stochastic reward, or is \emph{blocked} by her arm of choice (which they are notified of) and receive a deterministic $0$ reward.  

\textbf{Objective.} The serial dictatorship system is said to be in {\em equilibrium}, when the matching between arms and the agents is the  {\em unique stable matching} \cite{uniq_matching}. 
Here, a matching is stable, if there exists no agent-arm pair, who would mutually prefer each other as opposed to their current partners in the matching \cite{gale_shapley}. 
The {\em regret} of an algorithm is defined as the expected difference of cumulative reward attained by the algorithm, and the cumulative reward in equilibrium when agents and arms always match according to the {unique stable matching}. The objective is to design a decentralized algorithm for the agents that has {\em regret} sub-linear (preferably logarithmic)  in time horizon.

\textbf{Applications.}
Online learning in a serial dictatorship setting finds its application in pareto-optimal allocation of resources among agents, where the objective is to ensure that the satisfaction of an agent cannot be increased without decreasing satisfaction of another agent. A canonical use case is that of scheduling jobs to servers in an online marketplace, \emph{e.g.,} scheduling in datacenters (AWS, Azure) \cite{dickerson2019online,even2009scheduling,brucker1999scheduling}, crowdsourcing platforms (Upwork and TaskRabbit) \cite{massoulie2016capacity, basu2019pareto}, question answering platforms (Quora, Stack Overflow) \cite{shah2020adaptive}. In such marketplaces, the rewards are often stochastic and only observable upon completion. Moreover, these platforms solve thousands of such instances every hour through the numerous repeated interactions and thus would like to optimize the cumulative reward over a long time horizon. Finally, for scalability, truthfulness, and privacy-concerns  decentralized solutions are preferred, where jobs and servers interact without a central authority. 

As a concrete example, let us consider a matching platform with $N$ workers and $K$ tasks (or more specifically task types), where workers need to be matched with realized tasks (drawn from the task types). This can be captured by the repeated matching model proposed in this paper. Each worker requests their top-choice task (\emph{e.g.,} based on the monetary return for the task), and using a serial dictatorship algorithm (\emph{e.g.,} a universal rating of workers) the allocation is made to the workers. In particular, among all the offers received for a task the highest rated worker is assigned the task. The reward of a worker and task pair can only be truly realized after making the assignment. In this platform, each worker aims for maximizing their own reward.

\textbf{Why are Decentralized Algorithms hard?} One natural attempt to design a decentralized algorithm for this problem is for each agent to play the UCB1 algorithm \cite{auer-ucb} independently (without coordination). When an agent gets blocked, it treats the realized reward as $0$. However, this approach fails for many canonical examples. Consider a setting with two agents and two arms. Let the mean rewards for agent $1$ be $1$ and $1/2$ on arm 1 and arm 2, respectively. For agent $2$, let the mean rewards be $1$ and $\epsilon$ for arms 1 and 2, respectively. Arm 1 and 2 both prefers agent 1 over agent 2. Note that the reward gap for agent $1$ is $\Delta_1 = 1/2$ and agent $2$ is $\Delta_2 = 1-\epsilon$. In this case, agent 1 plays arm 1 for $(t-\mathcal{O}(\log (t)))$ time in $t$ steps, and thus agent 2 perceives the arm 1 as having mean $0$. Therefore, an application of the instance dependent regret lower bound for bandits (Theorem 16.2 in~\cite{book1}) shows the regret for such naive decentralized UCB1 algorithm is $\Omega\left(\tfrac{\log (T)}{\epsilon}\right)$. For $\epsilon \approx 0$, this bound is arbitrarily worse than the $\mathcal{O}(\log(T))$ regret upper bound of our proposed algorithm (see, Section~\ref{sec:mainresult}). The key challenge in this problem is that the agents need to have a \emph{coordinated back-off mechanism without explicit communication}. The main contribution in this paper is a novel procedure that provably achieves optimal regret (order-wise) in a $N$ agent, $K$ arm system for $K \mathtt{\geq} N \mathtt{\geq} 1$.


\textbf{1. Algorithmic Contributions.} 

\textbf{(i) Non-Monotone Arm-elimination}: We introduce the idea of non-monotone phase based arm-elimination in decentralized matching bandits problem. Specifically, in each phase, each agent deletes the arms that are estimated to be the stable match partner of higher ranked agents, a.k.a. {\em dominated arms}. The intuition is as follows -- under an algorithm with low regret, agents will play its stable match partner arm, for a large fraction of time. Thus, any agent through this deletion avoids collisions, at an arm that is the true stable match partner of a higher ranked arm. Furthermore, unlike all classical algorithms such as Successive Rejects~\cite{successive_rejects}, the arm-deletion is \emph{non-monotone}. Namely, the set of deleted arm in the past is not a subset of the set of deleted arms in the future. 
We show if an agent incorrectly deletes a non-dominated arm, it will, eventually with probability $1$ under our algorithm, reverse the deletion and play this arm. Thus, non-monotonicity ensures linear regret is avoided.

\textbf{(ii) Decentralized Dominated Arm Detection}: 
We introduce a decentralized communication scheme that ensures in finite time each agent with high probability deletes all its dominated arms (if any) in a serial dictatorship. After deleting arms at the beginning of a phase, agents play according to the standard UCB in a phase by ignoring collisions. At the end of the phase, every agent sets the arm to which it was matched the most in the current phase, as its estimated stable match partner. Agents then in a decentralized fashion, communicate this through a simple pure exploitation strategy. The arms communicated will be deleted by agents in the next phase.

\textbf{2. Technical Contributions.}

\textbf{(i) Order Optimal Regret Bounds}: We prove in Section~\ref{sec:mainresult},  that, the regret of any agent ranked $j\geq 2$ over a time horizon $T$,  scales as $O \left( \log(T)/\Delta^2 \right)$, where $\Delta$ is the smallest arm-gap across all agents. A precise regret bound involving all arm means is given in Theorem \ref{thm:main_thm_algo}. In the decentralized setting, the dependence of our regret upper bound on $\Delta$ and $T$  matches with the centralized setting as given in Theorem 1 of \cite{matching_bandits}. Our results rely on a new {\em inductive dominated-arm locking} technique mentioned above. 

We further show in Section~\ref{sec:lower_bound}, through a matching lower bound, that a regret scaling $\Omega \left( \log(T)/\Delta^2 \right)$ is unavoidable. This is the first $\Omega \left( \log(T)/\Delta^2 \right)$  regret lower bound in decentralized multi-agent bandits, to the best of our knowledge. In our decentralized and heterogeneous setting, agents can not collaboratively learn the arm-means of other agents. This necessitates $\Omega(\log(T)/\Delta^2)$ exploration for any agent with a sub-optimality gap $\Delta$, which in turn may lead to  $\Omega(\log(T)/\Delta^2)$ collisions with the optimal arm for some lower ranked agent. Formalizing this intuition, we prove our instance dependent regret lower bound, and establish an order optimal regret for our algorithm.

\textbf{(ii) Inductive Dominated-arm Locking}:  We develop a new analysis idea for the decentralized serial dictatorship, that relies on inductive blindness of the agents (from highest to lowest rank). By inductive blindness we mean, for any $i\geq 1$, the agent ranked $i$ is unaffected by any other agent ranked $i+1$ or higher. Such blindness sets us apart from the literature of decentralized bandits~\cite{colab1,proutiere,sic_mmab,musical_chairs} where all agents can affect any other agents through collisions. In our algorithm, the errors in an agent's estimation of its stable match partner, \emph{leaks to the other agents through the communication block}, when an agent signals its estimated stable matched arm. As agents delete their perceived dominated arms, an error by the higher ranked agent, gets amplified as it goes downstream. Thus any error in estimating a stable matched arm can create a domino effect that propagates across agents and phases.
We prove that, if agents use the most matched arm in a phase as the estimate for its stable match partner, with probability $1$, all such erroneous cascades die, and every agent eventually, in all phases, correctly identify its stable match partner arms. Our technical contribution lies in identifying an inductive structure in the algorithm, where, with probability $1$, agents (in the order of their rank), stop spreading the wrong arm (see also Fig. \ref{fig:heatmap}).


\section{PROBLEM SETTING}
\label{sec:problem_setting}
We consider $N$ agents, and $K\geq N$ arms. The agents are ranked where the rank of any agent $j\in[N]$ is $j$ (which is unknown to the agents).
This is without loss of generality, as we can relabel the agents to obtain this.
At each time, all agents choose one of the $K$ arms simultaneously, to play and collect a reward.
An agent is \emph{matched} to the arm of its choice in a given round, if and only if it is the highest ranked agent playing that arm at that time.\footnote{We use the notion that rank $l'$ is higher than rank $l$ if and only if $l<l'$, throughout the paper.} Otherwise, it is \emph{blocked}.
If agent $j \in [N]$ is matched with arm $k \in [K]$ at any time, then, agent $j$ is receives a stochastic $[0,1]$ valued reward with mean $\mu_{jk} \in (0,1)$ independent of all other rewards.\footnote{This is done for convenience. Our analysis can be easily adapted to any sub-gaussian reward.} If an agent is blocked, then it is notified and receives a deterministic reward of $0$. 

The arm means $(\mu_{jk})_{j\in [N],k\in[K]}$ are heterogeneous across the agents, and are not known to the agents apriori. Furthermore, for every agent $j$, the set of $K$ arm-means $(\mu_{jk})_{k\in[K]}$ are all distinct.
In the sequel, we denote by $I^{(j)}(t) \in [K]$, to be the arm played by agent $j$ in round $t$.
For each arm $k \in [K]$ and time $t\geq 1$, denote by $M_k(t) = \min \{j : I^{(j)}(t) = k\}$, to be the agent with which it is matched, where the minimum of an empty set is defined to be infinity.

\textbf{Decentralized algorithms.}
We consider decentralized algorithms, namely, at each time $t$, the choice of arm to choose by any agent, must only depend on the events (past arm choices, rewards and blocking) observed by the agent. At the beginning, the agents are allowed to form a protocol to follow while playing.

\textbf{Unique stable matching and regret.}
In our setting, each agent \emph{prefers} arms in the increasing order of arm-means, and each arm \emph{prefers} agents according to the uniform agent ranking. 
A matching of agents to arms is stable, if there is no agent-arm pair, unmatched in the current matching, that mutually prefer each other than their respective current matches \cite{gale_shapley}. In our system, as the agents have uniform rank across all arms there exists a unique \emph{stable matching} (which is not true when agents are ranked non-uniformly across arms.) Indeed, in any stable match agent $j$ must match with it's most preferred arm which is not matched with any agent with rank $(j-1)$ or higher.\footnote{Agents ranked $1$ through $j-1$}

We compare the performance of any decentralized online learning strategy to an oracle, in which the agents and arms are matched according to this unique \emph{stable matching}. Stable matching is the appropriate benchmark, as it captures the equilibrium of the allocation of arms to agents, when agents are myopic and want to maximize individual rewards as in our setup. (See also in~\cite{matching_bandits}). Agent ranked $1$ prefers the arm $k^{(1)}_* = \arg\max_{k \in [K]}\mu_{1k}$ the most, therefore $(1, k^*_1)$ forms a stable match as arm $k^{(1)}_*$ also prefers agent $1$ the most. Now, for any agent ranked $2 \leq j \leq N$, the stable match denoted by $k^{(j)}_*$, is defined inductively as $k^{(j)}_*= \arg\max_{k \in [K]\setminus \{k^{(1)}_*,\cdots,k^{(j-1)}_* \}} \mu_{jk}$. 
In words, the stable match for agent ranked $j$ is the best arm from among all arms that do not form stable match to agents ranked $1$ through to $j-1$.
Recall that $I^{(j)}(t) \in [K]$ denotes the arm chosen by agent $j$ in time $t$.
The regret of any agent $j \in [N]$, after $T$ time steps is  
${R}_T^{(j)} = \sum_{t=1}^T  \mathbb{E} [\mu_{jk^{(j)}_*} - \mu_{j I^{(j)}(t)} \mathbf{1}_{{M_{I^{(j)}(t)}(t)} = j}].$


\section{UCBD3 Algorithm}
\label{sec:algorithm}

\textbf{Algorithm Overview.}
 \name; proceeds in phases, where, in each phase, each agent only plays from a subset of arms, we call \emph{active arms}. The set of active arms are fixed for an agent in a particular phase. However, the active arm sets are different across agents, and are also \emph{non-monotone}, for the same agent across time.
Each phase $i \in \mathbb{N}$ is split into two blocks, {\em (i)} a regret minimization block lasting $2^{i-1}$ rounds and a {\em (ii)} communication block lasting a constant $(N-1)K$ rounds. During regret minimization, all agents play from among their active set using the standard UCB algorithm \cite{auer-ucb} and ignoring collisions.
During the communication block agents communicate their estimated stable match partners through collisions. 
An agents estimate of its stable match partner arm is the \emph{one with which it was matched the most number of times in a phase}.
The active arms for an agent in the next phase is all the arms except those that were estimated to be stable-match partner to other higher ranked agents \emph{in that phase}.
\name\; is \emph{non-monotone}; even if an agent deletes an arm in a given phase, it can potentially be active in a future phase, if in the future phase, no higher ranked agent estimates this arm to be their stable match partner.

\textbf{Notation.} 
For an agent with rank $j \in [N]$, arm $k \in [K]$ and time slot $t \in \mathbb{N}$, denote by $N_k^{(j)}(t) \in \mathbb{N} \cup \{0\}$, to be the number of times agent with rank $j$ was matched to arm $k$, upto and including time $t$. If $N_k^{(j)}(t) > 0$, denote by $\widehat{\mu}_{k}^{(j)}(t)$ to be the empirical observed mean of arm $k$ by agent $j$ using all the samples upto and including time $t$.

The first $(N-1)$ time slots are used to estimate agent's ranks and subsequently \name \ proceeds in phases. 
From time slots $N$ on wards, the algorithm proceeds in phases, numbered $i \in \mathbb{N}$, with an agents phase being non-decreasing with time.
The algorithm is synchronous, i.e., at each time $t$, all agents are in the same phase.
The pseudocode is given in Algorithm \ref{algo:main_algo} and also described below.



$\bullet$ \textbf{Rank estimation.} Rank estimation occurs for the first $(N-1)$ time slots as follows. In the first time slot, all agents will pull arm $1$. In subsequent time slots $t \in [2,N-1]$, all those agents that have never been matched in time slots $[1,t-1]$ will pull arm indexed $t$. 
Agents that were matched to some $t' \in [1,t-1]$, will play arm $t'$ (the index of the first match) at time $t$.
The estimated rank of an agent is the first time slot when it was matched. If an agent is unmatched in the first $N-1$ time slots, its rank is $N$.
One can observe using an inductive reasoning that, the estimated rank of an agent is equal to its true rank. Thus, all agents are aware of their rank after this phase. The pseudo-code {\ttfamily RANK-ESTIMATION}(\,) is given in Algorithm \ref{algo:rank-estimation} in Appendix \ref{appendix_algorithms}.



$\bullet$ \textbf{Phase $\mathbf{i}$. }  We now describe the algorithm by fixing a particular phase $i \in \mathbb{N}$ and a an agent with rank $j$. The phase $i$ starts at time $S_i \coloneqq (2^{i-1}+(i-1)(N-1)K+N-1)$. It is divided into two blocks: the {\em first block}  $[S_i,S_i + 2^{i-1} - 1]$ is the regret-minimization block lasting $2^{i-1}$ time-slots and the {\em second block} is $[S_i + 2^{i-1} - 1, S_{i+1}-1]$ is the communication block lasting $(N-1)K$ time slots.
At the beginning of phase $i \in \mathbb{N}$, associated with rank $j \in [N]$ is an active set of arms $\mathcal{A}_i^{(j)} \subseteq[K]$ with cardinality $|\mathcal{A}_i^{(j)}| \geq K+1-j$. 
(Observe that $\mathcal{A}_i^{(1)} = [K]$).
In the beginning of phase $1$, we initialize $\mathcal{A}_1^{(j)} = [K]$ for all $j \in[N]$.

$\diamond$ \textbf{Regret-Minimization (RM) block.} In the RM block of phase $i \in \mathbb{N}$, an agent with rank $j$, will play from among the arms in $\mathcal{A}_i^{(j)}$ according to the standard UCB-$\alpha$ rule \cite{auer-ucb}, where $\alpha \geq 2$ is a hyper-parameter. Thus, the arm played (but not necessarily matched) at time $t$ in the RM block of phase $i$ is $I^{(j)}(t) \in \arg\max_{k \in \mathcal{A}_i^{(j)}} \left( \widehat{\mu}_{k}^{(j)}(t-1) + \sqrt{\frac{2\alpha \ln(t)}{N_{k}^{(j)}(t)}} \right).$ Ties are broken arbitrarily.

At the end of the RM block of phase $i$,  agent with rank $j$, creates an estimate of its best arm denoted by $\mathcal{O}_i^{(j)} \in \mathcal{A}_i^{(j)}$, as the arm that it matched with the most number of times in the RM block of phase $i$. Formally,
$ \mathcal{O}_i^{(j)} \in \arg\max_{k \in \mathcal{A}_i^{(j)}} (N_k^{(j)}[i] - N_k^{(j)}[i-1])$, where, for any agent $j\in[N]$, phase $i \in \mathbb{N}$ and arm $k\in[K]$, $N_k^{(j)}[i]$ is the number of times arm $k$ was matched to agent $j$ in all the RM blocks up to and including phase $i$, with the convention $N_k^{(j)}[-1]=0$. 

$\diamond$ \textbf{Communication block.} In the communication block of phase $i \in \mathbb{N}$, agent with rank $j\in[N]$, communicates its estimated best arm $\mathcal{O}_i^{(j)}$ to every other agent with ranks $(j+1)$ through $N$.
This is achieved by arranging arm collisions in a specific way.
The communication block, which is of duration $(N-1)K$ time slots, is further sub-divided into $N-1$ sub-blocks, each of $K$ contiguous time slots. 
In the sub-block $l \in [N-1]$, { agent with rank $l+1$ will pull all the $K$ arms once each in a round robin fashion. All other agents $l^{'} \neq l$ will play their estimated match, i.e., agent $l^{'} \neq l$, will play arm $\mathcal{O}_i^{(l')}$ in all the $K$ time slots, of the $l$th communication sub-block. Observe that agent ranked $1$ will play its estimated best arm $\mathcal{O}_i^{(1)}$ in all the $(N-1)K$ communication blocks of phase $i$.}


For agent ranked $j \geq 2$, denote by  $\mathcal{D}^{(j)}_i  \subseteq [K]$, the set of arms with which agent $j$ collides in the $j-1$th sub-block of the communication block in phase $i$.
Observe from the communication protocol that $\mathcal{D}^{(j)}_i := \{ \mathcal{O}_i^{(1)},\cdots,\mathcal{O}_i^{(j-1)}\}$. The pseudo-code of the communication block, namely {\ttfamily DOMINATED-ARM-DETECTION}(), is provided in Algorithm \ref{algo:algo_comm} in Appendix~\ref{appendix_algorithms}.

$\diamond$ \textbf{Update active arms.} At the end of the communication block of a phase $i \in \mathbb{N}$, every agent $j\geq 2$ will update its active set of arms for phase $i+1$, by deleting the  arms $\mathcal{D}^{(j)}_i$, i.e. $\mathcal{A}_{i+1}^{(j)} := [K]\setminus \mathcal{D}^{(j)}_i$. Agent ranked $1$ will have all the $K$ arms active, ($\mathcal{A}_{i+1}^{(1)} = [K]$). 

\begin{algorithm*}
\caption{\fullname\; (\name) \; (at Agent $j \in [N]$)}
\begin{algorithmic}[1]
        \LState  $\text{{\ttfamily Rank}}^{(j)}$ $\gets$ {\ttfamily RANK-ESTIMATION}()
        \Comment{First $N-1$ arm pulls to estimate rank using Algorithm \ref{algo:rank-estimation}}

		\LState $\mathcal{C}_0 = \emptyset$ \Comment{Set of arms blocked in the beginning of phase $1$}
		\For {$i \in \{1,2,\cdots\}$} \Comment{For each Phase $i$}
		\LState $\mathcal{A}^{(j)}_i \gets [K] \setminus \mathcal{C}_{i-1}$ \Comment{Set of Active arms in phase $i$}
	
		\For {$S_i \leq t \leq {S}_i+2^{i-1}$} \Comment{The first $2^{i-1}$ times (RM block) of phase $i$ }
		\LState Play an arm 
		 $
		    I^{(j)}(t) \in \arg\max_{k \in \mathcal{A}_i^{(j)}} \left( \widehat{\mu}_k^{(j)}(t-1) + \sqrt{\frac{2\alpha \log(t)}{N_k^{(j)}(t-1)}} \right)
		$
		\EndFor
		\LState $\mathcal{O}_i^{(j)} \gets $ the most matched arm from $\mathcal{A}_i^{(j)}$ in the first $2^{i-1}$ time slots of phase $i$.
		\LState $\mathcal{C}_i \gets$ {\ttfamily DOMINATED-ARM-DETECTION($\mathcal{O}_i^{(j)},i$, $\text{{\ttfamily Rank}}^{(j)}$)}\Comment{The last $(N-1)K$ times of phase $i$ given in Algorithm \ref{algo:algo_comm}}
		\EndFor
\end{algorithmic}
\label{algo:main_algo}
\end{algorithm*}

\section{REGRET UPPER BOUND}\label{sec:mainresult}
\textbf{Dominated arms and Gaps.} For each agent $j \in [N]$, denote by the set of dominated arms for agent $j$ as $\mathcal{D}_{*}^{(j)} := \{k_*^{(1)},\cdots,k_*^{(j-1)}\}$ and the non-dominated arm as  $\mathcal{A}^{(j)}_{*}:= [K] \setminus \{k_*^{(1)},\cdots,k_*^{(j-1)}\}$, where $k_*^{(j)}$ is the stable matched arm for agent $j$.\\
 For any agent $j \in [N]$ and arm $k \neq k^{(j)}_*$, denote by $\Delta^{(j)}_k  = (\mu_{jk_*^{(j)}} - \mu_{jk})$ the gap in reward of the $k$-th arm and the stable matching arm $k^{(j)}_*$. Thus, by definition of $k^{(j)}_*$ and by the uniqueness assumption of arm-means, for all non dominated arms $k \in \mathcal{A}^{(j)}_{*} \setminus \{k^{(j)}_*\}$, we have $\Delta^{(j)}_k > 0$. Let $ \Delta := \min_{j \in [N]} \min_{k \in \mathcal{A}_{*}^{(j+1)}} \Delta_k^{(j)}$ the smallest gap of non-dominated arms across all agents. Since arm-means are unique for all agents, $\Delta > 0$.




\begin{theorem}
Suppose every agent runs \name\; with $\alpha \geq 2$, then the regret of any agent with rank $j$ satisfies
\begin{multline}
    \textstyle\mathbb{E}[R_T^{(j)}] \leq  \underbrace{2(j-1)(\log_2(T)+1)}_{\text{Regret due to Communication}} +  \\ \underbrace{ \sum_{j^{'}=1}^{j-1} \sum_{k \in \mathcal{A}^{*}_j}\left( \frac{9\alpha \log(T)}{(\Delta^{(j^{'})}_k)^2} + 1 +\frac{178T^{-4\alpha}}{(\Delta^{(j^{'})}_k)^2} + T^{2-4\alpha}\right)\mu_{jk^{(j)}_*} }_{\text{Regret due to Collision}} \\ + \underbrace{ \sum_{k \in \mathcal{A}^{*}_j \setminus \{ k^{(j)}_* \}}\left( \frac{9\alpha \log(T)}{(\Delta^{(j)}_k)^2} + 1 +\frac{178T^{-4\alpha}}{(\Delta^{(j)}_k)^2} + T^{2-4\alpha}\right)\Delta^{(j)}_k }_{\text{Regret due to Sub-optimal arm pull}} \\
    +N + j(2^{i^*}+i^* + 4(NK)^2 + 3NK) + \frac{2NK}{T^{2\alpha-4}},
    \label{eqn:main_thm}
    \end{multline}
    where $i^* {:=} \min \left\{ i {\in} \mathbb{N}: \tfrac{20 NK \alpha i }{\Delta^2} \leq 2^{i-1} \right\}.$
    \label{thm:main_thm_algo}
\end{theorem}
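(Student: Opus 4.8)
The plan is to run an induction on the agent rank $j$, combined with a phase-wise decomposition of the regret; the backbone is a ``locking'' lemma stating that, on a high-probability concentration event, once the phase index is large enough every agent's most-matched estimate coincides with its true stable-match arm, so that active sets are correct and both collisions and sub-optimal pulls are governed by ordinary single-agent UCB bounds. First I would fix the event $\mathcal{G}$ on which, throughout the horizon, every empirical mean $\widehat\mu_k^{(j)}$ lies within its UCB confidence width of $\mu_{jk}$ whenever the arm has been sampled; a union bound over arms, agents and times (using $\alpha\ge 2$) shows $\mathbb{P}(\mathcal{G}^c)$ is polynomially small, which is the source of the $T^{-4\alpha}$ and $T^{2-4\alpha}$ terms, and all subsequent deterministic claims are made on $\mathcal{G}$.

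\textbf{Base case and locking lemma.} Since $\mathcal{A}_i^{(1)}=[K]$ in every phase, agent $1$ runs plain UCB-$\alpha$ and is never blocked; the standard argument bounds the number of sub-optimal pulls of arm $k$ by $9\alpha\log(T)/(\Delta_k^{(1)})^2+O(1)$. The definition of $i^*$ guarantees that for $i\ge i^*$ the RM-block length $2^{i-1}$ dominates this exploration budget, so $k_*^{(1)}$ is the most-matched arm and $\mathcal{O}_i^{(1)}=k_*^{(1)}$. For the inductive step, suppose that for every $j'<j$ the signal $\mathcal{O}_i^{(j')}=k_*^{(j')}$ is correct from some phase onward. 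Then the communication block delivers $\mathcal{D}_i^{(j)}=\{k_*^{(1)},\dots,k_*^{(j-1)}\}$, so the active set of agent $j$ in the next phase equals $\mathcal{A}_*^{(j)}$; on this correct set the dominating arms are absent, agent $j$ is never blocked on $\mathcal{A}_*^{(j)}$, UCB concentrates on $k_*^{(j)}$, and hence $\mathcal{O}^{(j)}=k_*^{(j)}$. The non-monotonicity of the deletion is what makes the induction self-correcting: an arm wrongly deleted in an early phase (because a higher agent mis-signalled) re-enters the active set as soon as that higher agent locks, so erroneous cascades terminate.

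\textbf{Regret decomposition after locking.} For phases past the locking point I would split the per-round regret $\mu_{jk_*^{(j)}}-\mu_{jI^{(j)}(t)}\mathbf{1}_{\{M=j\}}$ into three pieces. (i) \emph{Sub-optimal pulls}: on the correct active set agent $j$ plays a non-dominated $k\neq k_*^{(j)}$ at most $9\alpha\log(T)/(\Delta_k^{(j)})^2+O(1)$ times, each costing $\Delta_k^{(j)}$, giving the third bracket. (ii) \emph{Collisions}: agent $j$ is blocked on a non-dominated arm $k$ only when some higher agent $j'<j$ simultaneously plays $k$; since $j'$ plays its own stable arm $k_*^{(j')}$ almost always, such events are charged to the sub-optimal pulls of $j'$, bounded by $9\alpha\log(T)/(\Delta_k^{(j')})^2+O(1)$, each costing $\mu_{jk_*^{(j)}}$, which yields the collision bracket after summing over $j'<j$. (iii) \emph{Communication}: in each phase agent $j$ loses at most a constant amount of reward in the $\le j-1$ sub-blocks in which a higher agent's round-robin can block it, and there are $\le \log_2(T)+1$ phases, giving $2(j-1)(\log_2(T)+1)$.

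\textbf{Burn-in, assembly, and the crux.} The pre-locking phases are bounded crudely: the total length of all RM and communication blocks up to the locking phase of agent $j$ is a geometric sum dominated by a $2^{i^*}$-type term, and since the correct signal must cascade down the $j-1$ higher ranks the associated bookkeeping contributes the factor $j$ together with the $(NK)^2$ and $NK$ corrections, producing $N+j(2^{i^*}+i^*+4(NK)^2+3NK)$. Summing the three contributions over the $O(\log T)$ phases and adding the concentration-failure terms assembles \eqref{eqn:main_thm}. \textbf{The main obstacle} is the locking lemma itself: proving that the \emph{most-matched} (rather than empirical-best) estimator is robust to the blocking that agent $j$ suffers, and that the rank-ordered cascade of mis-signals provably dies in finite time, so that a single phase threshold $i^*$ (up to the rank-dependent shift) suffices uniformly across all agents; this is exactly the inductive dominated-arm locking, and controlling the interplay between the non-monotone deletions and the leakage of estimation errors through the communication block is where the delicate work lies.
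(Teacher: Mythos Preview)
Your high-level decomposition into communication, collision, and sub-optimal-pull regret is the same as the paper's, and the idea of an inductive locking argument on the rank is exactly right. The gap is in how you handle the burn-in term.

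You propose to fix a single global confidence event $\mathcal{G}$ and argue that, on $\mathcal{G}$, locking is \emph{deterministic}: agent $1$ locks at phase $i^*$, and then the correct signal cascades down the ranks. But the cascade is sequential in phases: agent $j$'s active set in phase $i$ is correct only if agents $1,\dots,j-1$ all signalled correctly in phase $i-1$, which in turn requires their active sets to have been correct in phase $i-1$, and so on. On $\mathcal{G}$ this forces agent $j$ to lock no earlier than phase $i^*+j-1$, so the burn-in is $S_{i^*+j-1}\approx 2^{\,i^*+j-1}$, i.e.\ \emph{exponential} in $j$, not the $j\cdot 2^{i^*}$ that the theorem states. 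Your sentence ``the associated bookkeeping contributes the factor $j$'' is exactly where the argument breaks: a $(j-1)$-step cascade through geometrically growing phases multiplies by $2^{j-1}$, not by $j$. The $(NK)^2$ correction also does not arise from a deterministic cascade; it is an artifact of an expectation calculation you have not set up.

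The paper does \emph{not} condition on a global good event. It defines, for each agent $j$, the random phase $\widetilde\tau^{(j)}$ after which all phases are simultaneously good for agents $1,\dots,j$, and bounds $\mathbb{E}[S_{\widetilde\tau^{(j)}}]$ directly. The key technical step is a per-phase tail bound $\mathbb{P}[\chi_i^{(j)}=0,\, i\ge\widetilde\tau^{(j-1)}]\le C(2/e^5)^i$, which decays faster than $2^i$ grows. This yields the recursion
\[
\mathbb{E}\bigl[2^{\widetilde\tau^{(j)}}\bigr]\le \mathbb{E}\bigl[2^{\widetilde\tau^{(j-1)}}\bigr]+2^{i^*}+O(NK),
\]
so each rank adds only an \emph{additive} $2^{i^*}+O(NK)$ to the exponential moment, giving $\mathbb{E}[2^{\widetilde\tau^{(j)}}]\le 1+j(2^{i^*}+O(NK))$ and hence the linear-in-$j$ burn-in. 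The point is that, in expectation, agent $j$ locks essentially \emph{in the same phase} as agent $j-1$ (not one phase later), because the failure probability is so small; this cannot be seen from a deterministic argument on a single high-probability event. A smaller inaccuracy: the $T^{-4\alpha}$ and $T^{2-4\alpha}$ terms in the collision and sub-optimal brackets do not come from $\mathbb{P}(\mathcal{G}^c)$ but from the Hoeffding tails inside the expected-pull-count bound applied \emph{after} the random time $S_{\widetilde\tau^{(j)}}$.
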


The following corollary highlights the dependencies of the regret on different model parameters.
\begin{corollary}
\label{cor:UBcor}
Suppose every agent runs \name\; with parameter $\alpha \geq  2$, then the regret of any agent with rank $j$ after $T$ time steps satisfies
\begin{align*}
\textstyle \mathbb{E}[R_T^{(j)}] \leq &9\alpha \log(T) \left(  \tfrac{(j-1)(K+1-j)}{\Delta^2} + \tfrac{[K-1-j]_{+}}{\Delta}\right) \\
&+ O \left( (NK)^2 + \tfrac{NK}{\Delta^2}\log \left(\tfrac{NK}{\Delta^2} \right) \right),
\end{align*}
where for any $x \in \mathbb{R}$, $[x]_+ := \max(x,0)$.
\end{corollary}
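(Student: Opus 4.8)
The plan is to obtain the corollary directly from Theorem~\ref{thm:main_thm_algo} by coarsening every instance-dependent quantity. Two substitutions do most of the work: first, every gap appearing in the bound satisfies $\Delta^{(j')}_k \geq \Delta$ and $\Delta^{(j)}_k \geq \Delta$ by the definition $\Delta = \min_{j}\min_{k \in \mathcal{A}^{(j+1)}_*}\Delta^{(j)}_k$, so $1/(\Delta^{(\cdot)}_k)^2 \le 1/\Delta^2$ and $1/\Delta^{(\cdot)}_k \le 1/\Delta$; second, $\mu_{jk^{(j)}_*} \le 1$ since all means lie in $(0,1)$. After applying these it remains to count the terms in each sum and to separate the $\log T$ contribution from everything that is either $T$-independent or vanishing in $T$.

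First I would extract the leading $\log T$ term. In the collision sum the double sum runs over $j' \in [j-1]$ and $k \in \mathcal{A}^*_j$, i.e.\ over $(j-1)\lvert\mathcal{A}^*_j\rvert = (j-1)(K+1-j)$ pairs (recall $\lvert\mathcal{A}^*_j\rvert = K+1-j$); replacing each gap by $\Delta$ and $\mu_{jk^{(j)}_*}$ by $1$ bounds its $\log T$ part by $\tfrac{9\alpha\log(T)(j-1)(K+1-j)}{\Delta^2}$. In the sub-optimal-pull sum the summand carries $\Delta^{(j)}_k/(\Delta^{(j)}_k)^2 = 1/\Delta^{(j)}_k \le 1/\Delta$, and the index set $\mathcal{A}^*_j \setminus \{k^{(j)}_*\}$ has at most $[K-j]_+$ elements, producing the $1/\Delta$ term stated in the corollary. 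Finally the communication term $2(j-1)(\log_2 T + 1)$ splits into a $\log T$ piece plus $2(j-1)$; since $\Delta \le 1$ and $K+1-j \ge 1$ give $9\alpha(K+1-j)/\Delta^2 \ge 18 > 2/\ln 2$, the $\log T$ piece is absorbed into the first leading term, while $2(j-1) = O(N)$ is folded into the lower-order remainder. This yields the first line of the corollary.

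Next I would collect the remainder into the $O(\cdot)$ term. The constant per-summand contributions (the $+1$ in each term, over at most $NK$ pairs) together with the stand-alone $+N$ give $O(NK)$. For $\alpha \ge 2$ and $T \ge 1$ the vanishing terms obey $\tfrac{178 T^{-4\alpha}}{(\Delta^{(\cdot)}_k)^2} \le 178/\Delta^2$ and $T^{2-4\alpha} \le 1$, so summed over $O(NK)$ pairs they contribute $O(NK/\Delta^2)$, and the trailing $\tfrac{2NK}{T^{2\alpha-4}} = O(NK)$; all of these are dominated by the $\tfrac{NK}{\Delta^2}\log(\tfrac{NK}{\Delta^2})$ term already present.

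The one step that is not pure bookkeeping — and the main obstacle — is bounding the block $j(2^{i^*} + i^* + 4(NK)^2 + 3NK)$, which hinges on controlling the implicitly defined $i^* = \min\{i : 20NK\alpha i/\Delta^2 \le 2^{i-1}\}$. By minimality, $i^*-1$ violates the defining inequality, so $2^{i^*-2} < 20NK\alpha(i^*-1)/\Delta^2$, whence $2^{i^*} < 80 NK\alpha i^*/\Delta^2$. Comparing the exponential $2^{i-1}$ against the linear-in-$i$ threshold gives $i^* = O(\log(NK\alpha/\Delta^2)) = O(\log(NK/\Delta^2))$ for $\alpha = O(1)$, hence $2^{i^*} = O\!\big(\tfrac{NK}{\Delta^2}\log\tfrac{NK}{\Delta^2}\big)$. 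With the prefactor $j \le N$ and remaining polynomial $N,K$ factors absorbed into the $O(\cdot)$, the whole block is $O\big((NK)^2 + \tfrac{NK}{\Delta^2}\log\tfrac{NK}{\Delta^2}\big)$, and combining it with the $O(NK)$ and $O(NK/\Delta^2)$ pieces above gives exactly the stated lower-order term. Everything besides the $2^{i^*}$ estimate is substitution and counting, so that implicit-threshold estimate is the only genuinely delicate point.
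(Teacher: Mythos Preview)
Your derivation is correct and is exactly the intended route: the paper states the corollary immediately after Theorem~\ref{thm:main_thm_algo} without a separate proof, so there is no alternative argument to compare against, and your identification of the $i^*/2^{i^*}$ estimate as the only nontrivial step is on the mark.

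Two small bookkeeping points are worth flagging, both reflecting looseness in the corollary's stated constants rather than flaws in your method. First, $|\mathcal{A}^*_j\setminus\{k^{(j)}_*\}|=K-j$, not $[K-1-j]_+$, so the sub-optimal-pull count you obtain is off by one from the printed bound; this is harmless at the level of order. Second, your sentence ``with the prefactor $j\le N$ \ldots absorbed into the $O(\cdot)$'' does not literally work: $j\cdot 4(NK)^2$ and $j\cdot 2^{i^*}$ give $O\big(j(NK)^2+j\tfrac{NK}{\Delta^2}\log\tfrac{NK}{\Delta^2}\big)$, which for $j$ up to $N$ carries an extra factor of $N$ relative to what the corollary writes. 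The $O(\cdot)$ in the corollary should therefore be read as suppressing a further factor of $j$ (or $N$); your argument is otherwise complete.
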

Corollary~\ref{cor:UBcor} implies that the best ranked agent (\emph{i.e.,} agent $1$) will experience no collisions, and thus its regret is at most $O \left( {\log(T)/\Delta} \right)$. For any other agent ranked $2 \leq j \leq N$, the regret is at most $O \left( {(K+1-j)(j-1)/\Delta^2} \log(T) \right) + O\left( {[K-1-j]_{+}/\Delta}{\log(T)} \right)$. 
{ However, this is only an \emph{upper bound} on regret. In particular, under certain circumstances the regret under our algorithm can be negative \textemdash if an agent is matched to an arm with higher mean compared to the mean of its stable match partner arm. Moreover, we observe in simulations (Figure \ref{fig:5-5-system} and Appendix \ref{appendix-simulations}) that, \name \ outperforms the only prior decentralized algorithm, the ETC from \cite{matching_bandits} and is comparable to the centralized UCB in \cite{matching_bandits}.
}

\textbf{Proof sketch of Theorem~\ref{thm:main_thm_algo}.} Using the rank estimation protocol each agent learns its own rank correctly in a decentralized way. We may assume agent $j$ knows it is ranked $j$ henceforth.
As agent $1$ experiences no collisions, we show that for all phases after a random finite number of phases $\tau^{(1)}$, agent $1$ identifies the best arm in the stable match (see also Figure \ref{fig:heatmap}).
Thus, from phase $\tau^{(1)}$ on-wards, agent $2$ will eliminate the correct arm, arm $k_*^{(1)}$.
Moreover, as agent $1$ is playing UCB algorithm, the number of times it plays a sub-optimal arm is small and thus, the total number of collisions experienced by agent $2$ is small.
Subsequently, we show that for all phases after a random phase $\tau^{(2)} \geq \tau^{(1)}$, agent $2$ always identifies its correct arm $k_*^{(2)}$ as the best arm. Similarly, we show that for all $j\in[2,N]$, for all phases after a random phase $\tau^{(j)}$, all agents $j' \leq j$ always identifies the set of dominated arms $\mathcal{D}^{(j)}_*$ (hence the non-dominated arms $\mathcal{A}^{(j)}_*$),  correctly by eliminating the most played arm of the higher ranked agents in the previous phase. Furthermore, from $\mathcal{A}^{(j)}_*$ it identifies its stable match $k_*^{(j')}$ as its best arm due to UCB dynamics, and thus, regret of all agents ranked $j$ or lower is well-behaved (see also Figure \ref{fig:heatmap}). The regret bound is proved by establishing upper bounds on the expectations of $\tau^{(j)}$, $\forall j \in [K]$.
We do so by establishing high probability upper bound on the number of times a sub-optimal arm is played by the UCB algorithm, even though the set of active arms varies across phases for each agent (except agent $1$) (Lemma \ref{lem:chi_tail_bound} in appendix).
\begin{figure}
    \centering
    \includegraphics[width=\linewidth]{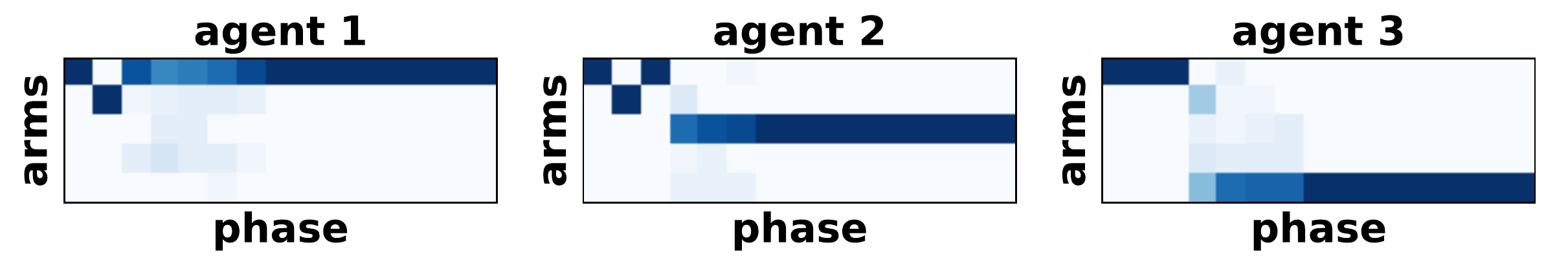}\\
    \includegraphics[width=0.67\linewidth]{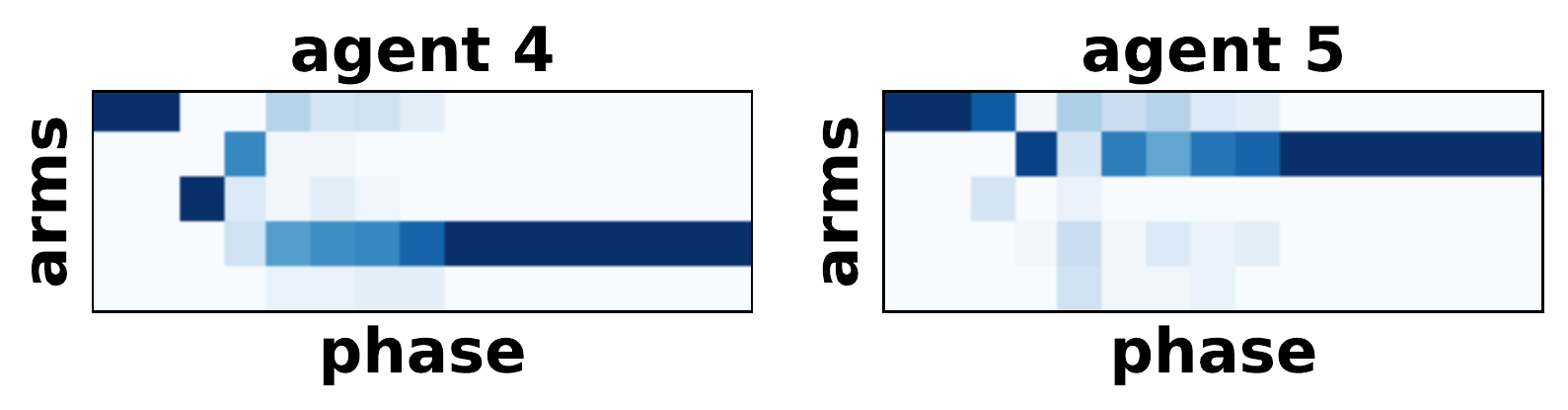}
    \caption{\small A system of $5$ agents and $5$ arms with arm-means chosen i.i.d. uniform in $[0,1]$. Over $100$ parallel runs with $13$ phases each, this heatmap counts the arm communicated by each agent at the end of each phase across runs. The color intensity increases with the number of runs where the arm is communicated. This figure demonstrates the freezing of dominated-arms empirically; all agents communicate its stable match (depicted as a contiguous black strip) partner arm after a certain phase number.}
    \label{fig:heatmap}
    \vspace{-1em}
\end{figure}

\section{Incentive Compatibility and Robustness to Selfish Players}
In this section, we discuss robustness properties when any agent deviates from executing the \name\; algorithm in order to maximize the collected reward. We consider a desirable robustness property called $\varepsilon$ Nash Equilibrium for multi-agent algorithms, recently proposed in \cite{equilibrium}. 
Roughly, this property guarantees that no agent can significantly increase (by at-most additive $\varepsilon$) its rewards by unilaterally deviating from the \name; protocol.
Although this is a weaker concept compared to the classical Nash-Equilibrium used in the theory of repeated games \cite{game_theory}, is nevertheless a useful property for practical algorithms in multi-agent bandits to posses \cite{equilibrium}.
\\

We set some notations to define this concept. For any agent $j \in [N]$, time horizon $T$ and \emph{algorithm profile} $s:=(s_1,\cdots,s_N)$ executed by the $N$ agents respectively, denote by $\text{Rew}_T^{(j)}(s)$ to be the reward collected by agent $j$ in $T$ time slots, when agent $j \in [N]$ executes algorithm $s_i$. 
For an algorithm profile $s$, an algorithm $s'$ and 
any agent $j \in [N]$, denote by $\text{Rew}_T^{(j)}(s_{-j},s')$ to be the reward obtained by agent $j$ when it executes algorithm $s'$ and the other agents play according to the profile $s$.

\begin{definition}
An algorithm profile $s$ is an $\varepsilon := (\varepsilon_j)_{j=1}^N$ Nash Equilibrium if, for every agent $j\in[N]$ and algorithm $s'$, $\mathbb{E}[\text{Rew}_T^{(i)}(s_{-j},s')] \leq \mathbb{E}[\text{Rew}_T^{(j)}(s)] + \varepsilon_j.$
\end{definition}

\begin{proposition}
The \name\; algorithm profile is $\varepsilon : (\varepsilon_j)_{j=1}^N$ stable  where, for all $j\in[N]$, $\varepsilon_j = \sum_{l=1}^{j-1}\mathbf{1}_{(\mu_{jl} > \mu_{jk^*_j})}\frac{\mu_{jl}}{\mu^{(l)}}\mathbb{E}[R_T^{(l)}] + \mathbb{E}[R_T^{(j)}]$, where for all $j' \in [N]$, $\mathbb{E}[R_T^{(j')}]$ is given in Equation (\ref{eqn:main_thm}).
\label{prop:incentive}
\end{proposition}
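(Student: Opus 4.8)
The plan is to combine the ``inductive blindness'' structure of \name\; with the elementary observation that a unilaterally deviating agent can only profit by poaching arms that are both \emph{dominated} and \emph{strictly better for it} than its own stable match. Fix an agent $j$, write $\mu^{(l)} := \mu_{l k^{(l)}_*}$ for the stable-match mean of agent $l$, and read $\mu_{jl}$ as the mean agent $j$ would earn on agent $l$'s stable arm $k^{(l)}_*$. The first ingredient is that the trajectory of every higher-priority agent $l<j$ is unaffected by $j$'s choice of $s'$: agent $l$ can only be blocked by agents $1,\dots,l-1$, all of whom follow \name, so by inductive blindness the law of $(I^{(l)}(t))_{t\le T}$ is identical under $s$ and under $(s_{-j},s')$. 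In particular agent $l$'s regret in the deviation scenario equals its honest regret $\mathbb{E}[R_T^{(l)}]$, which is controlled by Theorem~\ref{thm:main_thm_algo}. The second ingredient is the benchmark identity $\mathbb{E}[\text{Rew}_T^{(j)}(s)] = T\mu^{(j)} - \mathbb{E}[R_T^{(j)}]$, immediate from the definition of $R_T^{(j)}$.

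Next I would upper bound the reward under an arbitrary deviation $s'$. At any round agent $j$ matches at most one arm; if it matches arm $k$ its expected instantaneous reward is $\mu_{jk}$, and otherwise it is $0\le\mu^{(j)}$. Since $k^{(j)}_*$ is by construction the best \emph{non-dominated} arm, any arm $k$ with $\mu_{jk}>\mu^{(j)}$ must be dominated, i.e.\ $k=k^{(l)}_*$ for some $l<j$. Writing $\sigma_l(t):=\mathbf{1}\{j\text{ is matched to }k^{(l)}_*\text{ at }t\}$, this gives the per-round bound $\mathbb{E}[\text{reward}_t]\le \mu^{(j)}+\sum_{l<j}\max(\mu_{jl}-\mu^{(j)},0)\,\mathbb{E}[\sigma_l(t)]$, and summing over $t\le T$,
\begin{equation*}
\mathbb{E}[\text{Rew}_T^{(j)}(s_{-j},s')]\ \le\ T\mu^{(j)}+\sum_{l<j}\max(\mu_{jl}-\mu^{(j)},0)\,\mathbb{E}[S_l],\qquad S_l:=\textstyle\sum_{t\le T}\sigma_l(t).
\end{equation*}

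The crux is to bound the expected number of successful thefts $\mathbb{E}[S_l]$. For agent $j$ to be matched to $k^{(l)}_*$ it must be the highest-priority agent playing that arm, so in particular the owner $l$ is \emph{not} matched to $k^{(l)}_*$ that round; hence $S_l\le U_l$, where $U_l$ counts the rounds in which agent $l$ fails to secure $k^{(l)}_*$. I would then charge each such round to agent $l$'s regret: a fully blocked round costs agent $l$ exactly $\mu^{(l)}$, yielding $\mu^{(l)}\,\mathbb{E}[U_l]\le \mathbb{E}[R_T^{(l)}]$, i.e.\ $\mathbb{E}[S_l]\le \mathbb{E}[R_T^{(l)}]/\mu^{(l)}$. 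Plugging this in, using $\max(\mu_{jl}-\mu^{(j)},0)\le \mu_{jl}\,\mathbf{1}\{\mu_{jl}>\mu^{(j)}\}$, and subtracting the honest benchmark $T\mu^{(j)}-\mathbb{E}[R_T^{(j)}]$ gives $\mathbb{E}[\text{Rew}_T^{(j)}(s_{-j},s')]-\mathbb{E}[\text{Rew}_T^{(j)}(s)]\le \mathbb{E}[R_T^{(j)}]+\sum_{l<j}\mathbf{1}\{\mu_{jl}>\mu^{(j)}\}\tfrac{\mu_{jl}}{\mu^{(l)}}\mathbb{E}[R_T^{(l)}]=\varepsilon_j$, which is exactly the claimed $\varepsilon$-Nash inequality.

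I expect the charging step $\mu^{(l)}\mathbb{E}[U_l]\le\mathbb{E}[R_T^{(l)}]$ to be the delicate point. It is clean when $l$ is blocked (cost $\mu^{(l)}$ each), but it can fail round-by-round when $l$ \emph{transiently} matches a higher-mean dominated arm, where its instantaneous regret is small or even negative---precisely the phenomenon that lets $R_T^{(j)}$ be negative, as noted after Corollary~\ref{cor:UBcor}. The rigorous route is to split thefts by what $l$ is doing: thefts occurring while $l$ is blocked or matched below $\mu^{(l)}$ are charged directly to $R_T^{(l)}$, whereas thefts occurring while $l$ is itself poaching some $k^{(l')}_*$ with $l'<l$ must be charged one level up to agent $l'$'s failure to hold $k^{(l')}_*$, hence to $R_T^{(l')}$, which also appears in the sum defining $\varepsilon_j$. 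Carrying out this rank-inductive re-charging, rather than the one-line bound above, is the main technical obstacle, and it leans on exactly the same highest-to-lowest inductive structure used to prove Theorem~\ref{thm:main_thm_algo}.
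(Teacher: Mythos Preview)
Your overall architecture matches the paper's proof exactly: upper bound $\sup_{s'}\mathbb{E}[\text{Rew}_T^{(j)}(s_{-j},s')]$ by $T\mu_{jk_*^{(j)}}$ plus an ``excess'' term coming from rounds where some higher-ranked agent $l<j$ vacates $k_*^{(l)}$, lower bound the honest reward by $T\mu_{jk_*^{(j)}}-\mathbb{E}[R_T^{(j)}]$, and subtract. The paper's derivation of Equations~(\ref{eqn:ub1}) and~(\ref{eqn:lb1}) is precisely your two ingredients.

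There is one concrete discrepancy worth flagging. You read $\mu^{(l)}$ as the stable-match mean $\mu_{lk_*^{(l)}}$, but in the paper's proof $\mu^{(l)}$ is defined as the \emph{minimum} arm mean $\min_{k\in[K]}\mu_{lk}$ (see the line ``where $\mu^{(j)}:= \min_{k \in [K] } \mu_{jk}$'' in Appendix~\ref{appendix:proof-incentive}). With the paper's definition the denominator is smaller, the bound is looser, and the charging step ``number of times $l$ fails to play $k_*^{(l)}$ is at most $\mathbb{E}[R_T^{(l)}]/\mu^{(l)}$'' is meant to be a crude one-line estimate rather than the delicate round-by-round inequality you are worried about. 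Your reading would give a strictly stronger statement, which is why your charging step runs into the negative-instantaneous-regret issue.

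Regarding that issue: you are right that the per-round inequality $\mu^{(l)}\cdot\mathbf{1}\{l\text{ not on }k_*^{(l)}\}\le$ (instantaneous regret of $l$) can fail when $l$ is itself matched to a dominated arm with higher mean. The paper's proof does not perform the rank-inductive re-charging you sketch; it simply asserts the bound at the level of expectations and moves on. So your proposal is, if anything, more careful than the paper here, and the recursive fix you outline is an extra layer of rigor beyond what the paper provides---but for the purpose of matching the paper's argument, you should adopt the paper's definition of $\mu^{(l)}$ and its one-step heuristic, rather than the tighter definition and the recursive charging.
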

The proof is given in Appendix \ref{appendix:proof-incentive}. This proposition gives that for agent ranked $j$, $\varepsilon_j = O \left( j \frac{\mu^{(j)}_{\text{max}}}{\mu^{(j)}_{\text{min}}} \mathbb{E}[R_T^{((j))}] \right) $, where $\mu^{(j)}_{\text{max}}$ ($\mu^{(j)}_{\text{min}}$) is the maximum (minimum) arm-mean for agent $j$. 
Roughly speaking, this proposition guarantees that even if an agent unilaterally deviates from playing the \name \ algorithm, it must still incur $O(\log(T)/\Delta^2)$ regret.


\section{REGRET LOWER BOUND}
\label{sec:lower_bound}
We provide a regret lower bound
by adapting the approach in \cite{auer-ucb} to our multi-agent setup.
Let $R_T(\boldsymbol{\nu}, \pi)$ denote the cumulative (sum over all agents and time horizon) expected regret of a policy $\pi$ on the instance with arm distributions $\mathbb{\nu} = \{\nu_{jk} : j\in [N], k\in [K]\}$ for a horizon of length $T$. Also, denote by $\mathcal{P}$ the set of all probability distributions with bounded support $[0,1]$.\footnotemark We define $D_{\inf}(\nu, x, \mathcal{P}) = \inf_{\nu'\in \mathcal{P}} \{D(\nu, \nu'): \mu(\nu') > x\}$ for any distribution $\nu \in \mathcal{P}$. Here,  $\mu: \mathcal{P} \to \mathbb{R}$ is the operator mapping a distribution  in $\mathcal{P}$ to its mean and $D(\cdot, \cdot)$ is the KL divergence.
\footnotetext{Please refer to the supplementary material for a formal definition of the policy and the environment.}
\begin{definition}[Uniformly Consistent Policies]
A policy $\pi$ is {\em  uniformly consistent} if and only if for all $\boldsymbol{\nu} \in \mathcal{P}$, all $\alpha \in (0,1)$, the regret  
$\limsup_{T\to \infty} \frac{R_T(\boldsymbol{\nu}, \pi)}{T^\alpha} = 0$.
\end{definition}
This notion is used to eliminate tuning a policy to the current instance while admitting large regret in other instances (c.f. \cite{auer-ucb,book1}).

\textbf{Optimally stable bandits.} Our regret lower bounds hold over a sub-class of bandits where the stable matching is optimal.  Let us consider the class of bandit instances where dominated arms are bad, i.e. for any instance $\boldsymbol{\nu}$ in this class, for all agents $j \in [N]$,  $\mu_{jk} < \mu_{jk_*^{(j)}}$ for all arms $k \in [K]\setminus \{k^{(j)}_*\}$ . We call this class of instances Optimally Stable Bandits (OSB), as each agent is matched with its optimal arm in the stable matching. Let for all $j\in [N]$, $\Delta^{(j)}_{\min} = \min\limits_{k\neq k_*^{(j)}}\Delta^{(j)}_k$, which is always non-negative for an OSB instance.
\begin{lemma}[Regret Decomposition for OSB]\label{lemm:regret}
For a OSB instance $\boldsymbol{\nu} = \{\nu_{jk} : j\in [N], k\in [K]\}$,  and any uniformly consistent policy $\pi$, agent $j\in [N]$ the following holds
\vspace{-1em}
\begin{align*}
\textstyle R_{T}^{(j)}(\boldsymbol{\nu}, \pi) \geq 
\max&\left\{\sum_{j'=1}^{j-1} \Delta^{(j)}_{\min}\mathbb{E}_{\nu, \pi}[N^{(j')}_{k_*^{(j)}}(T)], \right.\\
&\left.\sum_{k \notin \mathcal{A}^{(j)}_{*}\setminus k_*^{(j)} } \Delta^{(j)}_k \mathbb{E}_{\nu, \pi}[N^{(j)}_{k}(T)]\right\}.
\end{align*}
\vspace{-1em}
\end{lemma}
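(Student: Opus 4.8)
The plan is to first write down an \emph{exact} decomposition of $R_T^{(j)}$ into non-negative pieces, and then obtain each of the two candidate bounds by retaining a different subset of these pieces; the final maximum is then immediate, since each bound holds on its own. Starting from $R_T^{(j)} = \sum_{t=1}^T \mathbb{E}[\mu_{jk_*^{(j)}} - \mu_{jI^{(j)}(t)}\mathbf{1}_{M_{I^{(j)}(t)}(t)=j}]$, I would split each round according to whether agent $j$ is matched or blocked, and group the matched rounds by the arm played. This yields the identity
\[
R_T^{(j)} = \sum_{k \neq k_*^{(j)}} \Delta_k^{(j)}\,\mathbb{E}[N_k^{(j)}(T)] + \mu_{jk_*^{(j)}}\,\mathbb{E}[B^{(j)}(T)],
\]
where $B^{(j)}(T)$ counts the rounds in which agent $j$ is blocked. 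Under the OSB assumption each $\Delta_k^{(j)} \geq 0$ and $\mu_{jk_*^{(j)}} > 0$, so every summand is non-negative.

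The second lower bound then follows by simply discarding the blocking term together with every matched-arm term not indexed in the stated sum, retaining only $\sum_{k} \Delta_k^{(j)} \mathbb{E}[N_k^{(j)}(T)]$ over the prescribed suboptimal arms. This is the classical single-agent step and needs nothing beyond non-negativity of the dropped pieces.

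The first lower bound is the one that genuinely uses the matching structure, and I expect it to be the crux. The key observation is combinatorial: at any fixed round $t$ each arm matches at most one agent, so the events $\{M_{k_*^{(j)}}(t) = j'\}$ over distinct $j' < j$ are mutually exclusive, and whenever one of them occurs agent $j$ is \emph{not} matched to $k_*^{(j)}$. Summing indicators over $t$ gives the pointwise bound $\sum_{j'<j} N_{k_*^{(j)}}^{(j')}(T) \leq \#\{t \leq T : \text{agent } j \text{ is not matched to } k_*^{(j)}\}$. In parallel I would show that the per-round regret of agent $j$ is at least $\Delta_{\min}^{(j)}$ on every round in which it is not matched to $k_*^{(j)}$: if it is matched to a suboptimal arm $k$ the instantaneous regret is $\Delta_k^{(j)} \geq \Delta_{\min}^{(j)}$, while if it is blocked the instantaneous regret is $\mu_{jk_*^{(j)}} > \Delta_{\min}^{(j)}$, using $\mu_{jk} > 0$ so that $\Delta_{\min}^{(j)} = \mu_{jk_*^{(j)}} - \max_{k \neq k_*^{(j)}} \mu_{jk} < \mu_{jk_*^{(j)}}$. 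Combining the two observations gives $R_T^{(j)} \geq \Delta_{\min}^{(j)}\,\mathbb{E}[\#\{t : j \text{ unmatched at } k_*^{(j)}\}] \geq \Delta_{\min}^{(j)} \sum_{j'<j} \mathbb{E}[N_{k_*^{(j)}}^{(j')}(T)]$, which is exactly the first bound.

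The main obstacle is making the combinatorial step airtight, in particular tracking uniformly over rounds that ``some higher agent is matched to $k_*^{(j)}$'' forces ``agent $j$ is unmatched at $k_*^{(j)}$'', and verifying the instantaneous-regret lower bound $\Delta_{\min}^{(j)}$ in the blocked case, where the correct comparison rests on positivity of the arm means rather than on a reward gap. Once these are established, taking the maximum of the two bounds completes the proof.
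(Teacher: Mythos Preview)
Your proposal is correct and follows essentially the same approach as the paper: both start from the identity $R_T^{(j)} = \sum_{k\neq k_*^{(j)}}\Delta_k^{(j)}\mathbb{E}[N_k^{(j)}(T)] + \mu_{jk_*^{(j)}}\mathbb{E}[B^{(j)}(T)]$, drop the non-negative blocking term for the second bound, and for the first bound argue that whenever some agent $j'<j$ is matched to $k_*^{(j)}$ agent $j$ incurs at least $\Delta_{\min}^{(j)}$ instantaneous regret (using $\Delta_{\min}^{(j)}\leq \mu_{jk_*^{(j)}}$). Your write-up is in fact more explicit than the paper's on the mutual-exclusivity counting step; neither argument actually uses uniform consistency, which only enters later in the lower-bound theorem.
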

In a OSB instance, if an agent $j$'s stable match partner $k^{(j)}_*$ gets matched to an agent ranked higher than $j$, then the next bext case is agent $j$ gets matched to its second best (among all $K$ arms).
Even though this may happen rarely, it suffices to bound the first term in the above expression. Similar observations were utilized in providing a minimax (not instance dependent) lower bounds in \cite{matching_bandits}.

\begin{theorem}\label{thm:lower_bound}
For any agent $j\in [N]$, under any decentralized universally consistent algorithm $\pi$  on a OSB instance $\boldsymbol{\nu}$ satisfies 
\begin{align*} \textstyle
\liminf\limits_{T\to \infty} \frac{R_T^{(j)}(\boldsymbol{\nu}, \pi)}{\log T} 
\geq \max&\left\{ \sum_{j'=1}^{j-1} \frac{ \Delta^{(j)}_{\min}}{D_{\inf}(\nu_{j'k_*^{(j)}}, \mu_{j'k_*^{(j')}}, \mathcal{P})},\right.\\
&\left.\sum_{k\notin \mathcal{A}_{j}\setminus k_*^{(j)}} \frac{\Delta^{(j)}_k}{D_{\inf}(\nu_{jk}, \mu_{jk_*^{(j)}}, \mathcal{P})}\right\}
\end{align*}
\end{theorem}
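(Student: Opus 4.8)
The plan is to combine the regret decomposition of Lemma~\ref{lemm:regret} with a change-of-measure argument in the style of the classical asymptotic lower bound (Theorem~16.2 in \cite{book1}), adapted to the decentralized multi-agent observation structure. Since Lemma~\ref{lemm:regret} already reduces $R_T^{(j)}(\boldsymbol{\nu},\pi)$ to a maximum of two sums of expected match-counts, it suffices to prove, for each competitor arm/agent appearing in those sums, an asymptotic lower bound of the form $\liminf_T \mathbb{E}_{\nu,\pi}[N_\bullet(T)]/\log T \geq 1/D_{\inf}(\cdots)$, then weight and sum each branch separately and take the maximum.

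For the first branch I fix $j' < j$ and lower bound $N^{(j')}_{k_*^{(j)}}(T)$. I would construct a perturbed instance $\boldsymbol{\nu}'$ agreeing with $\boldsymbol{\nu}$ everywhere except that the reward law of agent $j'$ on arm $k_*^{(j)}$ is replaced by some $\nu' \in \mathcal{P}$ with $\mu(\nu') > \mu_{j'k_*^{(j')}}$. Because the preferences of agents ranked $1,\ldots,j'-1$ are untouched and $k_*^{(j)} \notin \{k_*^{(1)},\ldots,k_*^{(j'-1)}\}$ (all stable arms being distinct), arm $k_*^{(j)}$ remains available to $j'$ in $\boldsymbol{\nu}'$ and, now having the largest available mean, becomes the stable match of $j'$. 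The crucial structural fact is \emph{inductive blindness}: the only likelihood-ratio factor differing between $P_\nu$ and $P_{\nu'}$ on the full trajectory is the reward agent $j'$ observes at arm $k_*^{(j)}$, sampled exactly $N^{(j')}_{k_*^{(j)}}(T)$ times, since the blocking indicators of $j'$ are driven by strictly higher-ranked agents whose behaviour is identical under both measures. Hence the divergence-decomposition identity (Lemma~15.1 in \cite{book1}) gives $D(P_\nu \| P_{\nu'}) = \mathbb{E}_\nu[N^{(j')}_{k_*^{(j)}}(T)]\, D(\nu_{j'k_*^{(j)}} \| \nu')$. Taking the event $A = \{N^{(j')}_{k_*^{(j)}}(T) \geq T/2\}$, uniform consistency on $\boldsymbol{\nu}$ forces $P_\nu(A)$ sub-polynomially small (matching the $\boldsymbol{\nu}$-suboptimal arm $k_*^{(j)}$ a linear number of times would make $R_T(\boldsymbol{\nu},\pi)$ linear), while consistency on $\boldsymbol{\nu}'$, where $k_*^{(j)}$ is now optimal for $j'$, forces $P_{\nu'}(A^c)$ sub-polynomially small. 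Substituting into the Bretagnolle–Huber inequality $P_\nu(A) + P_{\nu'}(A^c) \geq \tfrac12 \exp(-D(P_\nu \| P_{\nu'}))$ and rearranging yields $\liminf_T \mathbb{E}_\nu[N^{(j')}_{k_*^{(j)}}(T)]/\log T \geq 1/D(\nu_{j'k_*^{(j)}} \| \nu')$; optimizing over admissible $\nu'$ replaces the denominator by $D_{\inf}(\nu_{j'k_*^{(j)}}, \mu_{j'k_*^{(j')}}, \mathcal{P})$. Weighting by $\Delta^{(j)}_{\min}$ and summing over $j' < j$ recovers the first branch of Lemma~\ref{lemm:regret}.

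The second branch is the direct analogue for agent $j$ itself. For each $k \notin \mathcal{A}^{(j)}_* \setminus \{k_*^{(j)}\}$ I perturb $\nu_{jk}$ to a law of mean exceeding $\mu_{jk_*^{(j)}}$, which makes $k$ the stable match of $j$ in the perturbed instance, and run the identical divergence-decomposition and Bretagnolle–Huber argument to obtain $\liminf_T \mathbb{E}_\nu[N^{(j)}_k(T)]/\log T \geq 1/D_{\inf}(\nu_{jk}, \mu_{jk_*^{(j)}}, \mathcal{P})$; weighting by $\Delta^{(j)}_k$ and summing recovers the second branch. Combining the two branches and invoking Lemma~\ref{lemm:regret} completes the proof.

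The delicate point, and the step I expect to be the main obstacle, is justifying the \emph{localized} divergence decomposition in the decentralized setting: that perturbing a single agent-arm reward law contributes KL proportional only to that agent's match-count at that arm, despite the coupling of agents through collisions. I would make this rigorous by formally defining the trajectory $\sigma$-algebra and verifying that, conditioned on the past, the transition kernels of $P_\nu$ and $P_{\nu'}$ differ only on the reward coordinate of the pair $(j', k_*^{(j)})$, using that strictly higher-ranked agents are unaffected by $j'$'s rewards and that blocking notifications carry no reward information.
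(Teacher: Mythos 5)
Your proposal is correct and takes essentially the same route as the paper's own proof: a per-pair change of measure (perturbing one agent-arm distribution to a mean exceeding the relevant stable-match mean), the localized divergence decomposition of Lemma~\ref{lemm:divergence}, the Bretagnolle--Huber inequality on the event $\{N_\bullet(T)\geq T/2\}$ with uniform consistency applied to both the original and perturbed instances, and finally the regret decomposition of Lemma~\ref{lemm:regret} to assemble both branches. The ``delicate point'' you flag -- that the KL between trajectory measures localizes to the perturbed pair's match count despite collision coupling, because blocking notifications carry no environment-dependent likelihood -- is precisely what the paper's Lemma~\ref{lemm:divergence} establishes via the dominating measure containing the rejection atom $\delta_{\emptyset}$, so your plan coincides with the paper's argument in all essentials.
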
 
In the following corollary, we show that the dependence of $\log T/\Delta^2$ in the regret upper bound (Theorem~\ref{thm:main_thm_algo}) is tight up to  $O(K)$. A detailed discussion on arm-gap dependence is provided in Appendix \ref{appendix-lower-bound}.
For all agents other than the best ranked agent, the regret scales as  $\Theta\left( \log(T)/\Delta^2 \right)$ in our model. 

\begin{corollary}\label{corr:simple}
There exists a OSB bandit instance with Bernoulli rewards, where the regert of agent $j$ is lower bounded as 
$\Omega\left(\max\left\{\tfrac{(j-1)\log T}{\Delta^2}, \tfrac{K\log T}{\Delta}\right\}\right)$.
\end{corollary}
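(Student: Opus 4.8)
The plan is to instantiate the instance-dependent bound of Theorem~\ref{thm:lower_bound} on two carefully chosen Bernoulli OSB instances and to control the information terms $D_{\inf}$ with the standard binary-relative-entropy estimate. For a Bernoulli arm of mean $p$ and a threshold $x > p$, the closest feasible distribution is $\mathrm{Ber}(x)$, so $D_{\inf}(\nu, x, \mathcal{P}) = d(p,x) \le (x-p)^2/(x(1-x))$, where $d(\cdot,\cdot)$ is the binary relative entropy. Choosing every arm-mean inside a fixed interval bounded away from $0$ and $1$ forces $x(1-x) = \Theta(1)$, so each information term is $\Theta(\mathrm{gap}^2)$ and each summand $\Delta/D_{\inf}$ in Theorem~\ref{thm:lower_bound} is $\Theta(\mathrm{gap}_{\text{num}}/\mathrm{gap}_{\text{den}}^2)$. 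This reduces the corollary to choosing gaps that make the two terms in the max as large as claimed.

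For the $\tfrac{(j-1)\log T}{\Delta^2}$ term I would build Instance~A in which agent $j$'s own gaps are all $\Theta(1)$, so $\Delta^{(j)}_{\min} = \Theta(1)$, while for every higher-ranked agent $j' < j$ the arm $k_*^{(j)}$ sits a distance exactly $\Delta$ below agent $j'$'s optimal mean $\mu_{j'k_*^{(j')}}$. Labelling $k_*^{(m)} = m$ and giving each agent a unique dominant favourite makes the serial-dictatorship matching send agent $m$ to arm $m$ and keeps the instance OSB with global minimum gap $\Delta$; the care here is in writing the means explicitly so that no other gap is accidentally pushed below $\Delta$. With these choices the first term of Theorem~\ref{thm:lower_bound} is $\sum_{j'=1}^{j-1} \Delta^{(j)}_{\min}/D_{\inf}(\nu_{j'k_*^{(j)}}, \mu_{j'k_*^{(j')}},\mathcal{P}) = \Omega\!\left((j-1)/\Delta^2\right)$, since the numerator is $\Theta(1)$ and each denominator is $\Theta(\Delta^2)$.

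For the $\tfrac{K\log T}{\Delta}$ term I would build Instance~B in which agent $j$ has $\Theta(K)$ non-dominated suboptimal arms (there are $K-j$ of them), each with gap exactly $\Delta$ from its optimum $k_*^{(j)}$, all means again bounded away from $0,1$ so that the instance is OSB with minimum gap $\Delta$. Then the second term of Theorem~\ref{thm:lower_bound} contributes $\sum_k \Delta_k^{(j)}/D_{\inf}(\nu_{jk},\mu_{jk_*^{(j)}},\mathcal{P}) = \Omega((K-j)/\Delta) = \Omega(K/\Delta)$ (valid whenever $K-j = \Theta(K)$), since each summand is $\Theta(\Delta/\Delta^2) = \Theta(1/\Delta)$ and there are $\Theta(K)$ of them.

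Finally I would combine the two: both are valid OSB instances whose minimum gap equals $\Delta$, so the worst-case regret of agent $j$ over this class is at least the larger of the two bounds, i.e. $\Omega(\max\{(j-1)\log T/\Delta^2,\ K\log T/\Delta\})$. The main obstacle, and the reason two instances are needed, is a genuine tension: the first bound needs $\Delta^{(j)}_{\min} = \Theta(1) \gg \Delta$, whereas the second needs agent $j$ to own gap-$\Delta$ arms, forcing $\Delta^{(j)}_{\min} = \Theta(\Delta)$; these cannot coexist without driving the global minimum gap below $\Delta$. Hence the $\max$ is realized as a worst case over the instance class rather than on a single instance, and the only real work beyond invoking Theorem~\ref{thm:lower_bound} is the explicit choice of means guaranteeing each construction is OSB with minimum gap exactly $\Delta$ and with all means bounded away from the endpoints, so that the KL estimates are tight up to constants.
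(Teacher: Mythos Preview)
Your proposal is correct and, for the collision term, follows exactly the paper's route: instantiate Theorem~\ref{thm:lower_bound} on a Bernoulli instance with all means bounded away from $\{0,1\}$ so that each $D_{\inf}$ is $\Theta(\text{gap}^2)$; the paper sets $\mu_{j'j'}=1/2$ and all other arm-means $=1/2-\Delta$ for each $j'<j$, while agent $j$ has $\mu_{jj}=1/2$ and all other means $=1/4$, which is your Instance~A. One difference worth flagging: the paper's own proof only works out the $(j-1)\log T/\Delta^2$ term and does not spell out the $K\log T/\Delta$ term at all, so your Instance~B and your explicit observation that the two terms cannot coexist on a single instance (because the first needs $\Delta^{(j)}_{\min}=\Theta(1)$ while the second forces $\Delta^{(j)}_{\min}=\Theta(\Delta)$) actually go beyond what the paper supplies and clarify that the $\max$ must be read as a worst case over the class rather than as a bound on one fixed instance.
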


As the reward is heterogeneous the highest ranked agents with sub-optimality gap $\Delta$ is forced to explore $\Omega(\log(T)/\Delta^2)$ time each. Whereas, all other lower ranked agents are forced to compromise a lot whenever it's sub-optimality gap is large ( $\Delta^{(j)}_{\min} =\Omega(1)$). The proof is built on the above observation. 

{\color{black}
\subsection{Comment on the Gap between the Upper and Lower Regret Bounds}
In order to improve our bounds, we hit on a fundamental roadblock, that we describe here. In the proof of Theorem \ref{thm:lower_bound},  when an agent $1$ to $j-1$ plays the stable arm  (which is also optimal in a OSB instance) arm for agent $j$, the following needs to be understood.
\begin{enumerate}
\item  Whether agent $j$ is able to ``anticipate the collision" and move to a sub-optimal arm that is collision free. If such anticipation is missing the agent accrues $\mu_{jk_*^{(j)}}$ regret (not $\Delta^{(j)}_{\min}$). 
\item Given that the agent $j$ avoids a collision, it is unclear whether the agent successfully plays the second-best arm. It is possible that it accrues  $\Delta^{(j)}_{\max}$ regret instead in the worst case.
\end{enumerate}
The above scenarios lead to the gap in the upper versus lower regret bounds. For closing such gaps it becomes essential to formalize anticipatory behavior among multi-agent decentralized bandits. This is beyond the scope of the current paper. Transfer of knowledge from lower to higher rank agents also is not possible here which makes mimicking a centralized algorithm difficult in our setting.  
}

\section{SIMULATIONS}
\label{sec:simulations}
In this section, we compare our algorithm to the centralized UCB \cite{matching_bandits} and the ETC based decentralized algorithm \cite{matching_bandits-old}. We simulate the three algorithms on a OSB instance with $5$ agents and $5$ arms, and report the regret from these algorithms in Figure \ref{fig:5-5-system}. For this instance, we first fix a random permutation $\sigma$, where the arm-mean of arm $\sigma(i)$ for agent $i$ was set to $0.9$ for all $i$. All other arm-means was chosen randomly and uniformly in $[0,0.8]$. We estimate the mean regret from $30$ independent runs. Our results clearly indicate that, our algorithm outperforms the decentralized ETC algorithm, and is comparable to the centralized UCB algorithm.
\begin{figure}[ht!]
\centering
  \includegraphics[width=\linewidth, height= 0.9in]{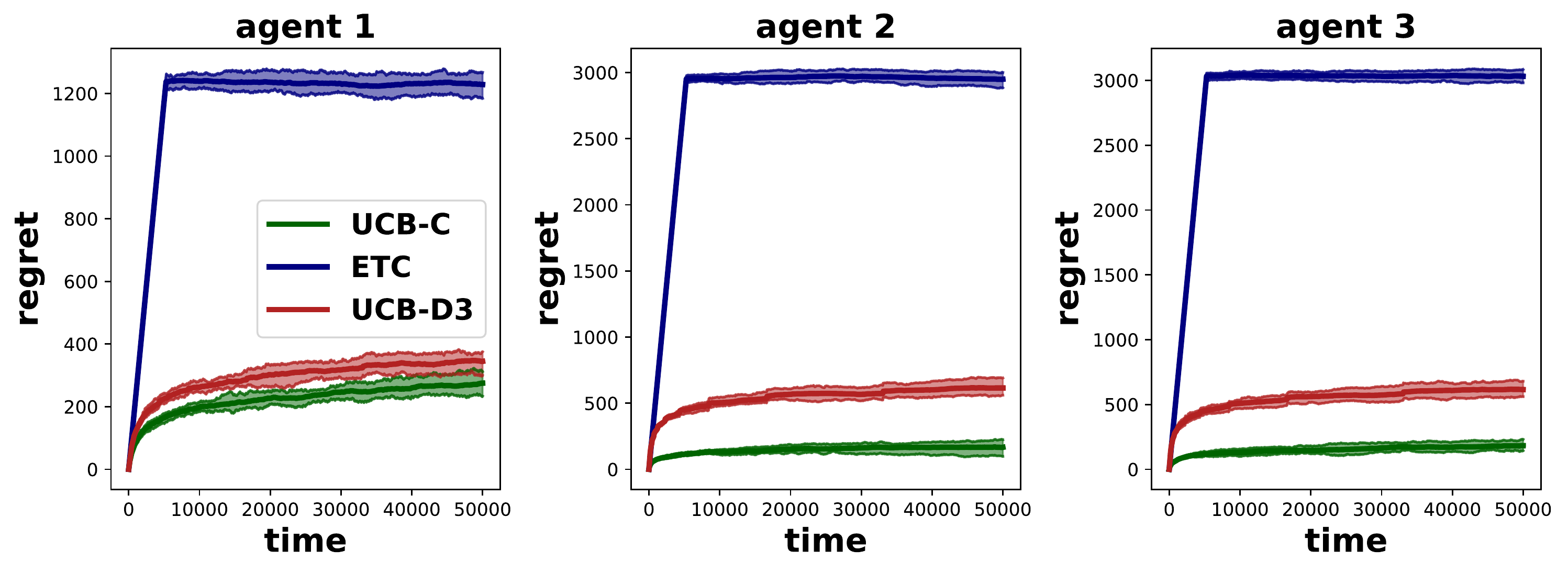}\\
  \includegraphics[width=\linewidth, height= 0.9in]{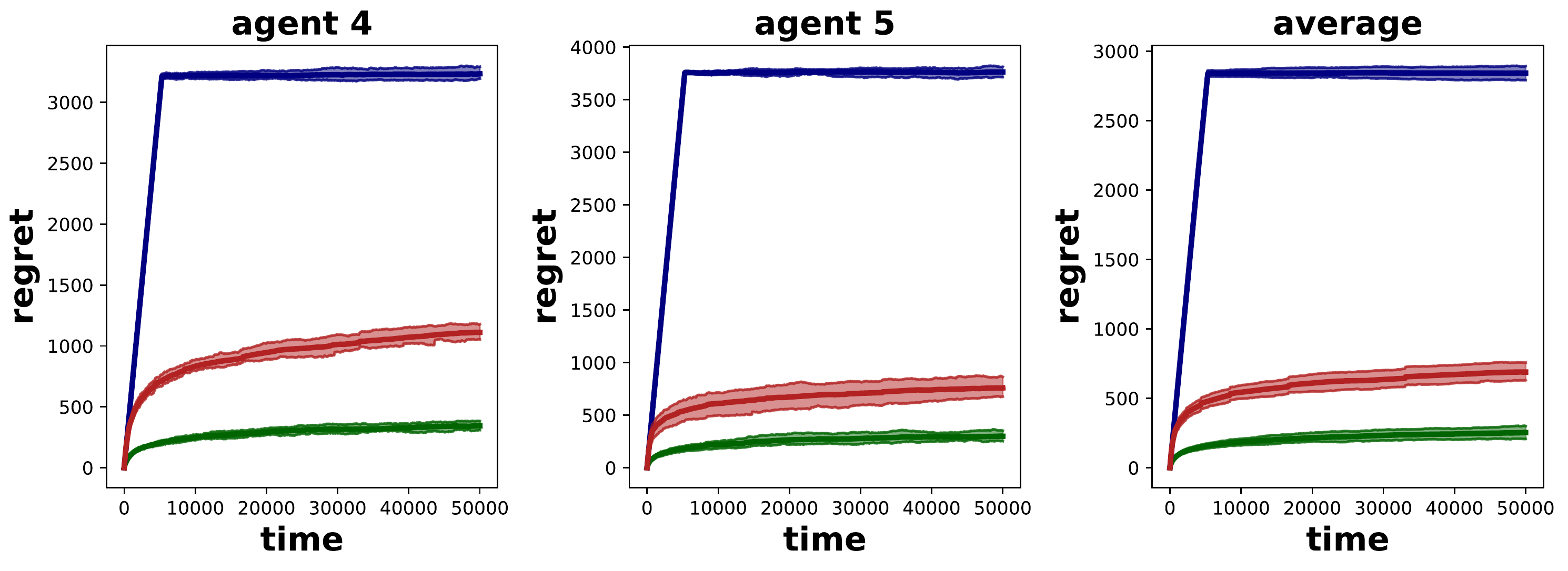}\\
\caption{Simulations with $5$ agents, and $5$ arms. We observe UCB-D3 outperforms ETC by large margin.}
\label{fig:5-5-system}
\vspace{-1em}
\end{figure}
In the appendix, we present more simulation results, and related discussions. In particular, we consider $4$ additonal systems -- the first two systems are the OSB systems with $10$ agents and $10$ arms (Fig.\,\ref{fig:10-10-system}), and $10$ agents, $15$ arms (Fig.\,\ref{fig:10_15_system}).  We also consider two non OSB systems with $5$ agents and $7$ arms (Fig.\,\ref{fig:5_7_system}), and $10$ agents and $15$ arms (Fig.\,\ref{fig:10_10_system_nonosb}). 

\section{MODEL EXTENSIONS}

We now highlight that our algorithmic paradigm is quite general, and can be potentially extended well beyond the serial dictatorship setting considered in the present paper. A well known sufficient condition for a bipartite graph to admit a unique stable matching is known as the 
Sequential Preference Condition (SPC) condition \cite{uniq_matching}, \cite{spc_cond}. Until a recent breakthrough in \cite{spc_cond}, SPC was the weakest known sufficient condition for unique matching.
The ideas introduced in this paper, can potentially be extended to provide a novel decentralized algorithm that obtains low (logarithmic) regret in the SPC setting.

Consider a marketplace with $N$ agents (indexed $1,2, \ldots N$) and $K$ arms, with $N \leq K$. Under the SPC condition, there exists a permutation of the arms $\sigma$, and the agents $\sigma'$ such that for all $i \in [N]$, (i) agent $\sigma'(i)$ prefers the arm $\sigma(i)$ the most among $[K]\setminus \{\sigma(l):l<i\}$, and (ii) arm $\sigma(i)$ prefers the arm $\sigma'(i)$ the most among $[N]\setminus \{\sigma'(l): l<i\}$. The above condition ensures, the inductive matching of agent $\sigma'(i)$ with arm $\sigma(i)$. Note that, the serial dictatorship studied in this paper is a special case of the SPC model with agent permutation $\sigma'$ as identity, and $\sigma(i)$ as the stable matched arm for agent $i$, for all $i \in [N]$.


\name\; can be adapted to this general setting as follows. The current algorithm does not work as there is no dictator (an agent which is preferred by all `unmatched' arms) at any point. However, this can be resolved by adding a local-deletion step, where agents further discard arms, that have collided a certain number of times in a phase (hence local-deletion, since deletion is in a phase). The key intuition is that under an algorithm with appropriately tuned local-deletion, agents can ``lock-in" to the equilibrium in an inductive fashion and incur negligible collisions given that the phase lengths are sufficiently large. For the base case, observe that as the agent $\sigma'(1)$ tries out an arm other than  $\sigma(1)$, it either is the preferred agent for the arm and discards the arm as it is inferior to $\sigma(1)$ (through UCB); otherwise it is blocked often at the arm and discards it through local deletion. Thus agent $\sigma'(1)$ locks in to $\sigma(1)$. Similar arguments will ensure lock in for all other agents. The appropriate tuning of local-deletion to achieve logarithmic regret in the SPC setting remains open for future work.

Finally, obtaining good decentralized algorithms remain open beyond SPC condition. This includes unique stable matching when SPC condition fails, and the general matching setup with multiple stable matching. We believe these are out of reach of the current tools developed in this paper, as the phenomenon of ``lock-in" into a stable equilibrium is missing. 
\section{RELATED WORK}
\label{sec:related_work}
MAB are widely studied owing to a multitude of applications (\cite{book1, book2}). 
In recent times, as the scale of applications increases, multi-agent MAB problems have come into focus.
The paper \cite{matching_bandits} is the closest to ours, as it introduced the matching bandit model in more generality. 
However, the algorithms in that paper were either {\em (i)} centralized and required agents at all points of time to submit a ranked list of arms to a centralized scheduler (platform) or {\em (ii)} decentralized but needed information on minimum arm-gap across all agents and the time horizon. For the important special case of Serial dictatorship, we give a decentralized algorithm (and a lower bound), that does not need knowledge of arm-gaps or time horizon. Thus, we partially resolve an open question in \cite{matching_bandits} on decentralized algorithms for matching bandits. 

The literature on multi-agent MAB can be classified into two \textemdash \emph{competitive} where different agents compete for limited resources (as in this paper)  or \emph{collaborative}, where agents jointly accomplish a shared objective. The canonical model of competitive multi-agent bandits is one wherein if multiple agents play the same arm, they all are blocked and receive no reward (colliding bandit model) \cite{cognitive_bandit_2, kalathil,  cognitive_bandit, got, sic_mmab, practical,  musical_chairs, compete1}. 
Such models are motivated from applications in wireless networks \cite{wireless_bandit}.
However the symmetry in the problem, where if multiple agents choose the same arm, they are all blocked, are crucially used in all decentralized algorithms for that model and are hence not applicable directly to our setup. See more details in Section \ref{sec:comparison-other}. The collaborative models, consists of settings where if multiple agents play the same arm simultaneously, then they all receive independent rewards \cite{social_learn,colab1,colab2,colab3,colab4,colab5}.
Such models have primarily been motivated by applications such as internet advertising \cite{gossip_bandit}. 
However, algorithms there rely on agents collecting independent samples for arms, which are not applicable to our setting.

\subsection{Comparison with Regret Bounds for Related Models}
\label{sec:comparison-other}

The regret bound in our problem is in contrast with the performance of a related and a widely studied model known as the multi-agent 
\emph{ colliding bandits model} \cite{sic_mmab},\cite{proutiere} where, if two or more agents pull the same arm, then all agents are blocked.
The regret in this case is measured with respect to an optimal allocation of agents to arms by an oracle that knows all the arm means.
The per-agent regret for the colliding bandit model, for both the centralized and decentralized algorithms scale as $O \left( \frac{\log(T)}{\Delta} \right)$.
The gap in performance between our model and the colliding bandits model
arises due to the \emph{asymmetric collisions}; when multiple agents choose the same arm, not all of them experience a collision in our model.
{Thus, agents cannot infer if its actions cause a collision to other higher ranked agents and hence the regret must scales as $\Omega \left( \frac{\log(T)}{\Delta^2} \right)$.
	In contrast, the colliding bandit model is symmetric; if multiple agents pull the same arm simultaneously, then all of them get blocked.
	Thus, the agents can coordinate in a decentralized way \cite{proutiere} to eliminate blocking and obtain a regret of $O \left( \frac{\log(T)}{\Delta} \right)$.}
\section{CONCLUSION}
We considered the heterogeneous multi agent matching bandit problem and proposed \name\; a novel decentralized algorithm with optimal regret guarantees. This proceeds in phases, and in each phase, an agent `eliminates' arms it will likely collide. 
The main insight from our algorithm was that if agents delete their dominated arms (arm optimal for higher ranked agents), then they will incur low regret. 
An interesting avenue for future work is to devise decentralized algorithms in the non-serial dictatorship setting.

\bibliographystyle{apalike}
\bibliography{main-aistats}

\newpage
\onecolumn
\appendix
\section{Sub-Routines used in  Algorithm \ref{algo:main_algo}}
\label{appendix_algorithms}

Here, we provide the pseudo code, related to initial rank estimation and the communication protocol  used in Algorithm \ref{algo:main_algo}. 

\begin{algorithm*}
		\caption{RANK-ESTIMATION (at agent $j$)}
\begin{algorithmic}
				\LState \textbf{Initialization}:  {\ttfamily Rank}$\gets N$, {\ttfamily Flag}$\gets$ \textbf{FALSE} 
				\For {$1\leq t \leq N-1$} \Comment{Rank Estimation}
				\If {$t==1$ \textbf{OR} {\ttfamily Flag}== \textbf{False}}
				\LState $I_j(t) = t$ \Comment{Play arm $t$ at time $t$}
				\If{Matched at time $t$, i.e., $M_t(t) = j$}
				\LState {\ttfamily Rank}$\gets t$, {\ttfamily Flag}$\gets$ \textbf{TRUE}
				\EndIf
				\Else
				\LState $I_j(t) = ${\ttfamily Rank}
				
				\EndIf
				\EndFor 
				\State \Return {\ttfamily Rank}
\end{algorithmic}
\label{algo:rank-estimation}
\end{algorithm*}

\begin{algorithm*}
	\caption{DOMINATED-ARM-DETECTION ($\mathcal{O},i$, {\ttfamily Rank}) (at agent $j$)}
\begin{algorithmic}
\State \textbf{Input} $\mathcal{O} \in [K]$ - Arm to communicate, $i \in \mathbb{N}$ - the phase and {\ttfamily Rank} - the rank of the agent
                \LState $\mathcal{C} \gets \emptyset$
                \LState $\widehat{S}_i \gets S_i + 2^{i-1}+1$
				\For {$\widehat{S}_i \leq t < \widehat{S}_i + K\max(0,\text{{\ttfamily Rank}-2})$} \Comment{The first {\ttfamily Rank}-2 sub-blocks in the Communication block of phase $i$}
				\LState $I^{(j)}(t) = \mathcal{O}$
				\Comment{Play the most matched arm, which is input to this sub-routine}
				\EndFor
				\For {$\widehat{S}_i + K\max(0,\text{{\ttfamily Rank}-2}) < t \leq \widehat{S}_i + K(\max(0,\text{{\ttfamily Rank}-1}))$ }
				\Comment{The  {\ttfamily Rank}$-1^{th}$ sub-block in the Communication block of phase $i$}
				\LState $I^{(j)}(t) = ((t- (\widehat{S}_i + K\max(0,\text{{\ttfamily Rank}-2})) ) \mod K ) +1$ \Comment{Play arms in round robin}
				\If{ Collision Occurs} 
				\Comment{Collision in ({\ttfamily Rank}-1)th sub-block}
				\LState $\mathcal{C} \gets \mathcal{C} \cup \{ I^{(j)}(t)\}$ \Comment{Update set of arms to delete}
				\EndIf
				\EndFor
				\State \Return $\mathcal{C}$
\end{algorithmic}
\label{algo:algo_comm}
\end{algorithm*}
Observe that in the above algorithm, agent ranked $1$, will only play its best arm, in all rounds of the communication block. 


\section{Analysis of the Algorithm and Proof of Theorem \ref{thm:main_thm_algo}}
\label{sec:analysis}

\subsection{Overall Proof Architecture}

The proof of Theorem \ref{thm:main_thm_algo} follows by plugging in the estimates from Corollary \ref{cor:reg_s_tilde}, into Corollary \ref{cor:reg_intermediate}. The subject matter in Appendix \ref{appendix:reg_decomp} is to prove Corollary \ref{cor:reg_intermediate} and the subject matter in Appendix \ref{sec:appendix_s_tilde} is to prove Corollary \ref{cor:reg_s_tilde}. All the notations and definitions needed for the proof are collected in Appendix \ref{appendix:proof_defns}.

\subsection{Notation and Definitions needed for the Proof}
\label{appendix:proof_defns}

In order to implement the proof, we specify certain notations and definitions.
For every $i \in \mathbb{N}$, denote by $S_i := N + (2^{i-1}-1) + (i-1)NK$, to be the first time slot in the regret minimization block of phase $i$. 
For any phase $i \in \mathbb{N}$, agent $j \in [N]$ and arm $k \in [K]$, denote by $\widetilde{N}_i^{(j)}[k]$ to be the number of times agent $j$ was matched to arm $k$ in phase $i$. 
Recall the notation that for all agents $j \in [N]$, its stable match partner arm was denoted as $k_*^{(j)} \in [K]$. Similarly the set of dominated arms for any agent $j \in \{2,\cdots,N\}$ we defined as $\mathcal{D}^*_j \coloneqq \{k_*^{(1)},\cdots,k_*^{(j-1)}\}$. Recall that we had set $\mathcal{A}_*^{(j)} \coloneqq [K]\setminus \mathcal{D}_*^{(j)}$. For any arm $k\in[K]$ and agent $j\in[N]$, denote by $\Delta^{(j)}_k := \mu_{jk^{(j)}_*} - \mu_{jk}$. 
\\

Our first definition is whether a given phase is good for a particular agent or not.
We call a phase $i \in \mathbb{N}$ \textbf{Good for Agent $j$} if 
\begin{enumerate}
    \item $\mathcal{A}_i^{(j)} = \mathcal{A}^*_j$, i.e., $\mathcal{A}^{(j)}_i = [K] \setminus \{k^{(1)}_*,\cdots,k^{(j-1)}_* \} $.
    \item The number of times each arm $k \in \mathcal{A}^{(j)}_* \setminus \{k_*^{(j)}\}$ is matched to agent $j$ in the regret minimization block of phase $i$ is less than or equal to $\frac{10 \alpha i}{(\Delta^{(j)}_k)^2}$. Note that by definition of $\Delta^{(j)}_k$ and the fact that arm-means are unique for an agent, for every agent $j \in [N]$ and arm $k \in \mathcal{A}^{(j)}_* \setminus \{k_*^{(j)}\}$, $\Delta^{(j)}_k > 0$.
    \item The arm that is most matched in the regret minimization block of phase $i$ is $k_*^{(j)}$.
    \end{enumerate}

We denote by the event $\chi_i^{(j)}$ to be the indicator random variable, i.e.,
\begin{align*}
    \chi_i^{(j)} = \mathbf{1}_{\text{Phase }i  \text{ is \textbf{Good} for Agent $j$}}.
\end{align*}

For every agent $j \in [N]$, denote by the random time $\tau^{(j)}$ to be the first phase index, such that all phases larger than $\tau^{(j)}$ is Good for agent $j$. Formally, 

\begin{align*}
    \mathcal{\tau}^{(j)} &:= \inf \left\{ i \in \mathbb{N} : \left(\prod_{l \geq i} \chi_l^{(j)} \right) = 1 \right\}, \\
    \widetilde{\tau}^{(j)} &:= \max (\tau^{(1)},\cdots,\tau^{(j)} ).
\end{align*}

Notice that after phase $i \geq \widetilde{\tau}^{(j)}$, for all agents $j' \leq j$, $\mathcal{A}_i^{(j')} = \mathcal{A}^{(j')}_*$. In other words, the set of active arms of all agents ranked $j$ and lower are `frozen' after phase $\widetilde{\tau}^{(j)}$ to the `correct' set of arms. 
\\

We now, describe certain set of events. 
For any agent $j \in [N]$ and arm $k \in [K] \setminus \{k_*^{(1)},\cdots,k_*^{(j)} \}$, denote by the event $\mathcal{E}_k^{(j)}$ as 
\begin{align}
    \mathcal{E}_k^{(j)} := \left\{ N_k^{(j)}(T) - N_k^{(j)}(S_{\widetilde{\tau}^{(j)}}) \geq \bigg\lceil\frac{4 \alpha \log(T)}{(\Delta_k^{(j)})^2}\bigg\rceil  \right\} \cap \{ \widetilde{\tau}^{(N)} < \infty \}.
    \label{eqn:defn_ejk}
\end{align}

Denote by the event $\mathcal{E}$ as the union, i.e., 
\begin{align}
    \mathcal{E} := \bigcap_{j=1}^N \bigcap_{k \in [K]\setminus \{k_*^{(1)},\cdots,k_*^{(j)} \}} \mathcal{E}_k^{(j)}.
    \label{eqn:defn_e}
\end{align}

Recall that we had defined $\Delta$ to be the smallest arm-gap, namely 
 \begin{align*}
     \Delta := \min_{j \in [N]} \min_{k \in \mathcal{A}^{(j)}_*\setminus \{ k_*^{(j)}\}} \Delta_k^{(j)},
 \end{align*}
 and that $i^* \in \mathbb{N}$ was defined as  
 \begin{equation}
     i^* := \min \left\{ i \in \mathbb{N}: \frac{20 NK \alpha i }{\Delta^2} \leq 2^{i-1} \right\}.
     \label{eqn:i_star_defn}
 \end{equation}
 
\subsection{Technical Preliminaries}

In order to be precise in our calculations, we will need to explicitly specify a probability space. Let $\mathbb{Y} \coloneqq (Y_k^{(j)}(t))_{1 \leq k \leq K, 1 \leq j \leq N, t \geq 1}$, be a family of iid random variables, with each being defined uniformly in the interval $[0,1]$. The interpretation being that when agent $j$, gets matched with arm $k$, for the $t$th time, it receives a binary reward equal to $\mathbf{1}(Y_k^{(j)}(t) \leq \mu_k^{(j)})$. Thus, the dynamics of the algorithm can be constructed as a deterministic (measurable) function of the 
family of random variables $\mathbb{Y}$.

\subsection{Regret Decomposition}
\label{appendix:reg_decomp}

{\color{black}
\begin{lemma}
The regret of any agent $j \in [N]$, at time $T$ can be decomposed as
\begin{multline*}
       \mathbb{E} \left[R^{(j)}_T \right] \leq  \mathbb{E}[S_{\widetilde{\tau}^{(j)}}] + \underbrace{2(j-1)(\log_2(T)+1)}_{\text{Regret due to Communication}} + \\ \underbrace{\mathbb{E} \left[ \sum_{j^{'}=1}^{j-1} \sum_{k \in \mathcal{A}^{*}_j}(N_k^{(j^{'})}(T) - N_k^{(j^{'})}(S_{\widetilde{\tau}^{(j)}}))\mu_{jk^{(j)}_*} \right]}_{\text{Regret due to Collision}} + \underbrace{\mathbb{E} \left[ \sum_{k \in \mathcal{A}^{*}_j \setminus \{ k^{(j)}_* \}}(N_k^{(j)}(T) - N_k^{(j)}(S_{\widetilde{\tau}^{(j)}}))\Delta^{(j)}_k \right]}_{\text{Regret due to Sub-optimal arm pull}}.
\end{multline*}
\label{lem:reg_deecompose}
\end{lemma}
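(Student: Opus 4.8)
The plan is to work directly from the regret definition $R_T^{(j)} = \sum_{t=1}^T \mathbb{E}\bigl[\mu_{jk_*^{(j)}} - \mu_{jI^{(j)}(t)}\mathbf{1}_{M_{I^{(j)}(t)}(t)=j}\bigr]$ and to split the horizon at the random phase boundary $S_{\widetilde{\tau}^{(j)}}$. Since the subtracted reward is nonnegative, the per-slot regret never exceeds $\mu_{jk_*^{(j)}}\le 1$, so the slots $t\le S_{\widetilde{\tau}^{(j)}}$ contribute at most $S_{\widetilde{\tau}^{(j)}}$ deterministically, i.e.\ $\mathbb{E}[S_{\widetilde{\tau}^{(j)}}]$ in expectation; this is the burn-in term, and it is trivially valid when $\widetilde{\tau}^{(j)}=\infty$ as the right-hand side is then infinite. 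All remaining effort is to bound the regret in the slots $t>S_{\widetilde{\tau}^{(j)}}$, where by definition of $\widetilde{\tau}^{(j)}=\max(\tau^{(1)},\dots,\tau^{(j)})$ and the notion of a \emph{Good} phase, every agent $j'\le j$ has its active set frozen to the correct set $\mathcal{A}^{(j')}_*$.

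For $t>S_{\widetilde{\tau}^{(j)}}$ I would rewrite the per-slot regret exactly as $\mu_{jk_*^{(j)}}\mathbf{1}_{\{\text{agent }j\text{ blocked at }t\}} + \Delta^{(j)}_{I^{(j)}(t)}\mathbf{1}_{\{\text{agent }j\text{ matched at }t\}}$, using $\mu_{jk_*^{(j)}}-\mu_{jI^{(j)}(t)}=\Delta^{(j)}_{I^{(j)}(t)}$ when matched. I would dispatch the matched part first. After freezing, agent $j$ plays only arms of $\mathcal{A}^{(j)}_*$ in the RM block, and in the communication block it is matched only to arms of $\mathcal{A}^{(j)}_*$ (it is blocked exactly on the dominated arms it round-robins through in its own sub-block, and otherwise plays $\mathcal{O}^{(j)}_i=k_*^{(j)}\in\mathcal{A}^{(j)}_*$). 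Hence the matched regret telescopes into $\sum_{k\in\mathcal{A}^{(j)}_*}\Delta^{(j)}_k\bigl(N_k^{(j)}(T)-N_k^{(j)}(S_{\widetilde{\tau}^{(j)}})\bigr)$, and since $\Delta^{(j)}_{k_*^{(j)}}=0$ this is exactly the sub-optimal-arm-pull term.

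The blocked part I would split according to whether $t$ lies in an RM block or a communication block. In an RM block, a block of agent $j$ at arm $k\in\mathcal{A}^{(j)}_*$ forces some higher-ranked $j'<j$ to be matched to $k$ at that slot; since at each time at most one agent is matched to $k$ and total matches dominate RM-only matches, the number of such events at $k$ is at most $\sum_{j'<j}\bigl(N_k^{(j')}(T)-N_k^{(j')}(S_{\widetilde{\tau}^{(j)}})\bigr)$, which after multiplying by $\mu_{jk_*^{(j)}}$ and summing over $k\in\mathcal{A}^{(j)}_*$ yields the collision term. In a communication block I would count, phase by phase, which sub-blocks force a collision on agent $j$: in the sub-block where agent $j$ itself round-robins it is blocked on its $j-1$ dominated arms, while in each earlier sub-block the higher-ranked round-robining agent collides with agent $j$ at $k_*^{(j)}$ exactly once, and in the later sub-blocks agent $j$ (being higher-ranked than the round-robining agent) is never blocked; this gives at most $2(j-1)$ blocked slots per phase (and none when $j=1$). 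As at most $\log_2(T)+1$ phases fit in $T$ slots, this sums to at most $2(j-1)(\log_2(T)+1)$, the communication term.

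The main obstacle I anticipate is the communication-block bookkeeping: one must carefully use the frozen estimates $\mathcal{O}^{(j')}_i=k_*^{(j')}$ for $j'\le j$ (guaranteed after $\widetilde{\tau}^{(j)}$) to pin down precisely in which sub-blocks a collision on agent $j$ is forced, and to verify that no collisions occur in the sub-blocks run by lower-ranked agents. A secondary subtlety, worth flagging for transparency, is that the collision and communication terms are permitted to double-count (for instance a higher agent's round-robin match to $k_*^{(j)}$ during communication can appear both inside a count $N_k^{(j')}$ and in the per-phase communication tally); this is harmless because the statement is an upper bound, but it explains why the collision term is allowed to use the full match counts $N_k^{(j')}(T)$ rather than RM-only counts.
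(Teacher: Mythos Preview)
Your proposal is correct and follows essentially the same route as the paper's proof: split at $S_{\widetilde{\tau}^{(j)}}$, bound the prefix by its length, and decompose the suffix into matched sub-optimal pulls, RM-block collisions (upper bounded by higher agents' matches in $\mathcal{A}^{(j)}_*$), and a per-phase communication cost of at most $2(j-1)$. Your communication-block accounting and your remark on harmless double-counting are in fact more careful than the paper's own write-up, which sketches the same three bullets but with less justification.
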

\begin{proof}
From the definition of regret, we have
\begin{align*}
\mathbb{E}[R_T(j)] &= \mathbb{E} \left[ \sum_{t=1}^T \mathbf{1}_{I^{(j)}(t) \neq k_{*}^{(j)} }(\mu_{jI^{(j)}(t)} - \mu_{jk^{(j)}_*})\right], \\
&= \mathbb{E} \left[\sum_{t=1}^{S_{\widetilde{\tau}^{(j)}}} \mathbf{1}_{I^{(j)}(t) \neq k_{*}^{(j)} }(\mu_{jI^{(j)}(t)} - \mu_{jk^{(j)}_*}) \right] + \mathbb{E} \left[\sum_{t=S_{\widetilde{\tau}^{(j)}}}^T \mathbf{1}_{I^{(j)}(t) \neq k_{*}^{(j)} +1}(\mu_{jI^{(j)}(t)} - \mu_{jk^{(j)}_*}) \right], \\
&\leq \mathbb{E}[S_{\widetilde{\tau}^{(j)}}] +  \mathbb{E} \left[\sum_{t=S_{\widetilde{\tau}^{(j)}}}^T \mathbf{1}_{I^{(j)}(t) \neq k_{*}^{(j)} }(\mu_{jI^{(j)}(t)} - \mu_{jk^{(j)}_*}) \right].
\end{align*}
The inequality follows from the assumption that, for all $j\in[N]$ and arm $k\in[K]$, $\mu_{jk}\in[0,1]$. Now, we decompose the second term as follows
\begin{multline*}
   \mathbb{E} \left[\sum_{t=S_{\widetilde{\tau}^{(j)}}}^T \mathbf{1}_{I^{(j)}(t) \neq k_{*}^{(j)} }(\mu_{jI^{(j)}(t)} - \mu_{jk^{(j)}_*}) \right] \leq  \underbrace{2(j-1)\mathbb{E}\left[ \sum_{i \geq \widetilde{\tau}^{(j)}}\mathbf{1}_{2^i \leq T}\right]}_{\text{Communication}} + \\ \underbrace{\mathbb{E} \left[ \sum_{j^{'}=1}^{j-1} \sum_{k \in \mathcal{A}^{*}_j}(N_k^{(j^{'})}(T) - N_k^{(j^{'})}(S_{\widetilde{\tau}^{(j)}}))\mu_{jk^{(j)}_*} \right]}_{\text{Collision}} + \underbrace{\mathbb{E} \left[ \sum_{k \in \mathcal{A}^{*}_j \setminus \{ k^{(j)}_* \}}(N_k^{(j)}(T) - N_k^{(j)}(S_{\widetilde{\tau}^{(j)}}))\Delta^{(j)}_k \right]}_{\text{Sub-optimal arm pull}}.
\end{multline*}
This in-equality follows from the following facts
\begin{itemize}
    \item During communication, agent ranked $j$ will face exactly $j-1$ collisions, in its round-robin arm search. Additionally, during each of the top-ranked $j-1$ agent's round robin arm-search, agent $j$ experiences one collision.
    \item Agent $j$ experiences a collision at an arm $k$, if and only if, exactly one agent ranked $1$ through $j-1$ is matched to arm $k$, at the same time. This then gives that the total upper bound on the number of collisions is the number of times agents $1$ through $j-1$, are matched to any arm in $\mathcal{A}^{(j)}_*$. This gives the regret due to collisions. 
    \item Finally, each sub-optimal arm match incurs regret, which is captured in the last term.
\end{itemize}
\end{proof}
It thus remains to bound the expected number of times, any agent $j^{'} \in [N]$, plays arm $k \in \mathcal{A}^{(j^{'})}_*$.

\begin{proposition}
For any agent $j \in [N]$, arm $k \in \mathcal{A}^{(j)}_* \setminus \{ k^{(j)}_*\}$, 
\begin{align*}
     \mathbb{E}[N_k^{(j)}(T) - N_k^{(j)}(S_{\widetilde{\tau}^{(j)}})] \leq \frac{32\alpha \log(T)}{(\Delta^{(j)}_k)^2} + 1 +\frac{8T^{-4\alpha}}{(\Delta^{(j)}_k)^2} + T^{2-4\alpha}.
\end{align*}
\label{prop:expectation}
\end{proposition}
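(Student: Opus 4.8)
\textbf{Proof proposal for Proposition~\ref{prop:expectation}.}

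The plan is to bound the number of times agent $j$ matches with a suboptimal non-dominated arm $k \in \mathcal{A}^{(j)}_* \setminus \{k^{(j)}_*\}$ \emph{after} phase $\widetilde{\tau}^{(j)}$, at which point the active set of agent $j$ has frozen to the correct set $\mathcal{A}^{(j)}_*$. The key reduction is that once $i \geq \widetilde{\tau}^{(j)}$, agent $j$ is effectively playing a \emph{standard single-agent} UCB-$\alpha$ instance on the fixed arm set $\mathcal{A}^{(j)}_*$ with stationary reward means $\mu_{jk}$: by the inductive dominated-arm locking described in the proof sketch, all higher-ranked agents $j' < j$ are matched to their true stable partners $k^{(j')}_* \in \mathcal{D}^{(j)}_*$, so they never collide with agent $j$ on any arm in $\mathcal{A}^{(j)}_*$, and hence every pull of such an arm by agent $j$ results in a genuine match yielding an i.i.d. reward draw. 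First I would make this precise using the probability-space construction of Appendix~\ref{appendix:reg_decomp}, so that $N_k^{(j)}(t)$ increments exactly when agent $j$ plays $k$, and the empirical mean $\widehat{\mu}_k^{(j)}$ is a clean average of the $Y_k^{(j)}(\cdot)$ variables.

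The core of the argument is the standard UCB suboptimal-pull analysis. I would show that on the event $\{\widetilde{\tau}^{(N)} < \infty\}$, once the number of matches to arm $k$ since phase $\widetilde{\tau}^{(j)}$ exceeds the threshold $u := \lceil 4\alpha\log(T)/(\Delta_k^{(j)})^2\rceil$, the UCB index of $k$ falls below that of $k^{(j)}_*$ with high probability, so that $k$ ceases to be selected except on the low-probability ``bad confidence'' event. Concretely, a pull of $k$ at some time $t$ after the threshold is reached requires that either the confidence bound on $k^{(j)}_*$ fails (its true mean lies below its lower confidence bound) or the upper confidence bound on $k$ exceeds $\mu_{jk^{(j)}_*}$ despite $N_k^{(j)} \geq u$. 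With the bonus $\sqrt{2\alpha\ln(t)/N_k^{(j)}}$ and $\alpha \geq 2$, a Hoeffding/union bound over the at most $T$ relevant time slots gives that each such failure has probability $O(t^{-4\alpha})$, which summed over $t$ contributes the residual terms $\frac{8T^{-4\alpha}}{(\Delta_k^{(j)})^2}$ and $T^{2-4\alpha}$. Combining the deterministic threshold $u$ (which accounts for the $\frac{32\alpha\log T}{(\Delta_k^{(j)})^2}$ term, with the $+1$ absorbing the ceiling) with these tail contributions yields the stated bound on $\mathbb{E}[N_k^{(j)}(T) - N_k^{(j)}(S_{\widetilde{\tau}^{(j)}})]$.

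I expect the main obstacle to be handling the fact that the UCB confidence radii use the \emph{global} clock $t$ and the \emph{cumulative} match count $N_k^{(j)}(t)$, whereas the quantity of interest counts only matches \emph{after} $S_{\widetilde{\tau}^{(j)}}$. The subtlety is that $\widetilde{\tau}^{(j)}$ is itself a random phase index defined through the $\chi^{(j')}_l$ events, so the ``fresh start'' at phase $\widetilde{\tau}^{(j)}$ is not a deterministic time; I would need to argue the bound conditionally on the frozen-active-set structure, leveraging that the $Y_k^{(j)}(\cdot)$ samples are i.i.d. regardless of when they are collected, so the standard analysis applies to the post-$\widetilde{\tau}^{(j)}$ suffix uniformly. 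A clean way to sidestep the randomness of $\widetilde{\tau}^{(j)}$ is to bound the number of \emph{additional} suboptimal pulls by comparing against a UCB process that could have restarted at any phase, using that increasing the accumulated samples only tightens the concentration; the already-stated confidence-failure tail bounds hold for every candidate restart phase simultaneously, so the final estimate survives the worst case. The remaining work is the routine bookkeeping to match the explicit constants ($32\alpha$, $178$ downstream, etc.) to the chosen confidence level, which I would not grind through here.
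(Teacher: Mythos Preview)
Your overall strategy---reduce to a standard UCB suboptimal-pull count and split into the two confidence-failure events---is exactly what the paper does. However, your setup contains a factual error: after phase $\widetilde{\tau}^{(j)}$ it is \emph{not} true that higher-ranked agents $j'<j$ are always matched to $k_*^{(j')}$ and hence never collide with agent $j$ on arms in $\mathcal{A}_*^{(j)}$. The definition of a \textbf{Good} phase only guarantees that each $j'$ matches a suboptimal arm at most $10\alpha i/(\Delta_k^{(j')})^2$ times in phase $i$; they still explore, and since $\mathcal{A}_*^{(j)} \subset \mathcal{A}_*^{(j')}$, collisions on $k\in\mathcal{A}_*^{(j)}$ do occur. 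Your claim that ``$N_k^{(j)}(t)$ increments exactly when agent $j$ plays $k$'' is therefore false.

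Fortunately this claim is unnecessary. Since $N_k^{(j)}$ counts \emph{matches}, one always has $N_k^{(j)}(T)-N_k^{(j)}(S_{\widetilde{\tau}^{(j)}}) \leq \sum_{t>S_{\widetilde{\tau}^{(j)}}}\mathbf{1}\{I^{(j)}(t)=k\}$, and it is the right-hand side (pulls) that the UCB argument bounds. The empirical means $\widehat{\mu}_k^{(j)}$ are computed only from matched samples, so they remain unbiased averages of i.i.d.\ draws regardless of collisions; the two Hoeffding tail bounds go through unchanged. Once you drop the no-collision claim, your argument coincides with the paper's.

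On your third paragraph: the paper sidesteps the randomness of $\widetilde{\tau}^{(j)}$ more directly than your proposed conditioning. It simply writes
\[
\mathbb{E}\Big[\sum_{t=S_{\widetilde{\tau}^{(j)}}+1}^{T}\mathbf{1}\{I^{(j)}(t)=k\}\Big]
=\sum_{t=1}^{T}\mathbb{P}\big[I^{(j)}(t)=k,\ t\geq S_{\widetilde{\tau}^{(j)}}+1\big]
\]
and then drops the event $\{t\geq S_{\widetilde{\tau}^{(j)}}+1\}$ inside each probability (which can only increase it). After the standard split $\{\mathrm{UCB}_k(t)\geq \mu_{jk_*^{(j)}}-\varepsilon,\,I^{(j)}(t)=k\}\cup\{\mathrm{UCB}_{k_*^{(j)}}(t)\leq \mu_{jk_*^{(j)}}-\varepsilon\}$ with $\varepsilon=\Delta_k^{(j)}/2$, both terms are handled by a union bound over the possible values of $N_k^{(j)}(t)$ (resp.\ $N_{k_*^{(j)}}^{(j)}(t)$), which requires no reference to $\widetilde{\tau}^{(j)}$ at all. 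There is no need for the ``restart at any phase'' uniformization you describe.
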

\begin{proof}
For any agent $j \in [N]$, arm $k \in [K]$, and time $t \geq 2$, denote by $\text{UCB}^{(j)}_k(t) = \widehat{\mu}_{jk}(t-1)+ \sqrt{\frac{2\alpha \log(T)}{N^{(j)}_{k}(t-1)}}$, to be the UCB index of arm $k$, at time $t$, by agent $j$. Let $\varepsilon := \frac{\Delta^{(j)}_k}{2}$.
\begin{align*}
    \mathbb{E}[N_k^{(j)}(T) - N_k^{(j)}(S_{\widetilde{\tau}^{(j)}})] &= \mathbb{E} \left[ \sum_{t=S_{\widetilde{\tau}^{(j)}}+1}^{T} \mathbf{1}_{I^{(j)}(t) = k}\right], \\
    & \leq \mathbb{E} \left[  \sum_{t=S_{\widetilde{\tau}^{(j)}}+1}^{T} \mathbf{1}_{\text{UCB}^{(j)}_k(t) \geq \text{UCB}^{(j)}_{k^{*}_j}(t)}  \right], \\
    &\leq \mathbb{E} \left[ \sum_{t=S_{\widetilde{\tau}^{(j)}}+1}^{T} \left( \mathbf{1}_{\text{UCB}^{(j)}_k(t) \geq \mu_{jk^{(j)}_*}-\varepsilon, I^{(j)}(t)=k} + \mathbf{1}_{\text{UCB}^{(j)}_{k^{(j)}_*}(t) \leq \mu_{jk^{(j)}_*}-\varepsilon} \right)\right].
    \end{align*}
    The first inequality follows by the definition of the algorithm. The second inequality follows from a standard argument (c.f. Theorem $8.1$ of \cite{book1}). By linearity of expectation, we can rewrite the above as,
    \begin{multline*}
   \mathbb{E} \left[ \sum_{t=S_{\widetilde{\tau}^{(j)}}+1}^{T} \left( \mathbf{1}_{\text{UCB}^{(j)}_k(t) \geq \mu_{jk^{(j)}_*}-\varepsilon, I^{(j)}(t)=k} + \mathbf{1}_{\text{UCB}^{(j)}_{k^{(j)}_*}(t) \leq \mu_{jk^{(j)}_*}-\varepsilon} \right)\right] = \\ \sum_{t=1}^{T} \mathbb{P} \left[ \text{UCB}^{(j)}_k(t) \geq \mu_{jk^{(j)}_*}-\varepsilon, I^{(j)}(t)=k, t \geq S_{\widetilde{\tau}^{(j)}}+1\right] + \sum_{t=1}^{T} \mathbb{P} \left[ \text{UCB}^{(j)}_{k^{(j)}_*}(t) \leq \mu_{jk^{(j)}_*}-\varepsilon, t \geq S_{\widetilde{\tau}^{(j)}}+1 \right].
\end{multline*}
Each of the two terms can be computed in a standard fashion, as outlined in Chapter $8$ of \cite{book1}. We reproduce them here for completeness. For brevity, we are quite loose with the constants and have not optimized them. 

\begin{align*}
   \mathbb{P} \left[ \text{UCB}^{(j)}_k(t) \geq \mu_{jk^{(j)}_*}-\varepsilon, I^{(j)}(t)=k\right] &\leq    \mathbb{P} \left[ \bigcup_{s=1}^t \left\{N_k^{(j)}(t) = s, \widehat{\mu}_{jk}(t-1) + \sqrt{\frac{2\alpha \log_2(t)}{s}} \geq \mu_{jk^{(j)}_*}-\varepsilon \right\}  \right], \\
   &\leq    \mathbb{P} \left[ \bigcup_{s=1}^T \left\{N_k^{(j)}(t) = s, \widehat{\mu}_{jk}(t-1) + \sqrt{\frac{2\alpha \log(T)}{s}} \geq \mu_{jk^{(j)}_*}-\varepsilon \right\}  \right], \\
   &=  \mathbb{P} \left[ \bigcup_{s=1}^T \left\{ \widehat{\mu}_{jk,s} + \sqrt{\frac{2\alpha \log(T)}{s}} \geq \mu_{jk^{(j)}_*}-\varepsilon \right\}  \right],\\
   &\leq \sum_{s=1}^T \mathbb{P} \left[  \widehat{\mu}_{jk,s} + \sqrt{\frac{2\alpha \log(T)}{s}} \geq \mu_{jk^{(j)}_*}-\varepsilon \right], \\
   &= \sum_{s=1}^T \mathbb{P} \left[  \widehat{\mu}_{jk,s} + \sqrt{\frac{2\alpha \log(T)}{s}} \geq \mu_{jk} + \Delta^{(j)}_k -\varepsilon \right], \\
   &\stackrel{(a)}{=} \sum_{s=1}^T \mathbb{P} \left[  \widehat{\mu}_{jk,s} - \mu_{jk} \geq -\sqrt{\frac{2\alpha \log_2(T)}{s}}  + \frac{\Delta^{(j)}_k}{2}  \right], \\
   &\leq \frac{9\alpha \log(T)}{(\Delta^{(j)}_k)^2} + 1 + \sum_{s = \lfloor \frac{9\alpha \log(T)}{(\Delta^{(j)}_k)^2} \rfloor}^T \mathbb{P} \left[ \widehat{\mu}_{jk,s} - \mu_{jk} \geq -\sqrt{\frac{2\alpha \log(T)}{s}}  + 1 +\frac{\Delta^{(j)}_k}{2}  \right], \\
   &\stackrel{(b)}{\leq} \frac{9\alpha \log(T)}{(\Delta^{(j)}_k)^2} +1 + \sum_{s = \lceil \frac{9\alpha \log(T)}{(\Delta^{(j)}_k)^2} \rceil}^T \mathbb{P} \left[ \widehat{\mu}_{jk,s} - \mu_{jk} \geq  0.025{\Delta^{(j)}_k}  \right], \\
   &\stackrel{(c)}{\leq} \frac{9\alpha \log(T)}{(\Delta^{(j)}_k)^2} +1 + \sum_{s = \lceil \frac{9\alpha \log(T)}{(\Delta^{(j)}_k)^2} \rceil}^{\infty} \exp \left( -s (\Delta^{(j)}_k)^2 \frac{1}{1600}\right) , \\
  &\leq \frac{9\alpha \log(T)}{(\Delta^{(j)}_k)^2} + 1 +\frac{178T^{-4\alpha}}{(\Delta^{(j)}_k)^2}.
\end{align*}
Step $(a)$ follows from the definition that $\varepsilon = \frac{\Delta^{(j)}_k}{2}$. In Step $(b)$, we use that fact that for $s \geq \lceil \frac{9\alpha \log(T)}{(\Delta^{(j)}_k)^2} \rceil$, $\sqrt{\frac{2\alpha \log_2(T)}{s}} \leq \frac{\sqrt{2}\Delta^{(j)}_k}{3}$ and $0.5 - \frac{\sqrt{2}}{3} \geq 0.025$. In step $(c)$, we use Hoeffding's inequality. Similarly, we bound the other inequality as 
\begin{align*}
     \sum_{t=1}^{T} \mathbb{P} \left[ \text{UCB}^{(j)}_{k^{(j)}_*}(t) \leq \mu_{jk^{(j)}_*}-\varepsilon, t \geq S_{\widetilde{\tau}^{(j)}}+1 \right] &\leq     \sum_{t=1}^{T} \mathbb{P} \left[ \text{UCB}^{(j)}_{k^{(j)}_*}(t) \leq \mu_{jk^{(j)}_*}-\varepsilon\right], \\
     & = \sum_{t=1}^T \mathbb{P} \left[ \widehat{\mu}_{jk^{(j)}_*}(t) + \sqrt{\frac{2\alpha \log(T)}{N_{k^{(j)}_*}^{(j)}(t)}} \leq \mu_{jk^{(j)}_*} - \varepsilon \right], \\
     &\leq \sum_{t=1}^T \sum_{s=1}^t \mathbb{P} \left[ \widehat{\mu}_{jk^{(j)}_*, s}+ \sqrt{\frac{2\alpha \log(T)}{s}} \leq \mu_{jk^{(j)}_*} - \varepsilon \right],\\
     &\leq \sum_{t=1}^T \sum_{s=1}^t \exp \left( -2s \left( \sqrt{\frac{2\alpha \log(T)}{s}}   + \varepsilon \right)^2\right), \\
     &\leq T^{2-4\alpha}.
\end{align*}
\end{proof}

From the above two propositions, we obtain the following corollary
\begin{corollary}
The regret of any agent $j \in [N]$, at time $T$ can be bounded s 
\begin{multline*}
           \mathbb{E} \left[R^{(j)}_T \right] \leq  \mathbb{E}[S_{\widetilde{\tau}^{(j)}}] + \underbrace{2(j-1)(\log_2(T)+1)}_{\text{Regret due to Communication}} +  \underbrace{ \sum_{j^{'}=1}^{j-1} \sum_{k \in \mathcal{A}^{*}_j}\left( \frac{9\alpha \log(T)}{(\Delta^{(j^{'})}_k)^2} + 1 +\frac{178T^{-4\alpha}}{(\Delta^{(j^{'})}_k)^2} + T^{2-4\alpha}\right)\mu_{jk^{(j)}_*} }_{\text{Regret due to Collision}} \\ + \underbrace{ \sum_{k \in \mathcal{A}^{*}_j \setminus \{ k^{(j)}_* \}}\left( \frac{9\alpha \log(T)}{(\Delta^{(j)}_k)^2} + 1 +\frac{178T^{-4\alpha}}{(\Delta^{(j)}_k)^2} + T^{2-4\alpha}\right)\Delta^{(j)}_k }_{\text{Regret due to Sub-optimal arm pull}}.
\end{multline*}
\label{cor:reg_intermediate}
\end{corollary}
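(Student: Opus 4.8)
The plan is to assemble the bound directly from the two preceding results, feeding the per-arm count estimates of Proposition~\ref{prop:expectation} into the three-way decomposition of Lemma~\ref{lem:reg_deecompose}. The additive term $\mathbb{E}[S_{\widetilde{\tau}^{(j)}}]$ and the communication contribution $2(j-1)(\log_2(T)+1)$ already appear verbatim in the decomposition, so they are copied over unchanged and need no further argument. The work is therefore confined to replacing the two remaining brackets, which are expressed in terms of the counts $N_k^{(\cdot)}(T) - N_k^{(\cdot)}(S_{\widetilde{\tau}^{(j)}})$, by their explicit upper bounds.

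For the \emph{sub-optimal arm pull} term the indices already match: the sum runs over $k \in \mathcal{A}^{*}_j \setminus \{k_*^{(j)}\}$ against the count $N_k^{(j)}(T) - N_k^{(j)}(S_{\widetilde{\tau}^{(j)}})$, which is exactly the quantity Proposition~\ref{prop:expectation} bounds for agent $j$. I would therefore apply that proposition term by term, weighting each summand by $\Delta_k^{(j)}$, which recovers the last bracket of the claim.

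The only step needing a short extra argument — and the main (mild) obstacle — is the \emph{collision} term, where the inner counts belong to a higher-ranked agent $j' \le j-1$ yet are still measured from the common offset $S_{\widetilde{\tau}^{(j)}}$, so Proposition~\ref{prop:expectation} does not apply verbatim. I would resolve this using the monotonicity $\widetilde{\tau}^{(j')} \le \widetilde{\tau}^{(j)}$, which is immediate from $\widetilde{\tau}^{(j)} = \max(\tau^{(1)},\dots,\tau^{(j)})$. Since $S_i$ is increasing in $i$ and $N_k^{(j')}(\cdot)$ is non-decreasing in time, this yields $N_k^{(j')}(T) - N_k^{(j')}(S_{\widetilde{\tau}^{(j)}}) \le N_k^{(j')}(T) - N_k^{(j')}(S_{\widetilde{\tau}^{(j')}})$, which is now in the form bounded by Proposition~\ref{prop:expectation} for the pair $(j',k)$. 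I would also verify admissibility of the summation range: for $j' \le j-1$ one has $\mathcal{A}^{*}_j = \mathcal{A}_*^{(j)} \subseteq \mathcal{A}_*^{(j')}$ while $k_*^{(j')} \notin \mathcal{A}_*^{(j)}$, so every $k \in \mathcal{A}^{*}_j$ lies in $\mathcal{A}_*^{(j')} \setminus \{k_*^{(j')}\}$ and the proposition is indeed applicable. Weighting each resulting bound by $\mu_{jk_*^{(j)}}$ and summing over $j'$ and $k$ produces the collision bracket; collecting the three brackets together with the two carried-over terms gives the statement of Corollary~\ref{cor:reg_intermediate}.
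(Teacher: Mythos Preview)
Your proposal is correct and follows essentially the same route as the paper's proof: the paper records the three facts that $\widetilde{\tau}^{(j')}\le\widetilde{\tau}^{(j)}$ almost surely (so one may shift the offset to $S_{\widetilde{\tau}^{(j')}}$), that every $k\in\mathcal{A}^{(j)}_*$ is sub-optimal for each $j'<j$, and then plugs the bound of Proposition~\ref{prop:expectation} into Lemma~\ref{lem:reg_deecompose}. Your write-up spells out these same steps in slightly more detail, including the set containment $\mathcal{A}^{(j)}_*\subseteq \mathcal{A}^{(j')}_*\setminus\{k_*^{(j')}\}$ that justifies applying the proposition for the pair $(j',k)$.
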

\begin{proof}
The Corollary follows from the following facts
\begin{itemize}
    \item For every $j^{'} < j$, $\widetilde{S}^{(j^{'})} \leq \widetilde{S}^{(j)}$ almost-surely.
    \item For every $j^{'} < j$, every arm in set $\mathcal{A}^{(j)}_*$ is sub-optimal.
    \item Plugging in the estimates from Proposition \ref{prop:expectation} in Lemma \ref{lem:reg_deecompose}.
\end{itemize}
\end{proof}

}

Thus, it remains to bound $\mathbb{E}[S_{\widetilde{\tau}^{(j)}}]$, which is the subject of the next section.

\subsection{Bound on Mean and Exponential Moment of $\widetilde{\tau}^{(j)}$}
\label{sec:appendix_s_tilde}

Since for all $i \in \mathbb{N}$, $S_i = N + (2^{i-1}-1)+(i-1)NK$, it suffices to bound the exponential moment $\mathbb{E}[2^{\widetilde{\tau}^{(j)}}]$ and the mean $\mathbb{E}[\widetilde{\tau}^{(j)}]$ to complete the regret guarantee.
 In order to do so, we first start by analyzing the probability that a phase is bad for an agent and then use that to bound the exponential moment of $\tau^{(j)}$.
 We shall now bound the probability that a phase is bad for a particular agent.
 
 \begin{lemma}
 For any phase $i > i^*$, any agent $j$ and arm $k \in \mathcal{A}^{(j)}_*\setminus \{ k^{(j)}_*\}$, we have 
 \begin{align*}
     \mathbb{P}[\chi_i^{(j)} = 0,i \geq \widetilde{\tau}^{(j-1)}] \leq 2N^2 \left( \frac{2}{e^5}\right)^i.
 \end{align*}
 \label{lem:chi_tail_bound}
 \end{lemma}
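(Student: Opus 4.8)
The plan is to condition on the higher-ranked agents having already ``locked in,'' reduce the statement to a clean single-agent UCB concentration bound over the length-$2^{i-1}$ regret-minimization (RM) block, and then exploit the inductive-blindness structure to make agent $j$'s reward stream independent of everything the higher agents do. The whole argument is naturally an induction on the rank $j$, so I may lean on the analogous bound for ranks $<j$ when handling their behaviour.

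First I would analyze the event $\{i \ge \widetilde{\tau}^{(j-1)}\}$. By definition of $\widetilde{\tau}^{(j-1)} = \max(\tau^{(1)},\dots,\tau^{(j-1)})$, on this event every agent $j' \le j-1$ is in a good phase, so each communicates its true stable arm, $\mathcal{O}_{i-1}^{(j')} = k^{(j')}_*$, whence agent $j$'s active set is frozen to $\mathcal{A}_i^{(j)} = [K]\setminus\{k_*^{(1)},\dots,k_*^{(j-1)}\} = \mathcal{A}^*_j$; that is, condition~1 of ``Good for agent $j$'' holds automatically. The boundary index $i = \widetilde{\tau}^{(j-1)}$, where the phase-$(i-1)$ communication need not yet be correct, I would charge to a bad phase $i-1$ of a higher agent, whose probability is controlled by the inductive hypothesis and does not dominate. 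Consequently $\{\chi_i^{(j)} = 0\}$ forces a violation of condition~2 (some sub-optimal arm $k$ is matched more than $10\alpha i/(\Delta_k^{(j)})^2$ times) or condition~3 (the most-matched arm is not $k^{(j)}_*$), and it remains to bound the probability of these UCB failures.

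Next I would control collisions. Because $\mathcal{A}^*_j$ excludes the dominated arms $k_*^{(1)},\dots,k_*^{(j-1)}$, agent $j$ never plays an arm a higher agent occupies as its stable match, so a collision can only occur when some higher agent plays a non-stable-match arm. On $\{i \ge \widetilde{\tau}^{(j-1)}\}$ the good-phase conditions bound these stray plays, so the number of RM-block rounds in which agent $j$ is blocked is deterministically at most $C_i = O(NK\alpha i/\Delta^2)$; by the very definition of $i^*$ one has $C_i \le 2^{i-1}$ whenever $i > i^*$. This reduces condition~3 to condition~2 plus the collision budget: the number of matches to $k^{(j)}_*$ is at least $2^{i-1} - C_i - \sum_{k\ne k^{(j)}_*} 10\alpha i/(\Delta^{(j)}_k)^2$, which for $i>i^*$ exceeds every sub-optimal match count, so $k^{(j)}_*$ is the most-matched arm.

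The heart of the argument is then a per-phase UCB tail bound for condition~2, and the main obstacle is decoupling agent $j$'s sampling from the higher agents' randomness. I would use the explicit product space $\mathbb{Y}$: by inductive blindness the higher agents' entire play sequence is a function of $(Y_k^{(j')})_{j'<j}$ alone and is independent of agent $j$'s reward table $(Y_k^{(j)})$, so conditioning on that play sequence fixes the sparse blocking schedule, after which agent $j$ runs a genuine UCB on its own i.i.d.\ matched samples. A computation mirroring Proposition~\ref{prop:expectation} and \cite{auer-ucb}, but with the horizon $T$ replaced by the phase length $\sim 2^{i}$ and the confidence $\log T$ replaced by $\log t \approx i\log 2$, shows each sub-optimal arm is over-matched with probability $\lesssim e^{-5i}$; a union bound over the $\le 2^{i-1}$ rounds and the $O(N)$ relevant (agent, arm) pairs yields the claimed order $2N^2(2/e^5)^i$. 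The delicate point is precisely that matches, not plays, drive the UCB index, so a higher agent can ``pin'' agent $j$ at a blocked arm for a stretch; the collision budget $C_i$ is exactly what bounds those wasted rounds and keeps the concentration clean.
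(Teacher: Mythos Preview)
Your proposal follows essentially the same route as the paper's proof: reduce $\{\chi_i^{(j)}=0,\,i\ge\widetilde{\tau}^{(j-1)}\}$ to over-matching events of the form $\{N_k^{(j')}[i]-N_k^{(j')}[i-1]>\lceil 10\alpha i/(\Delta_k^{(j')})^2\rceil\}$ via the collision-budget/pigeonhole argument (the paper packages this as Proposition~\ref{prop:chi_events}), bound each such event by the classical UCB large-deviation estimate $\lesssim e^{-5i}$, and union-bound over the $O(2^{i})$ rounds in the phase and the relevant $(j',k)$ pairs. Your explicit use of the product space $\mathbb{Y}$ to decouple agent $j$'s reward stream from higher agents' randomness, and your flagging of the boundary case $i=\widetilde{\tau}^{(j-1)}$ where condition~1 need not yet hold, are points the paper glosses over; otherwise the arguments coincide.
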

 \begin{proof}
 The proof follows from the basic properties of the UCB algorithm, which we can bound as follows. 
 Recall the notation that for any agent $j \in [N]$, phase $i \in \mathbb{N}$ and arm $k \in [K]$, the quantity $N_k^{(j)}[i]$ denotes the number of times agent $j$ was matched to arm $k$ in the regret minimization blocks upto and including phase $i$.
 For any phase $i \geq i^*$, agent $j$ and arm $k \in \mathcal{A}^{(j)}_*\setminus \{k^{(j)}_*\}$, we have 
 \begin{multline*}
     \mathbb{P}\left[N_k^{(j)}[i] - N_k^{(j)}[i-1] > \bigg\lceil \frac{10 \alpha i}{(\Delta_k^{(j)})^2} \bigg\rceil, i \geq \widetilde{\tau}^{(j-1)} \right] \leq \\ \mathbb{P} \left[ \bigcup_{t \geq S_{i-1} + \lceil \frac{10 \alpha i}{(\Delta_k^{(j)})^2}\rceil}^{S_i}  N_k^{(j)}(t) = \bigg\lceil\frac{10 \alpha i}{(\Delta_k^{(j)})^2}\bigg\rceil + N_k^{(j)}[i-1],I^{(j)}(t) = k,i \geq \widetilde{\tau}^{(j-1)}\right].
 \end{multline*}
 Since $N_k^{(j)}[i-1] \geq 0$, the above can be simplified to 
  \begin{multline}
     \mathbb{P}\left[N_k^{(j)}[i] - N_k^{(j)}[i-1] > \bigg\lceil \frac{10 \alpha i}{(\Delta_k^{(j)})^2} \bigg\rceil, i \geq \widetilde{\tau}^{(j-1)} \right] \leq \\ \mathbb{P} \left[ \bigcup_{t \geq S_{i-1} + \lceil \frac{10 \alpha i}{(\Delta_k^{(j)})^2}\rceil}^{S_i}  N_k^{(j)}(t) \geq \bigg\lceil\frac{10 \alpha i}{(\Delta_k^{(j)})^2}\bigg\rceil ,i \geq \widetilde{\tau}^{(j-1)}\right].
     \label{eqn:inter1}
 \end{multline}

 Now, by applying an union bound to the RHS, we obtain from the preceding display that 
 \begin{multline}
    \mathbb{P} \left[ \bigcup_{t \geq S_{i-1} + \lceil \frac{10 \alpha i}{(\Delta_k^{(j)})^2}\rceil}^{S_i}  N_k^{(j)}(t) \geq \bigg\lceil\frac{10 \alpha i}{(\Delta_k^{(j)})^2}\bigg\rceil,I^{(j)}(t) = k,i \geq \widetilde{\tau}^{(j-1)}\right] \\ \leq   \sum_{t=S_{i-1}}^{S_i} \mathbb{P} \left[N_k^{(j)}(t) \geq \bigg\lceil\frac{10 \alpha i}{(\Delta_k^{(j)})^2}\bigg\rceil,I^{(j)}(t) = k, i \geq \widetilde{\tau}^{(j-1)}
     \right].
     \label{eqn:inter2}
 \end{multline}
 The classical large-deviation estimate for UCB from \cite{auer-ucb} gives that 
 \begin{align}
     \mathbb{P} \left[N_k^{(j)}(t) \geq \bigg\lceil\frac{10 \alpha i}{(\Delta_k^{(j)})^2}\bigg\rceil,I^{(j)}(t) = k, i \geq \widetilde{\tau}^{(j-1)}
     \right] \leq 2e^{-5i},
     \label{eqn:inter3}
 \end{align}
 for all times $t \in \{S_{i-1},\cdots,S_{i}\}$. In words, this estimate gives that UCB will play a sub-optimal arm with very low probability, on the event that the sub-optimal arm has been played sufficiently many times.
 We can use that estimate, since on the event that $i \geq \widetilde{\tau}^{(j-1)}$, we have that the set of active arms of agent $j$ in phase $i$, denoted by $\mathcal{A}^{(j)}_i = \mathcal{A}^{(j)}_*$. Thus, arm $k^{(j)}_*$ is the best arm for agent $j$ in phase $i$.
 Now, combining Equations (\ref{eqn:inter1}),(\ref{eqn:inter2}),(\ref{eqn:inter3}), we get
 \begin{align}
 \mathbb{P}\left[N_k^{(j)}[i] - N_k^{(j)}[i-1] > \bigg\lceil \frac{10 \alpha i}{(\Delta_k^{(j)})^2} \bigg\rceil, i \geq \widetilde{\tau}^{(j-1)} \right] 
     &\stackrel{}{\leq} \sum_{t=S_{i-1}}^{S_i} 2e^{-5i} \leq 2 \left( \frac{2}{e^5} \right)^i \label{eqn:chi_tail}.
 \end{align}
To conclude the proof, notice the following fact.
 
 \begin{proposition}
 For every $i \geq i^*$,
 \begin{align*}
    \{ \chi_i^{(j)} = 0, i\geq \widetilde{\tau}^{(j-1)} \} \subseteq  \bigcup_{j'=1}^{j} \bigcup_{k \in \mathcal{A}^{(j')}_*\setminus \{k^{(j')}_* \}} \left \{[N_k^{(j^{'})}[i] - N_k^{(j^{'})}[i-1] \geq \bigg\lceil \frac{10 \alpha i}{(\Delta_k^{(j^{'})})^2} \bigg\rceil , i \geq \widetilde{\tau}^{(j-1)} \right \}  ,
 \end{align*}
 where $i^*$ is defined in Equation (\ref{eqn:i_star_defn}).
 \label{prop:chi_events}
 \end{proposition}
 \begin{proof}
 It suffices to establish that 
 \begin{align*}
 \bigcap_{j'=1}^{j} \bigcap_{k \in \mathcal{A}_*^{(j')}\setminus \{k^{(j')}_* \}} \left \{[N_k^{(j^{'})}[i] - N_k^{(j^{'})}[i-1] \leq \bigg\lceil \frac{10 \alpha i}{(\Delta_k^{(j^{'})})^2} \bigg\rceil , i \geq \widetilde{\tau}^{(j-1)} \right \}  \subseteq        \{ \chi_i^{(j)} = 1, i\geq \widetilde{\tau}^{(j-1)} \} .
 \end{align*}
 This follows as, the phase $i^*$ is such that $\frac{10 \alpha i}{\Delta^2}NK < 2^{i-1}$. Suppose all events on the LHS hold. Then, agent $j$ is matched at-most $\frac{10 \alpha i}{(\Delta^{(j)}_k)^2}$ times to any arm $\mathcal{A}^{(j)}_*\setminus \{k^{(j)}_*\}$ (the sub-optimal arms).
 This in turn is upper bounded by $\frac{10 \alpha i}{\Delta^2}$. 
 Since there are $K-j$ arms in the set $\mathcal{A}^{(j)}_*\setminus \{k^{(j)}_*\}$, the total number of sub-optimal arm pulls is at-most $(K-j)\frac{10 \alpha i}{\Delta^2}$.
 \\

 In order to bound the total number of collisions agent $j$ will face, we make use of two simple observations. First is that, under the events on the LHS, every agent $j^{'} < j$ will match no more than $\frac{10 \alpha i}{(\Delta^{(j^{'})}_k)2}$ times with arm $k$, for all $k \in \mathcal{A}_*^{(j)}$. Second, is that if agent $j$ faces a collision at an arm $k \in \mathcal{A}_*^{(j)}$, then it must be the case that exactly one agent ranked $1$ through to $j-1$ must have been matched to arm $k$ in the same slot. 
 These two observations give that the total number of times agent $j$ will face a collision at an arm $k \in \mathcal{A}_*^{(j)}$ is at-most the sum of times arm $k$ is matched to agents ranked $1$ through to $j-1$, which in turn is upper bounded by $(j-1)\frac{10 \alpha i}{\Delta^2}$. Since therer are exactly $K-j+1$ arms in $\mathcal{A}_*^{(j)}$, the total number of collisions incurred by agent $j$ in phase $i$, when the events on the LHS hold is upper bounded by $(K-j+1)(j-1)\frac{10 \alpha i}{\Delta^2}$, which in turn is upper bounded by $NK\frac{10 \alpha i}{\Delta^2}$.
 \\
 
 However, as there is at-least $20NK\frac{ \alpha i}{\Delta^2}$ time slots in phase $i$, the preceding argument yields that agent $j$ must be matched to arm $k_*^{(j)}$ at-least $20NK\frac{ \alpha i}{\Delta^2} - NK\frac{10 \alpha i}{\Delta^2} - K\frac{10\alpha i}{\Delta^2} \geq NK \frac{10 \alpha i}{\Delta^2}$ times. Thus, agent $j$ is matched to arm $k^{(j)}_*$ the most number of times in phase $i$, i.e., $\chi_i^{(j)} = 1$.
 
 \end{proof}
 Thus, from Proposition \ref{prop:chi_events}, and applying an union bound using Equation (\ref{eqn:chi_tail}), we get
 \begin{align*}
     \mathbb{P}[\chi_i^{(j)} = 0, i \geq \widetilde{\tau}^{(j-1)}] &\leq 2jK \left( \frac{2}{e}\right)^i,\\
     &\leq 2NK \left( \frac{2}{e}\right)^i.
 \end{align*}
 \end{proof}
 
 We use this to now compute the mean of $\widetilde{\tau}^{(j)}$.
 
 \begin{proposition}
 For every $j \in [N]$, we have
 \begin{align*}
     \mathbb{E}[\widetilde{\tau}^{(j)}] \leq j(i^* + 3NK),
 \end{align*}
 where $i^*$ is defined in Equation (\ref{eqn:i_star_defn}).
 \label{prop:mean}
 \end{proposition}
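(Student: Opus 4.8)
The plan is to prove the bound by induction on $j$, controlling the increment $\mathbb{E}[\widetilde{\tau}^{(j)}] - \mathbb{E}[\widetilde{\tau}^{(j-1)}]$ by $i^* + 3NK$, with the convention $\widetilde{\tau}^{(0)} := 0$. The two ingredients I would use are the tail-sum identity $\mathbb{E}[X] = \sum_{i \ge 1}\mathbb{P}[X \ge i]$ applied to the nonnegative integer random variable $\widetilde{\tau}^{(j)}$, and the observation that $\tau^{(j)}$ is exactly one more than the last phase that is bad for agent $j$; formally, $\{\tau^{(j)} \ge i\} = \{\exists\, l \ge i-1 : \chi_l^{(j)} = 0\}$, directly from the definition $\tau^{(j)} = \inf\{i : \prod_{l\ge i}\chi_l^{(j)} = 1\}$.

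First I would write $\widetilde{\tau}^{(j)} = \max(\widetilde{\tau}^{(j-1)}, \tau^{(j)})$, which gives the event inclusion $\{\widetilde{\tau}^{(j)} \ge i\} \subseteq \{\widetilde{\tau}^{(j-1)} \ge i\} \cup \{\tau^{(j)} \ge i,\ \widetilde{\tau}^{(j-1)} < i\}$. Summing the union bound over $i$ and using the tail-sum identity yields
\begin{align*}
\mathbb{E}[\widetilde{\tau}^{(j)}] \le \mathbb{E}[\widetilde{\tau}^{(j-1)}] + \sum_{i \ge 1}\mathbb{P}\!\left[\tau^{(j)} \ge i,\ \widetilde{\tau}^{(j-1)} < i\right],
\end{align*}
so it only remains to bound the increment by $i^* + 3NK$.

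The crux is to estimate this increment using only phases where Lemma \ref{lem:chi_tail_bound} is available, i.e. phases $l$ with $l \ge \widetilde{\tau}^{(j-1)}$, for which the active set of agent $j$ is already frozen to $\mathcal{A}^{(j)}_*$. Here I would invoke the key inclusion
\begin{align*}
\{\tau^{(j)} \ge i,\ \widetilde{\tau}^{(j-1)} < i\} \subseteq \bigcup_{l \ge i-1}\{\chi_l^{(j)} = 0,\ l \ge \widetilde{\tau}^{(j-1)}\},
\end{align*}
valid because $\tau^{(j)} \ge i$ forces a bad phase at some $l \ge i-1$, while $\widetilde{\tau}^{(j-1)} < i \le l+1$ forces $l \ge \widetilde{\tau}^{(j-1)}$. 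Splitting the sum at $i^*$, I would bound the first $i^*+1$ terms trivially by $1$ each, and for $i > i^*+1$ (so that $l \ge i-1 > i^*$ always) apply the union bound followed by Lemma \ref{lem:chi_tail_bound}, producing a convergent geometric double sum of the shape $\sum_{i>i^*+1}\sum_{l\ge i-1} 2N^2(2/e^5)^l$, whose value is $O(N^2) \le NK$ since $N \le K$. Collecting terms gives an increment of at most $i^* + 1 + 2NK \le i^* + 3NK$, closing the induction. The base case $j = 1$ is the identical computation with $\widetilde{\tau}^{(0)} = 0$: agent $1$ always plays from the correct active set $[K]$, so the conditioning in Lemma \ref{lem:chi_tail_bound} is automatically satisfied.

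The main obstacle I anticipate is precisely this sup-to-tail translation coupled with the inductive conditioning. Since $\tau^{(j)}$ is governed by the \emph{last} bad phase (a supremum) rather than by a count of bad phases, any naive bound of the form ``$\tau^{(j)} \le \widetilde{\tau}^{(j-1)} + (\text{number of bad phases})$'' fails deterministically. The resolution is to pass through $\mathbb{P}[\tau^{(j)} \ge i]$ via the tail-sum identity and the inclusion above, which simultaneously absorbs the supremum and restricts attention to phases $l \ge \widetilde{\tau}^{(j-1)}$ where the UCB concentration estimate of Lemma \ref{lem:chi_tail_bound} holds. Matching the index ranges carefully, so that $l \ge i-1 > i^*$ whenever the geometric bound is invoked, is the one place where attention to off-by-one constants is required.
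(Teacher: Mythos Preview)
Your proposal is correct and follows essentially the same approach as the paper: tail-sum identity, the recursion $\widetilde{\tau}^{(j)} = \max(\widetilde{\tau}^{(j-1)},\tau^{(j)})$, splitting the sum at $i^*$, and invoking Lemma~\ref{lem:chi_tail_bound} on the remaining terms. If anything, you are more explicit than the paper in passing from $\{\tau^{(j)}\ge i,\ \widetilde{\tau}^{(j-1)}<i\}$ to a union over future bad phases $l\ge i-1$ before applying the lemma; the paper elides this union bound and absorbs the resulting constant into the final numerical inequality.
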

 
 \begin{proof}
 \begin{align*}
     \mathbb{E}[\widetilde{\tau}^{(j)}] &= \sum_{x \geq 1}\mathbb{P}[\widetilde{\tau}^{(j)} \geq x],\\
     &= \sum_{x \geq 1}\mathbb{P}[\widetilde{\tau}^{(j)} \geq x, \widetilde{\tau}^{(j-1)} > x] + \sum_{x \geq 1}\mathbb{P}[\widetilde{\tau}^{(j)} \geq x, \widetilde{\tau}^{(j-1)} \leq x],\\
     &= \sum_{x \geq 1}\mathbb{P}[\widetilde{\tau}^{(j-1)} > x] + \sum_{x \geq 1}\mathbb{P}[\widetilde{\tau}^{(j)} \geq x, \widetilde{\tau}^{(j-1)} \leq x],\\
     &\leq \mathbb{E}[\widetilde{\tau}^{(j-1)}] + i^* + \sum_{x \geq i^*}\mathbb{P}[\widetilde{\tau}^{(j)} \geq x, \widetilde{\tau}^{(j-1)} \leq x],\\
     &\stackrel{(a)}{\leq} \mathbb{E}[\widetilde{\tau}^{(j-1)}] + i^* + \sum_{x \geq i^*} 2N^2 \left( \frac{2}{e^5} \right)^x,\\
     &\leq \mathbb{E}[\widetilde{\tau}^{(j-1)}] + i^* + 3NK,\\
     &\stackrel{(b)}{\leq} j(i^* + 3NK).
 \end{align*}
 Step $(a)$ follows Lemma \ref{lem:chi_tail_bound} and step $(b)$ follows from the fact that $\widetilde{\tau}^{(0)} = 0$ almost-surely.
 \end{proof}
 Similarly, we can compute the exponential moment of $\widetilde{\tau}^{(j)}$.
 \begin{proposition}
  For every $j \in [N]$, we have
 \begin{align*}
     \mathbb{E}[2^{\widetilde{\tau}^{(j)}}] \leq 1 + j(2^{i^*} + 4NK),
 \end{align*}
  where $i^*$ is defined in Equation (\ref{eqn:i_star_defn}).
   \label{prop:exponential_bound}
 \end{proposition}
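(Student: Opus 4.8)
The plan is to mirror the proof of Proposition~\ref{prop:mean}, replacing the ordinary tail-sum for the mean with the corresponding identity for the geometric weight, and then to solve the resulting recursion. Writing $a_j := \mathbb{E}[2^{\widetilde{\tau}^{(j)}}]$ and using the layer-cake identity $2^m = 1 + \sum_{y=1}^m 2^{y-1}$, valid for every integer $m \geq 0$, taking expectations gives
$$a_j = 1 + \sum_{y\geq 1} 2^{y-1}\,\mathbb{P}[\widetilde{\tau}^{(j)} \geq y].$$
The goal is the one-step bound $a_j \leq a_{j-1} + 2^{i^*} + 4NK$; unrolling this from $a_0 = \mathbb{E}[2^{\widetilde{\tau}^{(0)}}] = 1$ (since $\widetilde{\tau}^{(0)} = 0$ almost surely) then yields $a_j \leq 1 + j(2^{i^*}+4NK)$, which is the claim.

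To obtain the recursion I would split the tail event exactly as in Proposition~\ref{prop:mean}, using $\widetilde{\tau}^{(j)} = \max(\widetilde{\tau}^{(j-1)},\tau^{(j)})$, into $\mathbb{P}[\widetilde{\tau}^{(j)} \geq y] = \mathbb{P}[\widetilde{\tau}^{(j)} \geq y,\, \widetilde{\tau}^{(j-1)} > y] + \mathbb{P}[\widetilde{\tau}^{(j)} \geq y,\, \widetilde{\tau}^{(j-1)} \leq y]$. For the first piece, $\{\widetilde{\tau}^{(j-1)} > y\} \subseteq \{\widetilde{\tau}^{(j)} \geq y\}$ because $\widetilde{\tau}^{(j)} \geq \widetilde{\tau}^{(j-1)}$, so it equals $\mathbb{P}[\widetilde{\tau}^{(j-1)} > y] \leq \mathbb{P}[\widetilde{\tau}^{(j-1)} \geq y]$; weighting by $2^{y-1}$ and summing recovers exactly $a_{j-1} - 1 \leq a_{j-1}$. (One could keep a sharper factor of $1/2$ here from an index shift, but the cruder bound already gives the stated linear-in-$j$ estimate and keeps the parallel with Proposition~\ref{prop:mean}.)

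For the second piece I would split the sum over $y$ at the threshold $i^*$ defined in Equation~(\ref{eqn:i_star_defn}). For $y \leq i^*$ I bound the probability by $1$, contributing at most $\sum_{y=1}^{i^*} 2^{y-1} = 2^{i^*}-1 \leq 2^{i^*}$. For $y > i^*$ the event forces a bad phase for agent $j$ at some index $\geq y-1 \geq i^*$ while $\widetilde{\tau}^{(j-1)} \leq y$, so a union bound together with Lemma~\ref{lem:chi_tail_bound} gives $\mathbb{P}[\widetilde{\tau}^{(j)} \geq y,\,\widetilde{\tau}^{(j-1)} \leq y] \leq C\,N^2 (2/e^5)^y$ for an absolute constant $C$. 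Hence this tail contributes $\sum_{y > i^*} 2^{y-1} C N^2 (2/e^5)^y = \tfrac{C}{2} N^2 \sum_{y>i^*}(4/e^5)^y$, a convergent geometric series bounded by a constant multiple of $N^2 \leq NK$. Collecting the two pieces, the second term is at most $2^{i^*} + 4NK$, and combining with the first piece gives $a_j = 1 + (\text{first}) + (\text{second}) \leq a_{j-1} + 2^{i^*}+4NK$, as required.

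The crux of the argument, and the only place where anything beyond bookkeeping happens, is the convergence of the geometric series in the second piece. The exponential-moment weight $2^{y-1}$ (forced on us because the phase lengths double, so that $\mathbb{E}[S_{\widetilde{\tau}^{(j)}}]$ is controlled by $\mathbb{E}[2^{\widetilde{\tau}^{(j)}}]$) is multiplied against the bad-phase probability $(2/e^5)^y$ from Lemma~\ref{lem:chi_tail_bound}, and summability demands $2 \cdot (2/e^5) = 4/e^5 < 1$. This is precisely why Lemma~\ref{lem:chi_tail_bound} was engineered to decay like $e^{-5i}$ (through the matching threshold $10\alpha i/\Delta^2$ and $\alpha \geq 2$): a per-phase failure probability decaying merely like $2^{-i}$ would make the exponential moment infinite, and the whole phase-doubling scheme would then fail to deliver a finite $\mathbb{E}[S_{\widetilde{\tau}^{(j)}}]$.
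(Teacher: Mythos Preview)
Your proposal is correct and follows essentially the same approach as the paper. The only cosmetic difference is that you apply the layer-cake formula over the integer phase index $y$ with weights $2^{y-1}$, whereas the paper sums $\mathbb{P}[2^{\widetilde{\tau}^{(j)}}\geq x]$ over $x\geq 1$ and then substitutes $y=\log_2 x$; grouping the paper's sum over the dyadic blocks $x\in[2^{y-1}+1,2^{y}]$ recovers your identity exactly, and both routes yield the same recursion $a_j \leq a_{j-1} + 2^{i^*} + 4NK$.
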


 \begin{proof}
 \begin{align*}
     \mathbb{E}[2^{\widetilde{\tau}^{(j)}}] &= \sum_{x \geq 1}\mathbb{P}[2^{\widetilde{\tau}^{(j)}} \geq x],\\
     &= \sum_{x \geq 1}\mathbb{P}[ \widetilde{\tau}^{(j)} \geq \log_2(x) ],\\
     &= \sum_{x \geq 1}\mathbb{P}[ \widetilde{\tau}^{(j)} \geq \log_2(x), \widetilde{\tau}^{(j-1)} > \log_2(x) ] + \sum_{x \geq 1}\mathbb{P}[ \widetilde{\tau}^{(j)} \geq \log_2(x), \widetilde{\tau}^{(j-1)} \leq \log_2(x)  ],\\
         &= \sum_{x \geq 1}\mathbb{P}[  \widetilde{\tau}^{(j-1)} > \log_2(x) ] + \sum_{x \geq 1}\mathbb{P}[ \widetilde{\tau}^{(j)} \geq \log_2(x), \widetilde{\tau}^{(j-1)} \leq \log_2(x)  ],\\
         &= \mathbb{E}[2^{\widetilde{\tau}^{(j-1)}}] + \sum_{x \geq 1}\mathbb{P}[ \widetilde{\tau}^{(j)} \geq \log_2(x), \widetilde{\tau}^{(j-1)} \leq \log_2(x)  ],\\
         &\stackrel{(a)}{\leq} \mathbb{E}[2^{\widetilde{\tau}^{(j-1)}}] + 2^{i^*} + \sum_{x \geq 2^{i^*}} 2NK \left( \frac{2}{e^5}\right)^{\log_2x},\\
         &= \mathbb{E}[2^{\widetilde{\tau}^{(j-1)}}] + 2^{i^*} + \sum_{x \geq 2^{i^*}} 2NK x^{1 - \frac{5}{\ln(2)}},\\
         &\leq \mathbb{E}[2^{\widetilde{\tau}^{(j-1)}}] + 2^{i^*} + 4NK,\\
         &\stackrel{(b)}{\leq} 1+j(2^{i^*}+4NK).
 \end{align*}
 Step $(a)$ follows from Lemma \ref{lem:chi_tail_bound} and step $(b)$ follows from the fact that $\widetilde{\tau}^{(0)} = 0$ almost-surely. 
 \end{proof}
 
 \begin{corollary}
 For every $j \in [N]$, we have 
 \begin{align*}
     \mathbb{E}[S_{\widetilde{\tau}^{(j)}}] &\leq N + j(2^{i^*} + 4NK) + jNK(i^* + 3NK),\\
     &= N + j(2^{i^*}+i^* + 4(NK)^2 + 3NK),
 \end{align*}
  where $i^*$ is defined in Equation (\ref{eqn:i_star_defn}).
  \label{cor:reg_s_tilde}
 \end{corollary}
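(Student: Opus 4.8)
The plan is to treat $S_{\widetilde{\tau}^{(j)}}$ as a deterministic function of the random phase index $\widetilde{\tau}^{(j)}$ and reduce the claim to the two moment bounds already in hand. Recalling the appendix definition $S_i = N + (2^{i-1}-1) + (i-1)NK$, I would first substitute $i = \widetilde{\tau}^{(j)}$ to write
\[
  S_{\widetilde{\tau}^{(j)}} = (N-1) + 2^{\widetilde{\tau}^{(j)}-1} + NK\,(\widetilde{\tau}^{(j)}-1),
\]
expressing the quantity of interest as an affine combination of the exponential term $2^{\widetilde{\tau}^{(j)}-1}$ and the linear term $\widetilde{\tau}^{(j)}$, plus a constant. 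Note that Propositions \ref{prop:mean} and \ref{prop:exponential_bound} already furnish finite moments, so $\widetilde{\tau}^{(j)} < \infty$ almost surely and the substitution is legitimate.

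Next, applying linearity of expectation splits $\mathbb{E}[S_{\widetilde{\tau}^{(j)}}]$ into three pieces. For the exponential piece I would use $\mathbb{E}[2^{\widetilde{\tau}^{(j)}-1}] = \tfrac{1}{2}\mathbb{E}[2^{\widetilde{\tau}^{(j)}}]$ and invoke Proposition \ref{prop:exponential_bound} to bound it by $\tfrac{1}{2}\big(1 + j(2^{i^*}+4NK)\big)$. For the linear piece I would invoke Proposition \ref{prop:mean}, which gives $NK\,\mathbb{E}[\widetilde{\tau}^{(j)}-1] \leq jNK(i^*+3NK) - NK$. Collecting the three contributions then yields
\[
  \mathbb{E}[S_{\widetilde{\tau}^{(j)}}] \leq \Big(N - \tfrac{1}{2} - NK\Big) + \tfrac{j}{2}\big(2^{i^*}+4NK\big) + jNK\,(i^*+3NK).
\]

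Finally, since the correction $-\tfrac{1}{2} - NK$ is negative and $\tfrac{j}{2}(2^{i^*}+4NK) \leq j(2^{i^*}+4NK)$, each term is dominated by the corresponding term in the claimed bound, so the stated inequality $\mathbb{E}[S_{\widetilde{\tau}^{(j)}}] \leq N + j(2^{i^*}+4NK) + jNK(i^*+3NK)$ follows; the second equality in the statement is a trivial regrouping of $jNK \cdot 3NK = 3j(NK)^2$ together with $jNK\,i^*$ absorbed into the displayed form. The proof is essentially bookkeeping rather than a genuine difficulty: the only points requiring care are using the appendix's definition $S_i = N + (2^{i-1}-1)+(i-1)NK$ (rather than the slightly tighter $(N-1)K$ version from the main text), and recognizing that \emph{both} moment bounds are needed because $S_i$ grows simultaneously exponentially and linearly in $i$. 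The factor-of-two slack inherited from halving the exponential moment confirms the bound is comfortably satisfied.
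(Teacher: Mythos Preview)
Your proposal is correct and follows essentially the same approach as the paper: the paper's own proof consists of a single sentence stating that the result follows from the definition $S_i = N + (2^{i-1}-1)+(i-1)NK$ together with Propositions~\ref{prop:mean} and~\ref{prop:exponential_bound}, and you simply spell out the linearity-of-expectation bookkeeping that makes this explicit. Your observation about the factor-of-two slack and the careful use of the appendix's $NK$ (rather than $(N-1)K$) version of $S_i$ are accurate refinements of what the paper leaves implicit.
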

 \begin{proof}
 We know that for any $i \in \mathbb{N}$, $S_i := N + (2^{i-1}-1)+(i-1)NK$. The result then follows from Propositions \ref{prop:mean} and \ref{prop:exponential_bound}.
 \end{proof}

\section{Incentive Compatibility - Proof of Proposition \ref{prop:incentive}}
\label{appendix:proof-incentive}

We restate Proposition \ref{prop:incentive} for the reader's convenience.
\begin{proposition}
The \name\; algorithm profile is $\varepsilon : (\varepsilon_j)_{j=1}^N$ stable  where, for all $j\in[N]$, $\varepsilon_j = \sum_{l=1}^{j-1}\mathbf{1}_{(\mu_{jl} > \mu_{jk^*_j})}\frac{\mu_{jl}}{\mu^{(l)}}\mathbb{E}[R_T^{(l)}] + \mathbb{E}[R_T^{(j)}]$, where for all $j' \in [N]$, $\mathbb{E}[R_T^{(j')}]$ is given in Equation (\ref{eqn:main_thm}).
\end{proposition}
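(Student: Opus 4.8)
The plan is to measure both rewards against the common oracle baseline $T\mu_{jk^{(j)}_*}$. By the definition of regret in Section~\ref{sec:problem_setting}, under the \name\ profile $s$ we have the exact identity $\mathbb{E}[\text{Rew}_T^{(j)}(s)] = T\mu_{jk^{(j)}_*} - \mathbb{E}[R_T^{(j)}]$, since $\text{Rew}_T^{(j)}(s)=\sum_{t=1}^T \mu_{jI^{(j)}(t)}\mathbf{1}_{M_{I^{(j)}(t)}(t)=j}$. Hence it suffices to prove the one-sided bound $\mathbb{E}[\text{Rew}_T^{(j)}(s_{-j},s')] \leq T\mu_{jk^{(j)}_*} + \sum_{l=1}^{j-1}\mathbf{1}_{(\mu_{jl}>\mu_{jk^*_j})}\tfrac{\mu_{jl}}{\mu^{(l)}}\mathbb{E}[R_T^{(l)}]$ for an \emph{arbitrary} deviation $s'$; subtracting the identity then produces the claimed $\varepsilon_j$ and the additive $\mathbb{E}[R_T^{(j)}]$ summand.

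\textbf{Reducing the deviation reward to the play of the higher-ranked agents.} The first step is to invoke inductive blindness. Because matching gives priority to the lowest index ($M_k(t)=\min\{j:I^{(j)}(t)=k\}$), the arm process of agents $1,\dots,j-1$ is a measurable function of their own observations alone and is therefore \emph{unchanged in distribution} when agent $j$ switches from $s_j$ to $s'$; agents $j+1,\dots,N$ are ranked below $j$ and can never block it. Consequently, at every round agent $j$ can be matched only to a \emph{free} arm (one not played by any of agents $1,\dots,j-1$), whose distribution is exactly that induced by \name. Splitting the matched reward round-by-round on the value of the matched arm, the rounds in which agent $j$ is matched to some $k$ with $\mu_{jk}\leq\mu_{jk^{(j)}_*}$ contribute at most $T\mu_{jk^{(j)}_*}$ in total; every other round yields $\mu_{jk}$ for a strictly better arm $\mu_{jk}>\mu_{jk^{(j)}_*}$, and since $k^{(j)}_*$ is by definition the best \emph{non-dominated} arm, any such $k$ must be a dominated arm $k=k^{(l)}_*$ with $l<j$ and $\mu_{jk^{(l)}_*}=\mu_{jl}>\mu_{jk^{(j)}_*}$.

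\textbf{Charging the stolen reward to the owner's regret.} Fixing such an $l$, agent $j$ can be matched to $k^{(l)}_*$ only in rounds where agent $l$ has itself \emph{vacated} $k^{(l)}_*$ (otherwise the higher-ranked agent $l$ would capture the match). I would therefore bound the number of such rounds by the number of rounds in which agent $l$ fails to hold its stable arm $k^{(l)}_*$, and then relate this count to $\mathbb{E}[R_T^{(l)}]$: each vacating round diverts agent $l$ from a reward of $\mu^{(l)}=\mu_{lk^{(l)}_*}$, which supplies the reward-to-loss conversion factor $1/\mu^{(l)}$, while each stolen match is worth $\mu_{jl}$ to agent $j$. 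Multiplying by $\mu_{jl}$, summing over the relevant $l$, and restricting to the case $\mu_{jl}>\mu_{jk^*_j}$ via the indicator reproduces exactly $\sum_{l=1}^{j-1}\mathbf{1}_{(\mu_{jl}>\mu_{jk^*_j})}\tfrac{\mu_{jl}}{\mu^{(l)}}\mathbb{E}[R_T^{(l)}]$, closing the upper bound.

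\textbf{Main obstacle.} The delicate point is precisely this last charging argument. Agent $l$ vacates $k^{(l)}_*$ not only when it is blocked (and earns $0$) but also while it is exploring and collecting positive reward on another arm, so the vacating count is not literally $\mathbb{E}[R_T^{(l)}]/\mu^{(l)}$ and must be tied to $\mathbb{E}[R_T^{(l)}]$ with care. The resolution I would use is structural: under \name, once agent $l$'s active set freezes after $\widetilde{\tau}^{(l)}$ (Section~\ref{sec:appendix_s_tilde}) it has \emph{deleted all its dominated arms}, so after lock-in it can never match an arm better than $k^{(l)}_*$ and every vacating round carries nonnegative regret, legitimizing the charge; the finitely many pre-freeze rounds are absorbed into $\mathbb{E}[S_{\widetilde{\tau}^{(l)}}]$, which already appears inside $\mathbb{E}[R_T^{(l)}]$. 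Controlling this transient, and accounting for the fact that \name\ regret can occasionally be negative, is where essentially all the care lies; the remaining manipulations are bookkeeping.
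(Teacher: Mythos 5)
Your proposal follows essentially the same route as the paper's proof: both compare the deviation reward and the on-profile reward against the common baseline $T\mu_{jk_*^{(j)}}$, observe that a unilateral deviation by agent $j$ cannot alter the play of agents $1,\dots,j-1$, and charge any per-round reward exceeding $\mu_{jk_*^{(j)}}$ (necessarily obtained from a stolen dominated arm $k_*^{(l)}$ with $\mu_{jl}>\mu_{jk_*^{(j)}}$) to agent $l$'s regret via the conversion factor $\mu_{jl}/\mu^{(l)}$, before subtracting the on-profile bound $\mathbb{E}[\text{Rew}_T^{(j)}(s)]\geq T\mu_{jk_*^{(j)}}-\mathbb{E}[R_T^{(j)}]$. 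If anything, you are more explicit than the paper at the one delicate step: the paper simply asserts that the number of rounds agent $l$ spends off its stable arm is at most $\mathbb{E}[R_T^{(l)}]/\mu^{(l)}$, whereas you flag (and sketch a patch for) the fact that vacating rounds in which agent $l$ collects positive reward elsewhere make this charge non-trivial.
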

This proposition gives that for agent ranked $j$, $\varepsilon_j = O \left( j \frac{\mu^{(j)}_{\text{max}}}{\mu^{(j)}_{\text{min}}} \mathbb{E}[R_T^{((j))}] \right) $, where $\mu^{(j)}_{\text{max}}$ ($\mu^{(j)}_{\text{min}}$) is the maximum (minimum) arm-mean for agent $j$. 
This establishes that \name\; is approximately incentive compatible, namely, even if an agent deviates from the \name\; algorithm, the possible improvement in reward is $O(\log(T))$.

{
\begin{proof}[Proof of Proposition~\ref{prop:incentive}]
We bound the equilibrium property of \name; as follows. 
Observe that agent ranked $j$ will only collide with agents ranked $1$ through $j-1$. 
Now, if all agents $1$ through to $j-1$ are all playing arms $k_*^{(1)},\cdots k_*^{(j-1)}$ respectively (their individual best arms), then the best arm (by definition) for agent $j$ to play will be arm $k_*^{(j)}$. 
On the other hand, when any agent $j' \leq j-1$ does not play arm $k_*^{(j')}$, the maximum expected reward collected by agent $j$ can be at-most $\max(\mu_{j k_*^{(j')}},\mu_{jk_*^{(j)}} )$. 
Under the \name; strategy profile, the expected number of times any agent $j \in [N]$, plays an arm in the set $[K]\setminus \{k_*^{(1)},\cdots,k_*^{(j)}\}$ is at-most $\frac{1}{\mu^{(j)}}\mathbb{E}[R_T^{(j)}]$, where $\mu^{(j)}:= \min_{k \in [K] } \mu_{jk}$ is the smallest arm-gap. Notice that for all agents $j$, $\mu^{(j)} > 0$, by model assumptions.
This then gives us the following decomposition
\begin{align}
    \sup_{s'} \mathbb{E}[\text{Rew}^{(j)}_T(s_{-j},s')] \leq  \sum_{l=1}^{j-1}\mathbf{1}_{\mu_{jl > \mu_{jk_*^{(j)}}}}\frac{\mu_{jl}}{\mu^{(l)}}\mathbb{E}[R_T^{(l)}] + \mu_{jk_*^{(j)}}T,
    \label{eqn:ub1}
\end{align}
where $\mathbb{E}[R_T^{(l)}]$ is given in Theorem \ref{thm:main_thm_algo}.
Similarly, from the definition of regret, we have
\begin{align}
    \mathbb{E}[\text{Rew}_T^{(j)}(s)] \geq \mu_{jk_*^{(j)}}T -  \mathbb{E}[R_T^{(j)}],
    \label{eqn:lb1}
\end{align}
where $\mathbb{E}[R_T^{(j)}]$ is given in Theorem \ref{thm:main_thm_algo}. Thus, from Equations (\ref{eqn:ub1}) and (\ref{eqn:lb1}), we get that
\begin{align*}
   \sup_{s'} \left( \mathbb{E}[\text{Rew}^{(j)}_T(s_{-j},s')]   -  \mathbb{E}[\text{Rew}_T^{(j)}(s)] \right) \leq \sum_{l=1}^{j-1}\mathbf{1}_{\mu_{jl > \mu_{jk_*^{(j)}}}}\frac{\mu_{jl}}{\mu^{(l)}}\mathbb{E}[R_T^{(l)}] + \mathbb{E}[R_T^{(j)}].
\end{align*}

\end{proof}

}
\section{Proof of Regret Lower Bound}
\label{appendix-lower-bound}

We will use the following notations throughout the proof of the lower bound.
\begin{enumerate}
\item Can assume without loss of generality (W.l.o.g.) that the rank of any agent $i \in [N]$ is $i$.

\item Any agent related symbol is a superscript. Arm related is a sub-script. Thus, for any time $t$, the number of times arm $k \in [K]$ is played by agent $j$ is $N^{(j)}_k(t)$. The number of time the agent $j$ is blocked up to time $t$ is given as $C^{(j)}(t)$. 

\item Distribution of agent $i \in [N]$ and arm $k \in [K]$ is given by $\nu_{jk}$, which has mean $\mu_{jk}$. W.l.o.g. let us assume $\max_k \mu_{jk} > 0$.

\item The stable match partner of any agent $j \in [N]$ is given by $k_*^{(j)} \in [K]$. 
The set of dominated arms for the agent $j$ is given as $\mathcal{D}^{(j)}_* = \{k_*^{(j)}: 1\leq j'\leq j-1\}$, the set of non-dominated arms is given as $\mathcal{A}^{(j)}_* = [K]\setminus \mathcal{D}^{(j)}_*$.
\item For any agent $j \in [N]$, arm $k \in [K]$, $\Delta^{(j)}_k := \mu_{j k_*^{(i)}} - \mu_{jk}$, the arm-gap. This can be negative. 
\end{enumerate}

\subsection{Divergence Decomposition}
We need to setup a few notations for the proof of divergence decomposition lemma. The proof generalizes the framework in Chapter 15 of \cite{book1}  for the multi-agent framework.\footnote{See, \cite{multi_revisited} for a related approach for regret lower bound proof in the colliding bandit models~\cite{cognitive_bandit}.}
\\

\textbf{Canonical multi-agent bandit model:}  We now define the  ($N$-agent, $K$-arm, T-horizon) bandit models. The canonical bandit model  ($N$-agent, $K$-arm, T-horizon)  lies in a measurable space $\{\Omega, \mathcal{F}\}$.  Let $K^{(j)}(t)$ denotes the arm chosen by the $j$-th agent on time $t$, and $X^{(j)}(t)$ denotes  the rejection or reward obtained from that arm for agent $j$ in round $t$. We denote the rejection by the symbol $\emptyset$. Therefore, $K^{(j)}(t) \in [K]$, and $X^{(j)}(t) \in [0,1] \cup \{\emptyset\}$ for all $j \in [N]$ and $t \in [T]$.  Also, $K^{(j)}(t)$, and $X^{(j)}(t)$ for all $j \in [N]$ and $t \in [T]$ are measurable with respect to $\mathcal{F}$. Let $H(t)= (K^{(j)}(t'), X^{(j)}(t') \forall j \in [N], \forall t'\leq t)$ be the random variable representing the history of actions taken and rewards seen up to and including round $t$. We have $H(t) \in \mathcal{H}(t)\equiv\left([K]^N\times \left ([0,1] \cup \{\emptyset\}\right)^N\right)^{t}$. We may set $ \Omega \equiv\mathcal{H}(T)$ and the sigma algebra generated by the history as $\mathcal{F} \equiv\sigma(H(T))$.
\\

\textbf{Environment:} The bandit environment is specified by $\nu = (\nu_{jk} , \forall j \in [N], k \in [K])$ where $\nu_{jk}$ is the distribution of rewards obtained when arm $k$ is matched to agent $j$ in this environment. 
\\

\textbf{Policy:} A policy is a sequence of distribution of possible request to the arms from the agents (which can assimilate any coordination among the agents) conditioned on the past events. More formally, the policy $\boldsymbol{\pi} = \{\boldsymbol{\pi}_t(\cdot): t\in [T]\}$ where $\boldsymbol{\pi}_t(\cdot)\equiv\{\pi_t(k,j|\cdot): \forall k \in [K], j \in [N]\}$ is the function that maps the history upto time $t-1$ to the action $K^{(j)}(t), \forall j\in [N]$. Further,  $\pi_t(k,j|\cdot): \mathcal{H}(t-1) \to [0,1]$ denotes the probability, as a  function of history $H(t-1)$ of agent $j$ playing arm $k$. 
\\

\textbf{Probability Measure:} Each environment $\boldsymbol{\nu}$ and policy $\pi$ jointly induces a probability distribution over the measurable space $\{\Omega, \mathcal{F}\}$ denoted by $\mathbb{P}_{\nu, \pi}$.  Let $\mathbb{E}_{\nu, \pi}$ denote the expectation induced. The density of a particular history up to time $T$, under an environment $\boldsymbol{\nu}$ and a policy $\pi$, can be defined as 
\begin{align*}
&d\mathbb{P}_{\nu, \pi}\left(\mathbf{k}(t), \mathbf{x}(t): t \in [T]\right) \\
&= \prod_{t=1}^{T}\pi_t(\mathbf{k}(t)| h(t-1)) p_{\nu}(\mathbf{x}(t)|\mathbf{k}(t)) d\lambda(\mathbf{x}(t); \boldsymbol{\nu}) d\rho(\mathbf{k}(t)).  
\end{align*}
Here, $\lambda(\mathbf{x}; \boldsymbol{\nu})=\prod_{j=1}^{N}\lambda_j(x^{(j)})$ is the dominating measure over the rewards with $\lambda_j(x^{(j)}) = \delta_{\emptyset} + \sum_k \nu_{jk}$.\footnotemark Also,  $\rho(\mathbf{k})$ is the counting measure on the collective action of the agents. 
\\
 
\footnotetext{Here $\delta_{\emptyset}$ is the dirac measure on $\emptyset$ denoting the rejection event. For multiple a pair environments we can define a dominating measure as 
$\lambda(\mathbf{x}; \boldsymbol{\nu}_1,\boldsymbol{\nu}_2) = \sum_{i=1,2}\lambda(\mathbf{x}; \boldsymbol{\nu}_i)$. This is used in the proof of Lemma~\ref{lemm:divergence}.}

\begin{lemma}[Divergence Decomposition]\label{lemm:divergence}
For two bandit instances $\boldsymbol{\nu} = \{\nu_{jk} : j\in [N], k\in [K]\}$, and $\boldsymbol{\nu}' = \{\nu_{jk} : j\in [N], k\in [K]\}$, and any admissible policy $\pi$ the following divergence decomposition is true 
$$
D(\mathbb{P}_{\nu, \pi}, \mathbb{P}_{\nu', \pi}) = \sum_{j = 1}^{N}\sum_{k = 1}^{K} \mathbb{E}_{\nu, \pi}[N^{(j)}_k(T)] D(\nu_{jk}, \nu'_{jk}).
$$
\end{lemma}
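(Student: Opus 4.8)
The plan is to compute the relative entropy directly from the explicit product representation of the path-space density given above, exploiting that the policy $\pi$ and the matching/blocking structure are shared by the two environments $\boldsymbol{\nu}$ and $\boldsymbol{\nu}'$, so that only the reward kernels $\nu_{jk}$ versus $\nu'_{jk}$ survive in the likelihood ratio. First I would assume each $\nu_{jk}$ is absolutely continuous with respect to $\nu'_{jk}$ so that every divergence is finite (the general case reduces to this by the usual $0\cdot\infty$ conventions). Writing $D(\mathbb{P}_{\nu,\pi},\mathbb{P}_{\nu',\pi}) = \mathbb{E}_{\nu,\pi}\left[\log \frac{d\mathbb{P}_{\nu,\pi}}{d\mathbb{P}_{\nu',\pi}}\right]$ and substituting the product density, the factors $\pi_t(\mathbf{k}(t)\mid h(t-1))$ and $d\rho(\mathbf{k}(t))$ are identical under both environments and cancel in the log-ratio, leaving, after writing $p_\nu = \prod_j p_\nu^{(j)}$ with $p_\nu^{(j)}$ the density of agent $j$'s reward with respect to $\lambda_j$,
\[
\log \frac{d\mathbb{P}_{\nu,\pi}}{d\mathbb{P}_{\nu',\pi}} = \sum_{t=1}^{T} \log \frac{p_{\nu}(\mathbf{x}(t)\mid \mathbf{k}(t))}{p_{\nu'}(\mathbf{x}(t)\mid \mathbf{k}(t))} = \sum_{t=1}^{T}\sum_{j=1}^{N} \log \frac{p_{\nu}^{(j)}(X^{(j)}(t)\mid \mathbf{k}(t))}{p_{\nu'}^{(j)}(X^{(j)}(t)\mid \mathbf{k}(t))},
\]
where the second equality uses that $\lambda(\cdot;\boldsymbol{\nu})$, and hence $p_\nu$, factorizes across agents.

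Next I would take $\mathbb{E}_{\nu,\pi}[\cdot]$ and evaluate each summand by the tower property, conditioning on the history $H(t-1)$ and then on the joint action $\mathbf{k}(t)$. Given $\mathbf{k}(t)$, the term for agent $j$ is a conditional relative entropy of the reward law of $X^{(j)}(t)$ under $\boldsymbol{\nu}$ against $\boldsymbol{\nu}'$, and this conditional law depends on the past only through $\mathbf{k}(t)$. The hard part will be the careful treatment of the rejection symbol $\emptyset$: whether agent $j$ is matched or blocked when it plays arm $k$ is a deterministic function of $\mathbf{k}(t)$ (the highest-ranked proposer wins), and this function is identical in both environments. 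Hence on blocked rounds $X^{(j)}(t) = \emptyset$ almost surely under both measures, the $\delta_\emptyset$ component of $\lambda_j$ carries the same mass, and the conditional log-ratio vanishes; only on rounds where agent $j$ is genuinely matched to some arm $k$ does the stochastic reward enter, and there the conditional KL equals exactly $D(\nu_{jk},\nu'_{jk})$.

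Finally I would collect terms. For fixed $j$, summing the indicator that agent $j$ is matched to arm $k$ at time $t$ over $t \in [T]$ gives $N^{(j)}_k(T)$ (the match count as defined in the problem setting, so that blocked plays correctly drop out), and the deterministic scalar $D(\nu_{jk},\nu'_{jk})$ factors out of the expectation, yielding
\[
D(\mathbb{P}_{\nu,\pi},\mathbb{P}_{\nu',\pi}) = \sum_{j=1}^{N}\sum_{k=1}^{K} \mathbb{E}_{\nu,\pi}[N^{(j)}_k(T)]\, D(\nu_{jk},\nu'_{jk}),
\]
as claimed. The one genuinely delicate point I would be most careful to justify is the isolation of blocked rounds as non-contributing, i.e. that the blocking mechanism is measurable with respect to $\mathbf{k}(t)$ alone and is environment-independent, together with verifying that each per-step conditional KL is well defined under the absolute-continuity assumption. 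The telescoping and the remaining measure-theoretic bookkeeping otherwise follow the single-agent template of Chapter 15 of \cite{book1}.
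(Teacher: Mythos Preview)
Your proposal is correct and follows essentially the same route as the paper: expand the log-likelihood ratio using the product density, cancel the policy-dependent factors, observe that blocked rounds contribute zero because the matching outcome is a deterministic function of $\mathbf{k}(t)$ and hence environment-independent, and then collect the surviving per-step conditional KL terms into $\mathbb{E}_{\nu,\pi}[N^{(j)}_k(T)]\,D(\nu_{jk},\nu'_{jk})$. The paper's proof is slightly terser (it does not spell out the tower-property conditioning or the absolute-continuity hypothesis), but the substance, including the identification of the blocked-round cancellation as the one nontrivial step beyond the single-agent template in \cite{book1}, is identical.
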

\begin{proof}
The divergence between two measures, which correspond to two different environments under a policy $\pi$, $\mathbb{P}_{\nu, \pi}$ and $\mathbb{P}_{\nu', \pi}$ can be expressed as 
\begin{align*}
&D(\mathbb{P}_{\nu, \pi}, \mathbb{P}_{\nu', \pi})\\
&= \mathbb{E}_{\nu, \pi}\left[\sum_{t=1}^{T}\log\left(\frac{d \mathbb{P}_{\nu, \pi}}{d \mathbb{P}_{\nu', \pi}}\right)\right]\\
&\stackrel{(i)}{=} \mathbb{E}_{\nu, \pi}\left[\log\left(\frac{p_{\nu}(\mathbf{x}(t)|  \mathbf{k}(t))}{p_{\nu'}(\mathbf{x}(t)| \mathbf{k}(t))}\right)\right]\\
& \stackrel{(ii)}{=}\mathbb{E}_{\nu, \pi}\left[\sum_{t=1}^{T}\log\left(\frac{ \prod_{j: x^{(j)}(t) \neq \emptyset} p_{\nu}(x^{(j)}(t)| \mathbf{k}(t))}{ \prod_{j: x^{(j)}(t) \neq \emptyset} p_{\nu'}(x^{(j)}(t)|\mathbf{k}(t))}\right)\right]\\
&=\mathbb{E}_{\nu, \pi}\left[\sum_{t=1}^{T}  \sum_{j: x^{(j)}(t) \neq \emptyset} \mathbb{E}_{\nu} \left[\log\left(\frac{p_{\nu}(x^{(j)}(t)| \mathbf{k}(t))}{ p_{\nu'}(x^{(j)}(t)|\mathbf{k}(t))}\right)\bigg\lvert\mathbf{k}(t)\right]\right]\\
&\stackrel{(iii)}{=}\mathbb{E}_{\nu, \pi}\left[\sum_{t=1}^{T}  \sum_{j: x^{(j)}(t) \neq \emptyset} D\left(\nu_{jk^{(j)}(t)}, \nu'_{jk^{(j)}(t)}\right)\right]\\
&=\sum_{j=1}^{N}\sum_{k=1}^{K} \mathbb{E}_{\nu, \pi}
\left[\sum_{t=1}^{T} \mathbf{1}_{(k^{(j)}(t) = k, x^{(j)}(t) \neq \emptyset)}  D\left(\nu_{jk}, \nu'_{jk}\right) \right]\\
&\stackrel{(iv)}{=}\sum_{j=1}^{N}\sum_{k=1}^{K}  \mathbb{E}_{\nu, \pi}
\left[N^{(j)}_k(T)\right] D\left(\nu_{jk}, \nu'_{jk}\right).
\end{align*}
In the above series  equation (i) is true because the density of the policy cancels out for the two different environments.
Equation (ii) holds because if for some set of actions $\mathbf{k}(t)$ agent $j$ observes $x^{(j)}(t) =\emptyset$ that indicates agent $j$ is rejected on that round. This is independent of the environment. In particular, we have  $p_{\nu}(x^{(j)}(t)|\mathbf{k}(t)) = p_{\nu'}(x^{(j)}(t)|\mathbf{k}(t))$ if $x^{(j)}(t) =\emptyset$ for any $\mathbf{k}(t)$. In deriving equation (iii) we make use of the definition of divergence. Equation (iv) uses the definition of $N^{(j)}_k(t)$ the total number of times agent $j$ successfully plays arm $k$ up to time $T$. 
\end{proof}

\subsection{Proof of Regret Decomposition (Lemma~\ref{lemm:regret})}
\begin{proof}
We fix any agent  $j \in [N]$ for the rest of the proof. We have the expected regret for the agent $j$, under a policy $\pi$ and any bandit instance $\boldsymbol{\nu}$  as 
$$
R^{(j)}_T(\boldsymbol{\nu}, \pi) = \sum_{k=1}^{K}\Delta^{(j)}_{k}\mathbb{E}_{\nu, \pi}[N^{(j)}_k(T)] + 
\sum_{k=1}^{K}\mu_{jk_*^{(j)}}\mathbb{E}_{\nu, \pi}[C^{(j)}(T)].
$$
This is true as for each collision the agent $j$ obtains $\mu_{jk_*^{(j)}}$ regret ($0$ reward) in expectation, and for each successful play of  arm $k$ it obtains $\Delta^{(j)}_{k}$ regret. 
Therefore, a trivial regret lower bound is 
$$
R^{(j)}_T(\boldsymbol{\nu}, \pi) \geq \sum_{k=1}^{K}\Delta^{(j)}_{k}\mathbb{E}_{\nu, \pi}[N^{(j)}_k(T)].
$$

For an OSB instance, we know that the number of times the agents $1$ to $(j-1)$ plays arm $k_*^{(j)}$ successfully, the agent $j$ should either move to a sub-optimal arm (as the arm $k_*^{(j)}$ is the optimal arm for agent $j$ in an OSB instance) or it is blocked. In the best possible scenario, the agent $j$ successfully plays its second best arm, in each of these instances. This holds as $\Delta^{(j)}_{\min} \leq \mu_{jk_*^{(j)}}$ for non-negative rewards. Therefore, the regret from the events when agents $1$ to $(j-1)$ plays arm $k_*^{(j)}$ successfully, is lower bounded by 
$$
R^{(j)}_T(\boldsymbol{\nu}, \pi) \geq \sum_{j'=1}^{j-1}\Delta^{(j)}_{\min}\mathbb{E}_{\nu, \pi}[N^{(j')}_{k_*^{(j)}}(T)].
$$

Therefore, the combined regret lower bound is given as 
$$
R^{(j)}_T(\boldsymbol{\nu}, \pi) \geq \max\left\{\sum_{k=1}^{K}\Delta^{(j)}_{k}\mathbb{E}_{\nu, \pi}[N^{(j)}_k(T)], \sum_{j'=1}^{j-1}\Delta^{(j)}_{\min}\mathbb{E}_{\nu, \pi}[N^{(j')}_{k_*^{(j)}}(T)]\right\}
$$
\end{proof}

\subsection{Proof of Regret Lower Bound (Theorem~\ref{thm:lower_bound})}
\begin{proof}
We consider  any instance in the class of OSB $\boldsymbol{\nu}$, universally consistent policy $\pi$,   agent $j \in [N]$, and arm $k \in [K] \setminus \{k_*^{(j')}: 1\leq j'\leq j\}$. Let us consider the instance $\boldsymbol{\nu}'$ (which is specific to the $j$ and $k$ pair) where $\nu'_{j'k'} = \nu_{j'k'}$ for all $j'\neq j, k'\neq k$,  $\nu'_{jk}$ 
such that $D(\nu_{jk}, \nu'_{jk}) \leq D_{\inf}(\nu_{jk}, \mu_{jk_*^{(j)}},  \mathcal{P}) + \epsilon$ and $\mu'_{jk}\equiv\mu(\nu'_{jk}) > \mu_{jk_*^{(j)}}$ for some $\epsilon > 0$. Note, for  $\mu_{jk_*^{(j)}} < 1$ and $\Delta^{(j)}_k > 0$, which holds by assumption, the distribution $\nu'_{jk}$ exists by definition of $D_{\inf}(\cdot)$. In short, for the $j$-th agent we make the $k$-th arm optimal. The optimal arm for agent $j$ in the instance $\boldsymbol{\nu}'$ is the arm $k$.

For any event $A$ (and its complement $A^c$), due to Pinsker's inequality we have  
\begin{equation}\label{eq:pinskers}
D(\mathbb{P}_{\nu, \pi}, \mathbb{P}_{\nu', \pi}) \geq \log\left(\tfrac{1}{2(\mathbb{P}_{\nu, \pi}(A) + \mathbb{P}_{\nu, \pi}(A^c))}\right).    
\end{equation}

Let us now consider the event $A = \{N^{(j)}_k(T) \geq T/2\}$. Therefore, due to the regret decomposition lemma~\ref{lemm:regret}, we have the regrets:
\begin{enumerate}
    \item  In instance $\boldsymbol{\nu}$ as $R_{T}(\boldsymbol{\nu}, \pi) \geq R_{T}^{(j)}(\boldsymbol{\nu}, \pi) \geq \Delta^{(j)}_k \tfrac{T}{2} \mathbb{P}_{\nu, \pi}\left(\{N^{(j)}_k(T) \geq T/2\}\right).$
    \item In instance $\boldsymbol{\nu}'$ as $R_{T}(\boldsymbol{\nu}', \pi) \geq R_{T}^{(j)}(\boldsymbol{\nu}', \pi) \geq (\mu'_{jk} - \mu_{jk_j^*}) \tfrac{T}{2} \mathbb{P}_{\nu, \pi}\left(\{N^{(j)}_k(T) < T/2\}\right).$
\end{enumerate}
As the only change in reward distribution happens in agent $j$, arm $k$ pair, we have from Lemma~\ref{lemm:divergence}: 
$$
D(\mathbb{P}_{\nu, \pi}, \mathbb{P}_{\nu', \pi}) =D(\nu_{jk}, \nu'_{jk})  \mathbb{E}_{\nu, \pi}[N^{(j)}_k(T)]  \leq 
\left(\epsilon + D_{\inf}(\nu_{jk}, \mu_{jk_*^{(j)}}, \mathcal{P})\right)\mathbb{E}_{\nu, \pi}[N^{(j)}_k(T)] .
$$
The last inequality holds true by construction of $\nu'_{jk}$.

Substituting the above three relations in Equation~\eqref{eq:pinskers} we obtain for any $\epsilon > 0$.
\begin{align*}
    &\left(\epsilon + D_{\inf}(\nu_{jk}, \mu_{jk_*^{(j)}}, \mathcal{P})\right)\mathbb{E}_{\nu, \pi}[N^{(j)}_k(T)]
    \geq \log\left(\tfrac{1}{2(\mathbb{P}_{\nu, \pi}(A) + \mathbb{P}_{\nu, \pi}(A^c))}\right)
    \geq \log\left(\tfrac{T\min((\mu'_{jk} - \mu_{jk_*^{(j)}}), \Delta^{(j)}_k)}
    {4 (R_{T}^{(j)}(\boldsymbol{\nu}, \pi) + R_{T}^{(j)}(\boldsymbol{\nu}', \pi))}\right).
\end{align*}
Here, the final inequality hold as the policy $\pi$ is assumed to be universally consistent.
Therefore, taking the following holds after taking the 
\begin{align*}
    \lim\limits_{\epsilon \to 0}\liminf\limits_{T\to \infty} \frac{\mathbb{E}_{\nu, \pi}[N^{(j)}_k(T)]}{\log T} 
    \geq \lim\limits_{\epsilon \to 0} \frac{1}{\epsilon + D_{\inf}(\nu_{jk}, \mu_{jk_*^{(j)}}, \mathcal{P})} 
    = \frac{1}{D_{\inf}(\nu_{jk}, \mu_{jk_*^{(j)}}, \mathcal{P})}.
\end{align*}

As the above bound is true for any uniformly consistent policy $\pi$, and for agent $j$, and arm $k \in [K] \setminus \{k_*^{(j')}: 1\leq j'\leq j\}$. We use the regret decomposition lemma (Lemma\,\ref{lemm:regret}) to obtain the final aysmptotic regret lower bound for any agent $j\in [N]$ as  
$$
\liminf\limits_{T\to \infty} \frac{R_T^{(j)}(\boldsymbol{\nu})}{\log T} 
\geq \max\left\{\sum_{j'=1}^{j-1}  \frac{ \Delta^{(j)}_{\min}}{D_{\inf}(\nu_{j'k_*^{(j)}}, \mu_{j'k_*^{(j')}}, \mathcal{P})}, 
\sum_{k\notin \mathcal{A}_*^{(j)}\setminus k_*^{(j)}} \frac{\Delta^{(j)}_k}{D_{\inf}(\nu_{jk}, \mu_{jk_*^{(j)}}, \mathcal{P})}\right\}
$$
Here, we use the fact that $k_*^{(j)} \notin \mathcal{D}_*^{(j')} \cup \{k_*^{(j')}\}$ for all $j'<j$. Also note, $\sum_{j'=1}^{0} (\cdot) = 0$ and $\mathcal{D}_*^{(1)} = \emptyset$ for the highest ranked arm. 
\end{proof}

\subsection{Proof of Corollary~\ref{corr:simple}}
The above corollary follows readily from Theorem~\ref{thm:lower_bound}. Let for agent $j'$ from $1$ to $j-1$ the optimal arm be $j'$ with mean $1/2$ and all the other arms have mean $1/2 - \Delta$, where $\Delta > 0$ is small enough. Also, let the $j$-th agent have the arm means between $1/2$ for the $j$-th arm and $1/4$ for any other arm. For $\mathcal{P}$ the class of Bernoulli rewards, we have $D_{\inf}(\nu_{j'k_*^{(j)}}, \mu_{j'k_*^{(j')}}, \mathcal{P}) \leq \Delta^2/4$ for all $j'\leq j-1$, and $\Delta^{(j)}_{\min} = 1/4$. Therefore, the regret of the $j$-th agent is lower bounded as $\frac{(j-1)\log(T)}{16\Delta^2}$.

\section{Additional Simulations}
\label{appendix-simulations}
In this section, we compare our algorithm to both, the ETC based decentralized algorithm and the centralized UCB. The main conclusion is that, in both small and large systems, our algorithm outperforms the prior decentralized ETC algorithm \cite{matching_bandits-old} and is comparable to the centralized UCB algorithm of \cite{matching_bandits}.

\textbf{Simulation Setup}
We consider $4$ systems -- the first two systems are the OSB systems with $10$ agents and $10$ arms (Figure \ref{fig:10-10-system}), and $10$ agents, $15$ arms (Figure \ref{fig:10_15_system}). 
In both these systems, a random permutation $\sigma$ was first chosen and the arm-mean of arm $\sigma(i)$ for agent $i$ was set to $0.9$. All other arm-means was chosen randomly and uniformly in $[0,0.8]$. We then consider two non OSB systems with $5$ agents and $7$ arms (Figure \ref{fig:5_7_system}) and $10$ agents and $15$ arms (Figure \ref{fig:10_10_system_nonosb}). In these two systems, every agent $j$, uniformly spaces the arm-means between $0.1$ and $0.9$, with each agent having a random permutations over the arms to arrange the arm-means.
All plots are averaged over $30$ trials with confidence intervals of $95\%$. For brevity, Figures \ref{fig:10-10-system},\ref{fig:10_15_system} and \ref{fig:10_10_system_nonosb} are in the Appendix.


\textbf{Comparison with other Algorithms} -
In Figures \ref{fig:10_10_regret}, \ref{fig:10_15_regret}, \ref{fig:5_7_regret} and \ref{fig:10_10_regret_nonosb}, we plot the regret of all agents, for the three algorithms. We observe that \name \ outperforms ETC and is slightly poorer compared to the centralized UCB algorithm.
The centralized UCB has no collisions as a central arbiter matches agents and arms in the centralized UCB, and thus the regret is expected to be lower than any decentralized algorithm, which incurs some collisions. Figures \ref{fig:collision_plot_10_10}, \ref{fig:collision_plot_10_15}, \ref{fig:collision_plot_5_7}, \ref{fig:collision_plot_10_10_nonosb} highlight this, where we plot the regret incurred by all algorithms only on account of collisions.
The collisions incurred in our algorithm are lower compared to the decentralized ETC algorithm, thereby incurring lower regret compared to ETC. 
Although for a few high ranked agents, ETC has lower collisions (Fig. \ref{fig:collision_plot_10_15}), the overall regret of agents is lower with \name \ algorithm as opposed to ETC.
The deletion of dominant arms plays a key role, which enables our algorithm to have reduced collisions and thus lower regret.

\textbf{Equilibrium Freezing of \name \ } -
In Figures \ref{fig:10_10_heatmap}, \ref{fig:10_15_heatmap}, \ref{fig:5_7_heatmap} and \ref{fig:10_10_heatmap_nonosb}, we plot a `heatmap' of the arms recommended by the agents over the $13$ phases. 
The darker the shade, the higher the frequency (over the different simulation runs), that a particular agent recommended a particular arm in a particular phase.
We observe from Figure \ref{fig:10_15_heatmap} that  after a random phase, all agent always recommend their stable match partner arm.
Moreover, the time for an agent to settle into the `equilibrium' of always recommending their estimated stable match partner arm is larger for lower ranked agents. 
Nevertheless, Figure \ref{fig:10_15_heatmap} shows that after a random time, the agents delete their dominated arms
thereby ``freezing the system into an equilibrium".

\begin{figure}
\centering
\begin{subfigure}{\linewidth}
    \includegraphics[width=\linewidth]{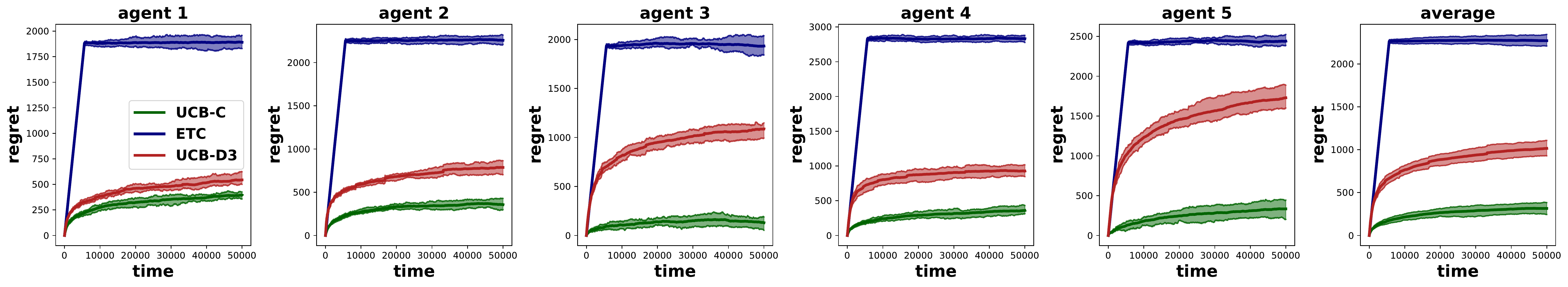}
\caption{Regret plots of all agents}
\label{fig:5_7_regret}
\end{subfigure}\\
\begin{subfigure}{\linewidth}
    \centering
    \includegraphics[width=\linewidth]{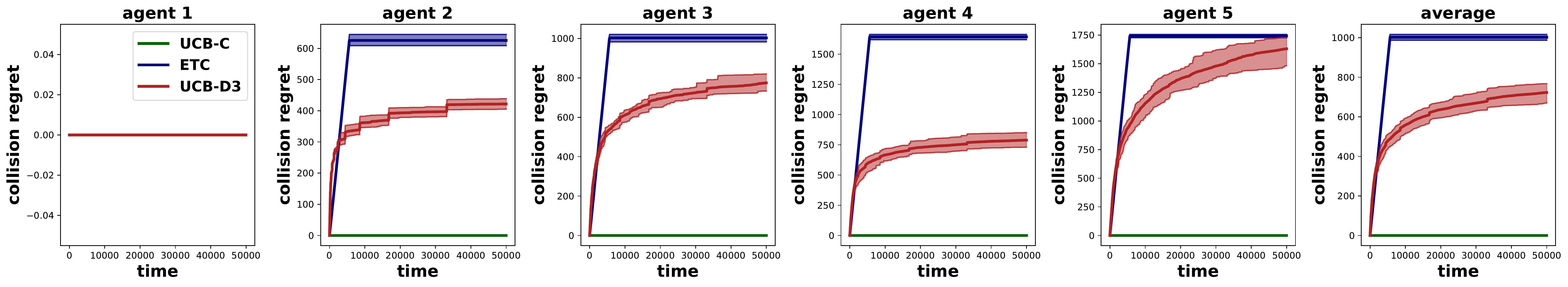}
    \caption{A plot showing the cumulative regret only due to collisions. The centralized UCB ensures that agents never collide and thus do not lose out on regret.}
    \label{fig:collision_plot_5_7}
\end{subfigure}\\
\begin{subfigure}{\linewidth}
\centering
  \includegraphics[width=\linewidth]{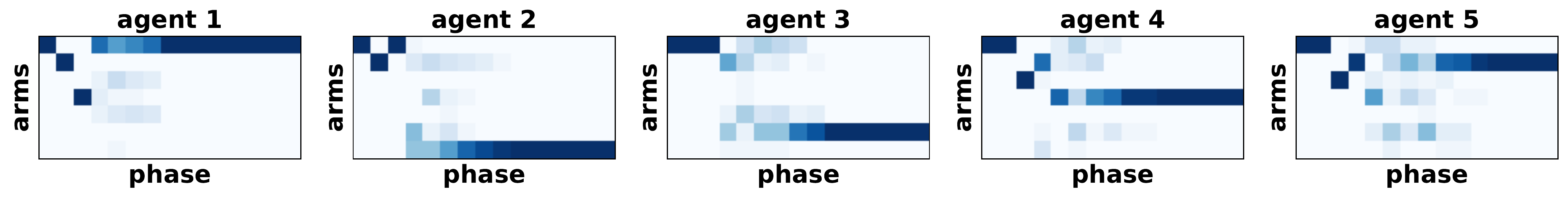}
\caption{Arms recommended by the agents across phases over different runs of the algorithm.}
\label{fig:5_7_heatmap}
\end{subfigure}
\caption{Simulations on a system with $5$ agents and $7$ arms. For each agent $i\in[5]$, a permutation over the arms $\sigma_i$ was chosen, and the arm-means are equally spaced among the $7$ arms from $0.1$ to $0.9$ in the increasing order of permutation. This is thus not a OSB instance. The rewards are binary. The value of $H=801$ was used for ETC.}
\label{fig:5_7_system}
\end{figure}

\begin{figure}
\centering
\begin{subfigure}{\linewidth}
\centering
    \includegraphics[width=\linewidth]{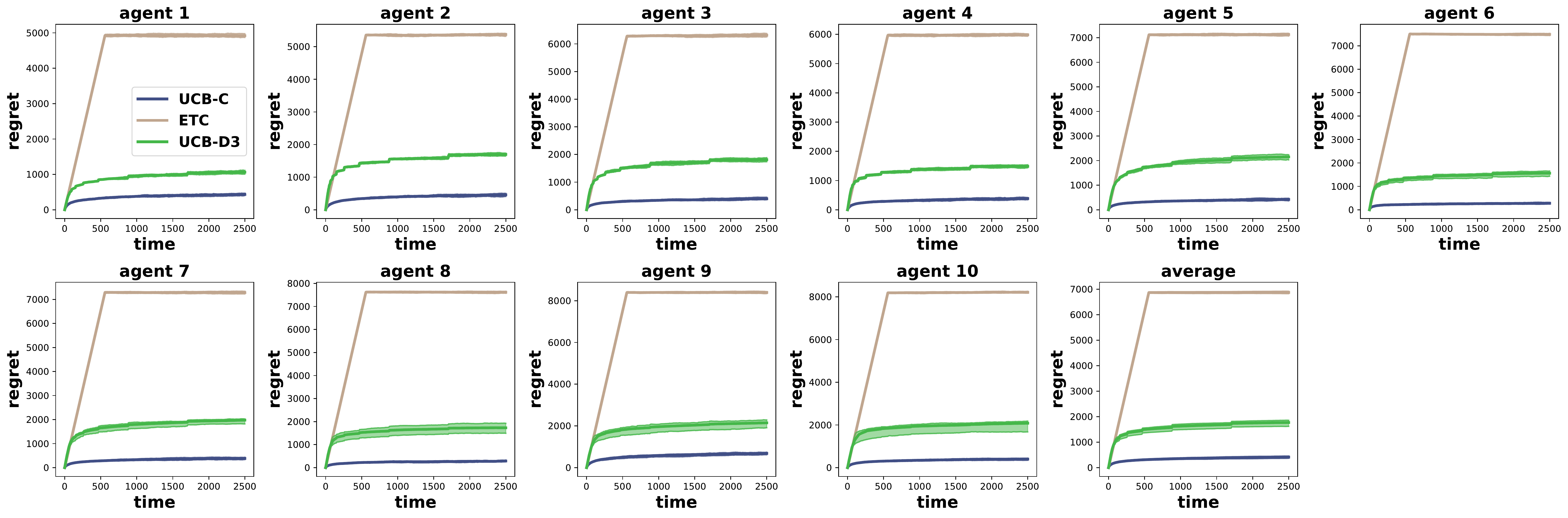}
\caption{Regret plots of all agents.}
\label{fig:10_10_regret}
\end{subfigure}\\
\begin{subfigure}{\linewidth}
  \centering
    \includegraphics[width=\linewidth]{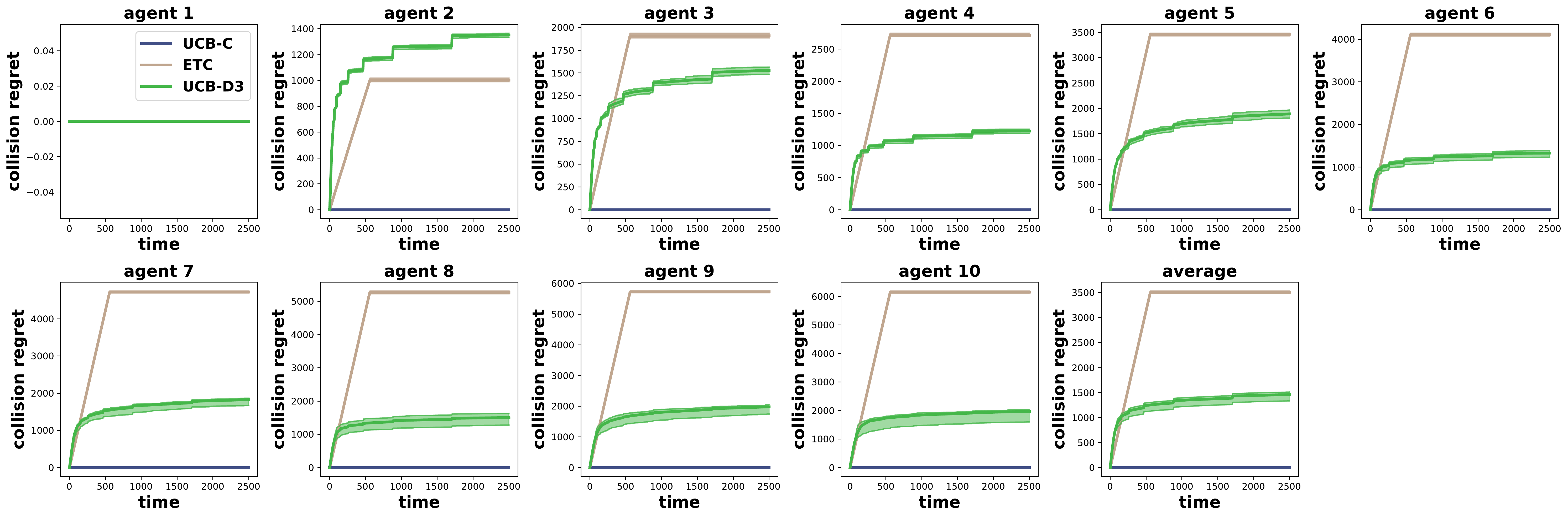}
    \caption{A plot showing the cumulative regret only due to collisions. The centralized UCB ensures that agents never collide and thus do not lose out on regret.}
  \label{fig:collision_plot_10_10}
\end{subfigure}\\
\begin{subfigure}{\linewidth}
\centering
  \includegraphics[width=\linewidth]{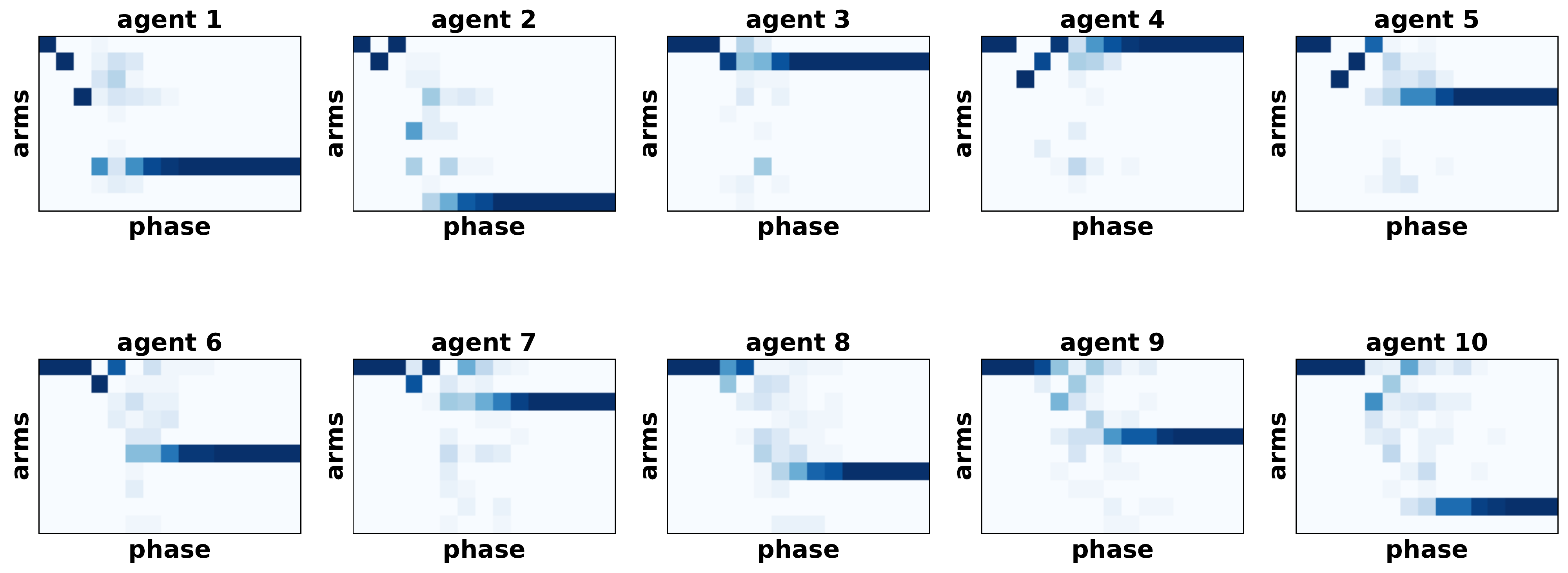}
\caption{Arms recommended by the agents across phases over different runs of the algorithm.}
\label{fig:10_10_heatmap}
\end{subfigure}
\caption{Simulations on a system with $10$ agents and $10$ arms. The arm-means for sub-optimal arms for each agent are chosen i.i.d. uniformly over $[0,0.8]$, while the arm-mean of agent $i \in [10]$ for arm $\sigma(i)$ (its optimal stable match arm) was set to $0.9$. The rewards are binary. Here, $\sigma(\cdot)$ denotes a permutation. This is thus a OSB instance. The value of $H = 1117$ used for ETC.}
\label{fig:10-10-system}
\end{figure}

\begin{figure}
\centering
\begin{subfigure}{\linewidth}
    \includegraphics[width=\linewidth]{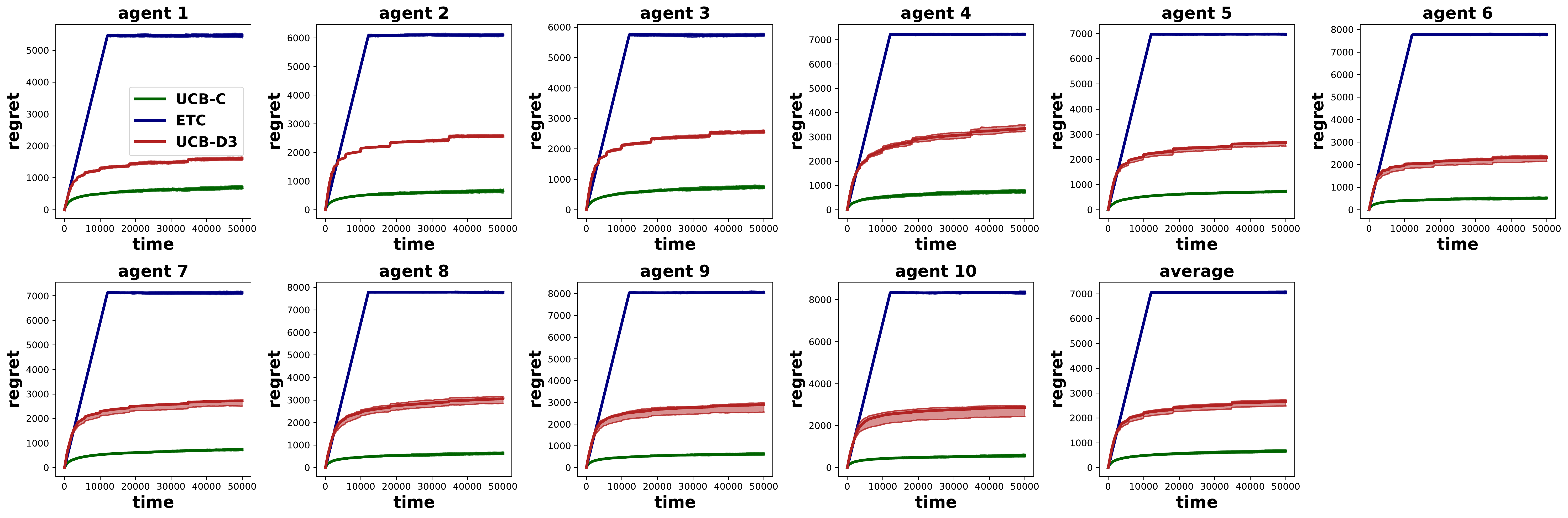}
\caption{Regret plots of all agents}
\label{fig:10_15_regret}
\end{subfigure}\\
\begin{subfigure}{\linewidth}
    \centering
    \includegraphics[width=\linewidth]{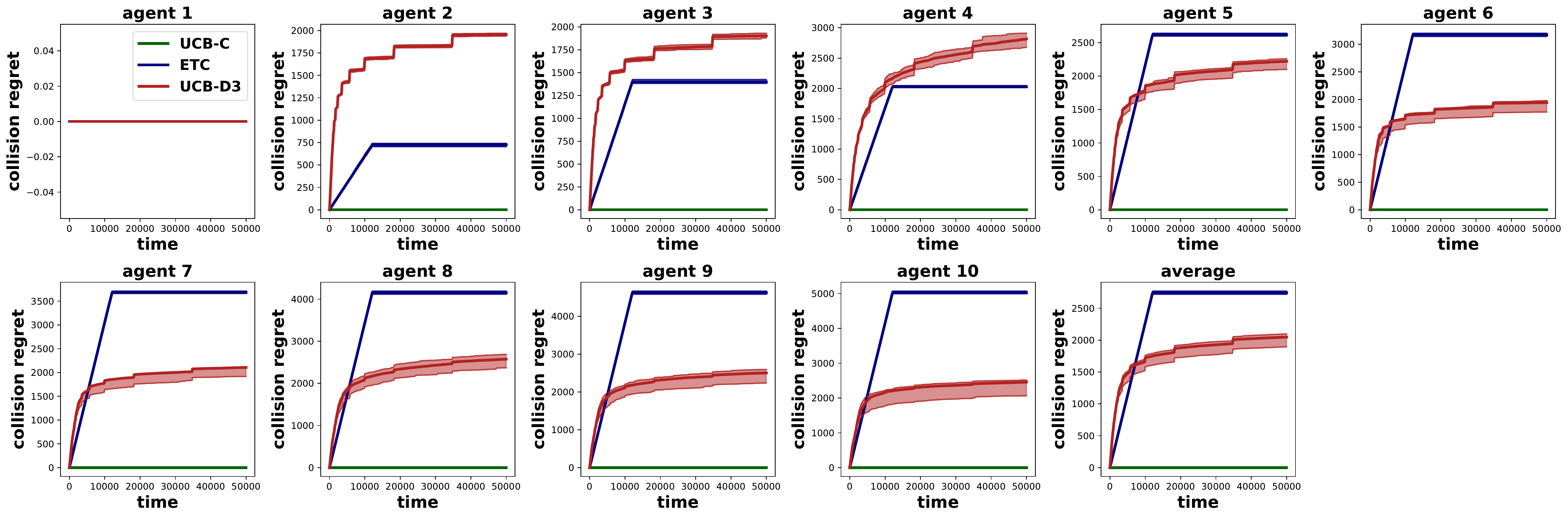}
    \caption{A plot showing the cumulative regret only due to collisions. The centralized UCB ensures that agents never collide and thus do not lose out on regret.}
    \label{fig:collision_plot_10_15}
\end{subfigure}\\
\begin{subfigure}{\linewidth}
\centering
   \includegraphics[width=\linewidth]{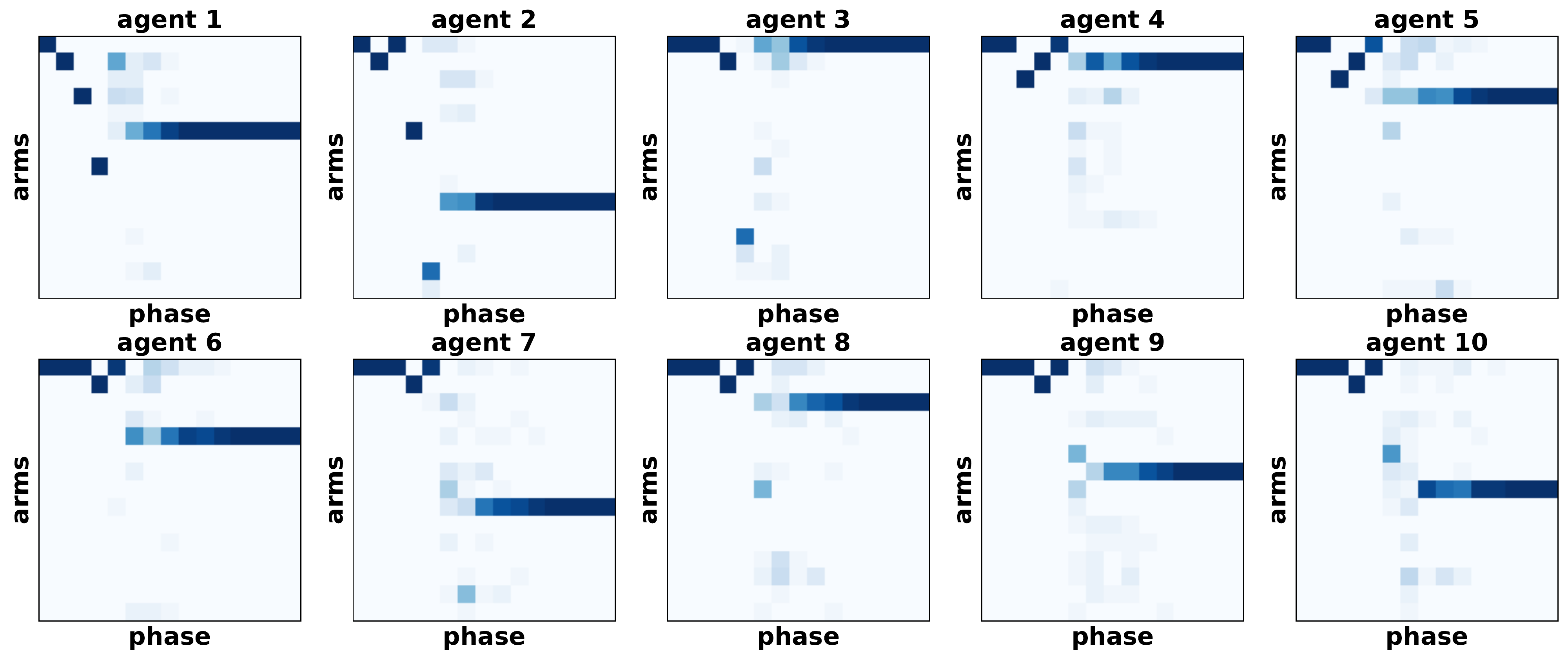}
\caption{Arms recommended by the agents across phases over different runs of the algorithm.}
\label{fig:10_15_heatmap}
\end{subfigure}
\caption{Simulations on a system with $10$ agents and $15$ arms. The arm-means for sub-optimal arms for each agent are chosen i.i.d. uniformly over $[0,0.8]$, while the arm-mean for agent $i \in [10]$ and arm $\sigma(i)$ (stable match partner arm) was set to $0.9$. The rewards are binary. Here, $\sigma(\cdot)$ denotes a permutation. This is thus a OSB instance. The value of $H=805$ was used for ETC.}
\label{fig:10_15_system}
\end{figure}


\begin{figure}
\centering
\begin{subfigure}{\linewidth}
    \includegraphics[width=\linewidth]{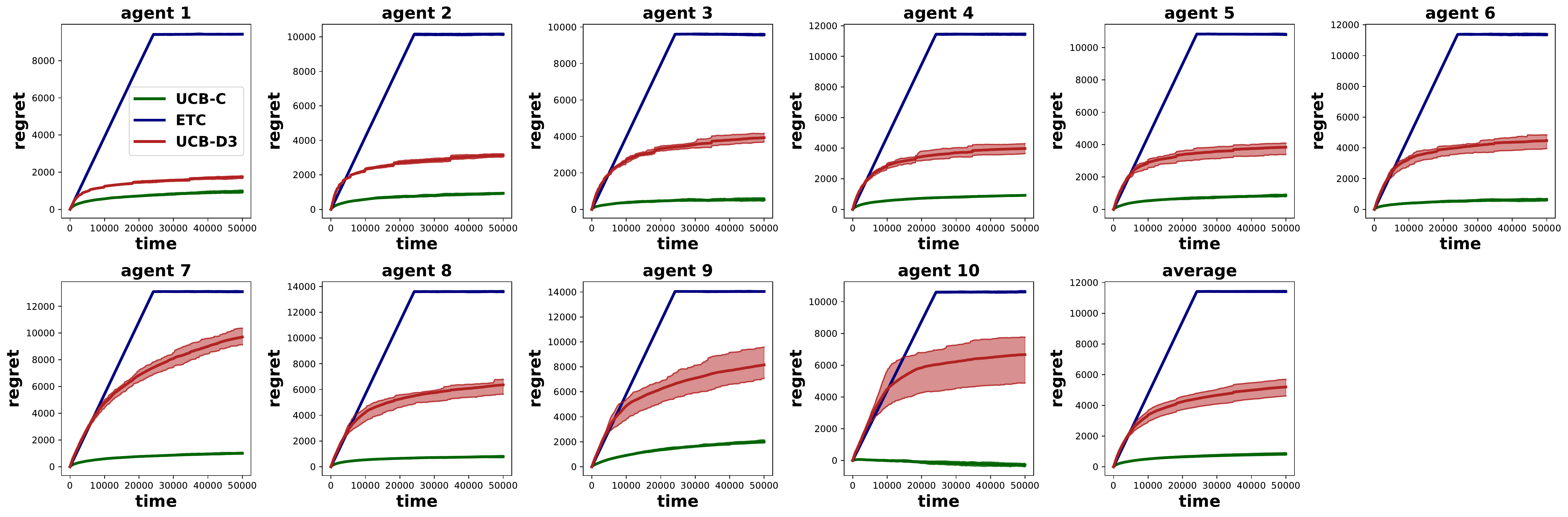}
\caption{Regret plots of all agents}
\label{fig:10_10_regret_nonosb}
\end{subfigure}\\
\begin{subfigure}{\linewidth}
    \centering
    \includegraphics[width=\linewidth]{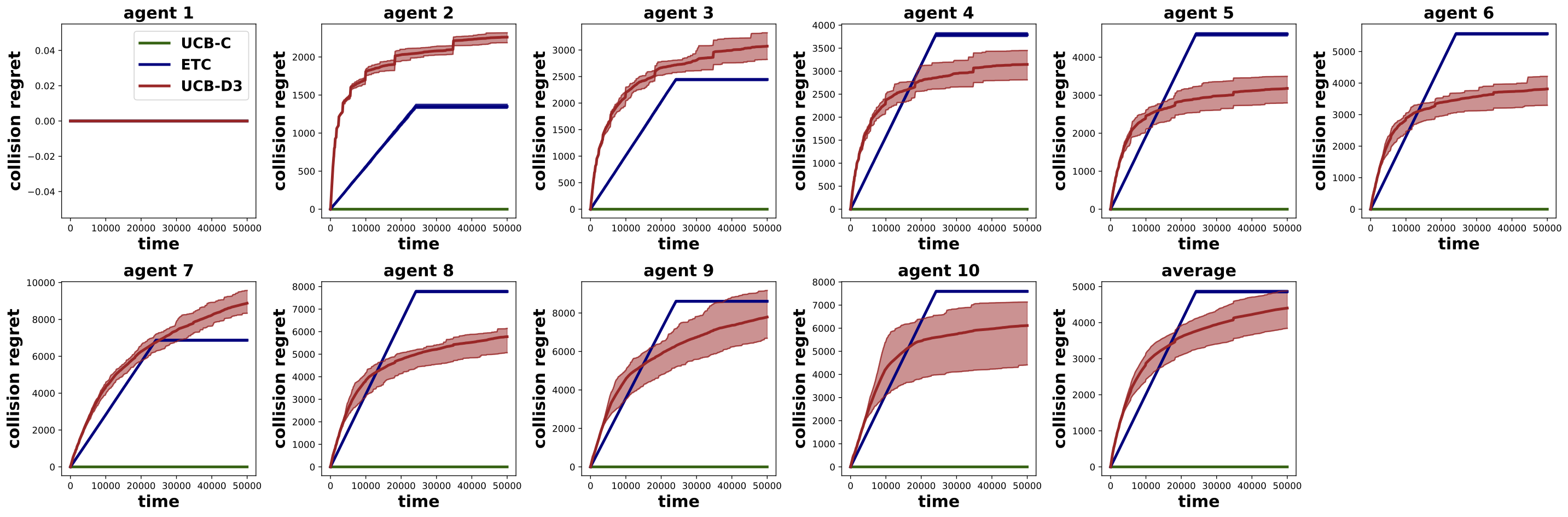}
    \caption{A plot showing the cumulative regret only due to collisions. The centralized UCB ensures that agents never collide and thus do not lose out on regret.}
    \label{fig:collision_plot_10_10_nonosb}
\end{subfigure}\\
\begin{subfigure}{\linewidth}
\centering
   \includegraphics[width=\linewidth]{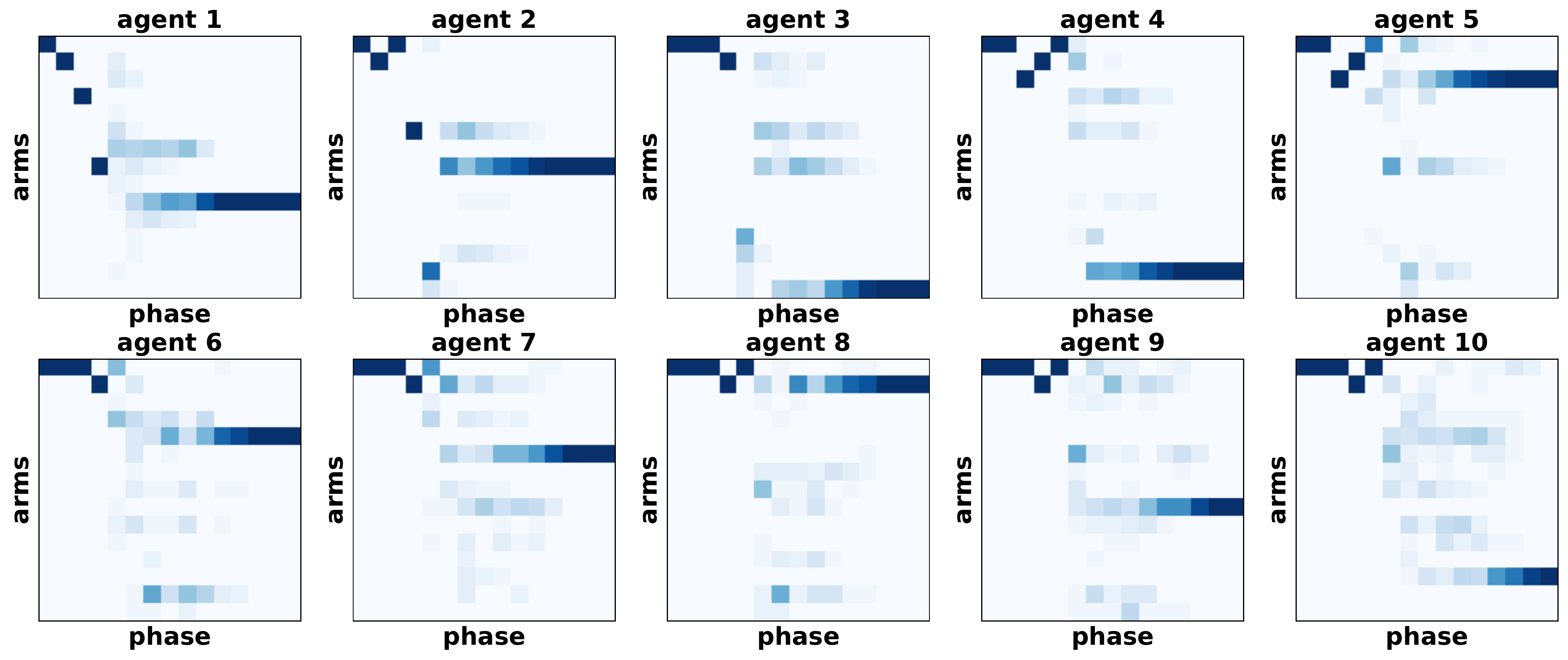}
\caption{Arms recommended by the agents across phases over different runs of the algorithm.}
\label{fig:10_10_heatmap_nonosb}
\end{subfigure}
\caption{Simulations on a system with $10$ agents and $15$ arms. For each agent $i\in[10]$, a permutation over the arms $\sigma_i$ was chosen, and the arm-means are equally spaced among the $7$ arms from $0.1$ to $0.9$ in the increasing order of permutation. This is thus not a OSB instance. The rewards are binary. The value of $H=1610$ was used for ETC.}
\label{fig:10_10_system_nonosb}
\end{figure}

\end{document}